\newcommand{\RETURN}[1]{\STATE \textbf{Return}~#1}
\DeclareMathOperator*{\argmin}{argmin}
\theoremstyle{plain}
\newtheorem{proposition}{Proposition}[section]
\theoremstyle{definition}
\theoremstyle{remark}
\newcommand{\ourmethod}{\texttt{ProbHardE2E}}
\newcommand{\probhardinf}{\texttt{ProbHardInf}}
\begin{document}

\title{
End-to-End Probabilistic Framework for \\Learning with Hard Constraints 
}

\author[a]{Utkarsh\thanks{Work completed during an internship at AWS.}}
\author[b]{Danielle C. Maddix\footnote{Correspondence to:
Danielle C. Maddix $<$dmmaddix@amazon.com$>$.}}
\author[c]{Ruijun Ma}
\author[c]{Michael W. Mahoney}
\author[b]{Yuyang Wang}
\affil[a]{MIT CSAIL  (Massachusetts Institute of Technology, Cambridge, MA)}
\affil[b]{AWS (2795 Augustine Dr, Santa Clara, CA 95054, US)}
\affil[c]{Amazon Supply Chain Optimization Technologies (7 West 34th St., NY, NY 10001, US)}
\affil[ ]{{\texttt{utkarsh5@mit.edu}, \  \{\texttt{dmmaddix, ruijunma, zmahmich, yuyawang}\}\texttt{@amazon.com}}}
\date{}
\maketitle

\begin{abstract}
We present \ourmethod{}, a probabilistic forecasting framework that incorporates hard operational/physical constraints, and provides uncertainty quantification. 
Our methodology uses a novel differentiable probabilistic projection layer (DPPL) that can be combined with a wide range of neural network architectures. 
DPPL allows the model to learn the system in an end-to-end manner, compared to other approaches where constraints are satisfied either through a post-processing step or at inference. 
\ourmethod{} optimizes
a strictly proper scoring rule, without making any distributional assumptions on the target, which enables it to obtain robust distributional estimates (in contrast to existing approaches that generally optimize likelihood-based objectives, which are heavily biased by their distributional assumptions and model choices); and it can incorporate a range of non-linear constraints (increasing the power of modeling and flexibility). 
We apply \ourmethod{} in learning partial differential equations with uncertainty estimates and to probabilistic time-series forecasting, showcasing it as a broadly applicable general framework that connects these seemingly disparate domains. 
\end{abstract}

\todo{Update citations}

\section{Introduction}
\label{sec:intro}

Recently, machine learning (ML) models have been applied to a variety of engineering and scientific tasks, including probabilistic time series forecasting \citep{rangapuramEndend2021,hyndmanOptimal2011,taiebCoherent2017,olivares2024clover} and scientific applications \citep{krishnapriyan2021characterizing,hansenLearning2023, negiarLearning2023, mouliUsing2024}. 
Exact enforcement of hard constraints can be essential in domains where any violation of operational or physical requirements (e.g., coherency in hierarchical forecasting, conservation laws in physics, and non-negativity in economics and robotics) is unacceptable \citep{gouldDeep2021, hansenLearning2023, donti2021dc3}. 
Limitations of data-driven ML approaches arise in various disciplines where constraints need to be satisfied exactly 
\citep{rangapuramEndend2021, hansenLearning2023}. 
Within ML, constraints are typically incorporated as soft penalties, e.g., with a regularization term added to the loss function \citep{raissi2019physics, PINO_2021}; but they are sometimes incorporated via post-training correction mechanisms, e.g., to enforce a hard constraint \citep{hansenLearning2023, mouliUsing2024, cheng2024}. 
Some methods have managed to enforce hard constraints ``end-to-end'' in a general framework as a differentiable solver \citep{negiarLearning2023, chalapathi2024scaling, rackauckas2020universal}, as a differentiable optimization layer \citep{amosOptnet2017, agrawalDifferentiable2019, min2024hard}, or as an auxiliary procedure \citep{donti2021dc3}. 

The aforementioned hard-constrained models typically provide point estimates without uncertainty quantification (UQ), limiting their use cases in operational and physical domains requiring probabilistic forecasts.
Generating output distribution statistics under hard constraints is often computationally expensive or yields only approximate solutions~\citep{robert1999monte,szechtman2001constrained, girolami2011riemann}. 
There have been domain-specific works in hierarchical probabilistic time series forecasting, which enforce coherency constraints using end-to-end deep learning models \citep{olivares2024clover, rangapuramEndend2021}.
However, these works either apply only to linear constraints, or they require a computationally expensive sampling procedure in training. 
Similar approaches have been proposed for computing probabilistic solutions to partial differential equations (PDEs) that satisfy constraints \citep{hansenLearning2023, mouliUsing2024, cheng2024, gao2023prediff, utkarsh2025physics}. 
In these works, however, the constraints are only applied as a post-processing step, and they do not lead to an end-to-end solution (of interest in the common situation that one wants to incorporate a hard-constrained model within a larger model, and then optimize the larger model) that optimizes the evaluation accuracy. 
In both the forecasting and PDE application domains, none of this prior UQ work can handle complex nonlinear (hard) constraints.

In this work, we propose a novel probabilistic framework, \ourmethod{}, that integrates a broad class of hard constraints (including non-linear constraints) in an end-to-end fashion, while incorporating UQ. 
By leveraging key results from statistics and optimization in a novel way, we predict both the mean and covariance of the output data, moving beyond point estimate predictions. 
\ourmethod{} enforces nonlinear constraints with an efficient sampling-free method to generate distribution statistics. Our probabilistic approach enables the effective handling of exogenous spikes and jumps (or other discontinuities) by leveraging data heteroscedasticity, enhancing the model’s robustness and flexibility under varying data~conditions.

We summarize our key contributions as follows. 

\begin{itemize}
[noitemsep,topsep=0pt] 
    \item 
    We introduce \ourmethod{}, as a general framework to learn a function in an end-to-end manner by optimizing an objective under hard constraints. The framework enables UQ by learning parameters of a multivariate probabilistic distribution. We show that \ourmethod{} can incorporate a broad class of deep learning backbone models.  
    \item 
    The key technical novelty of \ourmethod{} is a \emph{differentiable probabilistic projection layer} (DPPL) that extends standard projection methods to accommodate UQ while enforcing hard constraints. \ourmethod{} can handle constraints ranging from linear equality to general nonlinear equality to convex inequality constraints.
    \item We use the DPPL to impose constraints directly on the marginals of the multivariate distribution for an efficient sampling-free approach for posterior distribution estimation, which reduces the computational overhead by up to $ 3\text{--}5\times$ during training. 
  \item 
    We show that \ourmethod{} is effective in two (seemingly-unrelated, but technically-related) tasks, where hard constraints are important: probabilistic time series forecasting; and solving challenging PDEs in scientific machine learning (SciML). We provide an extensive empirical analysis demonstrating that \ourmethod{} 
    results in up to $15\times$ lower mean-squared error (MSE) in mean forecast and $2.5\times$ improved uncertainty estimates, measured by the Continuous Ranked Probability Score (CRPS), compared to the baseline methods.
     \item 
     We demonstrate the importance of using a strictly proper scoring rule for evaluating probabilistic predictions, e.g., the CRPS, rather than negative log-likelihood (NLL).
     While the need for this is well-known in, e.g., time series forecasting, previous PDE learning works commonly use NLL-based metrics for UQ.
\end{itemize}

\section{Related Work}
\label{sxn:related}
There is a large body of related work from various communities, ranging from imposing constraints on neural networks for point estimates \citep{min2024hard, donti2021dc3}, to probabilistic time series forecasting with constraints \citep{rangapuramEndend2021, pmlr-v206-rangapuram23a, olivares2024clover}, to imposing constraints on deep learning solutions to PDEs \citep{negiarLearning2023, hansenLearning2023}. 
\cref{table:compdiffmethods} 
in \cref{app:related_work} 
summarizes some advantages and disadvantages of these methods that are motivated by enforcing hard constraints in these domains. 
(See \cref{app:related_work} for additional details.) 

\section{\ourmethod: A Unified Probabilistic  Optimization Framework}
\label{sxn:our_main_method}

In this section, we introduce \ourmethod{}.
See \cref{alg:probend2end} for a summary. 
(See also \cref{subsec:uni_approx} for a universal approximation guarantee.)  
In \cref{subsec:obj}, we discuss the proper evaluation metric for a constrained probabilistic learner, and we define our objective function that corresponds to that evaluation metric.  
In \cref{sxn:probprojlayer}, we propose our differentiable probabilistic projection layer (DPPL) that enforces the hard constraints. 
In \cref{subsxn:postrlocscale}, we describe how to compute the parameters of the resulting constrained posterior distribution. 
In \cref{subseq:constraint_types}, we discuss update rules for various types of constraints (linear equality, nonlinear equality, and convex inequality constraints). 
In \cref{subsec:crps}, we propose a sample-free formulation for satisfying the constraints while optimizing for the objective. 

\begin{algorithm}[t] 
\caption{\ourmethod: Training and Inference} 
\label{alg:probend2end}
\begin{algorithmic}[1]
\REQUIRE Training data \(\{(\phi^{(i)}, u^{(i)})\} \sim \mathcal{D}\), test data $\phi$ and constraints \(g(\cdot) \leq 0\), \(h(\cdot) = 0\).
\ENSURE Learnable function \(\hat{f}_\theta: \Phi  \to \mathcal{Y}\) that outputs constrained distribution samples. 

\STATE Pick a model class $\Theta$, initialize weights \(\theta\in\Theta\) for probabilistic unconstrained model \(f_\theta : \Phi \rightarrow \mathcal{Z}\).

\WHILE{\(\theta\) not converged} 
 \STATE Predict unconstrained distribution parameters $(\mu_\theta(\phi^{(i)}), \Sigma_\theta(\phi^{(i)})).$  
 \STATE  \textbf{Training Mode}: Project parameters $ (\hat \mu_\theta(\phi^{(i)}), \hat \Sigma_\theta(\phi^{(i)})) = \text{DPPL}((\mu_\theta(\cdot), \Sigma_\theta(\cdot)), g(\cdot), h(\cdot))$. 
 \STATE Update \(\theta \in \bar{\Theta}\) by minimizing the CRPS loss $\ell(\mathbf{Y}_\theta({\phi}^{(i)}), u^{(i)})$.
\ENDWHILE

\STATE \textbf{Inference Mode}: Project sample $\mathbf{Y}_\theta(\phi) = \text{DPPL}(\mathbf{Z}_\theta(\phi), g(\cdot), h(\cdot))$, where  \( z_\theta(\phi) \sim \mathbf{Z}_\theta(\phi) \).  
  \RETURN Feasible predicted  sample $u^*(z_\theta(\phi)) \sim \mathbf{Y}_\theta(\phi)$.
\end{algorithmic}
\end{algorithm}

\subsection{Probabilistic Evaluation Metrics and Objective Function} 
\label{subsec:obj}

We formulate the problem of \emph{probabilistic learning under constraints}.  
The goal of this problem is to learn a function  
\(
\hat{f}_\theta: \Phi \rightarrow \mathcal{Y},
\)
where \(\Phi\) $\subset \mathbb{R}^m$ denotes the input space, and \(\mathcal{Y}\) $\subset \mathbb{R}^k$ 
denotes the space of predicted distribution parameters that meet the constraints. 
Given a multivariate distribution class, these learned parameters induce a predictive multivariate \(\mathbf{Y}_\theta(\phi^{(i)}) \in \mathbb{R}^n\), where \(({\phi}^{(i)}, u^{(i)}) \sim \mathcal{D}\) denotes training data from a distribution \(\mathcal{D}\). 
Each realization of \(\hat{u}(\phi^{(i)}) \sim \mathbf{Y}_\theta(\phi^{(i)})\) is required to satisfy predefined hard constraints of the form \(g(\hat{u}(\phi^{(i)})) \leq 0\) and \(h(\hat{u}(\phi^{(i)})) = 0\). 
We can formulate this constrained optimization problem as~follows:
\begin{equation}
\begin{aligned}
\argmin_{\substack{\theta\in {\Theta}, \hspace{0.1cm} g(\mathbf{Y}_\theta(\phi^{(i)})) \leq 0, \hspace{0.1cm} h(\mathbf{Y}_\theta(\phi^{(i)})) = 0}} 
\mathbb{E}_{({\phi}^{(i)}, u^{(i)}) \sim \mathcal{D}} \ \ell\big(\mathbf{Y}_\theta({\phi}^{(i)}), u^{(i)}\big),
\end{aligned}
\label{problem:learn_f_constrained}
\end{equation}


where \({\Theta}\) 
denotes the  parameter space, and  
\(\ell\) denotes a proper scoring rule.

One widely-used (strictly) proper scoring rule for continuous distributions is the continuous ranked probability score (CRPS) \citep{gneiting2007strictly}. 
The CRPS simultaneously evaluates sharpness (how concentrated or ``narrow'' the distribution is) and calibration (how well the distributional coverage ``aligns'' with actual observations). 
More formally, for an observed scalar outcome \(y\) and 
a corresponding probabilistic distributional estimate, $Y$, the CRPS is defined as:
\begin{equation}
\label{eq:crps}
\text{CRPS}(Y, y) = \tfrac{1}{2}\mathbb{E}_Y|Y - Y'| - \mathbb{E}_Y|Y - y|, 
\end{equation}
where \(Y'\) denotes an i.i.d. copy of \(Y\).  
Compared to other scoring rules, e.g., the log probability scoring rules, which require strong assumption on the outcome variable, the CRPS is robust to probabilistic model mis-specification. 
Because of this unique property, the CRPS is widely used as the evaluation metric in many applications, e.g., probabilistic time series forecasting~\citep{pmlr-v89-gasthaus19a, rangapuramEndend2021,pmlr-v151-park22a,olivares2024clover}, quantile regression~\citep{JMLR:v24:22-0799}, 
precipitation nowcasting~\citep{ravuri2021skilful,gao2023prediff} and weather forecasting~\citep{NeuralNetworksforPostprocessingEnsembleWeatherForecasts,kochkov2024neural,gencast2025}. 

We align our training objective with the proposed evaluation metric above, by directly optimizing the CRPS in \cref{eq:crps} in Problem~\ref{problem:learn_f_constrained}.  We  define the loss as the sum of the univariate CRPS: 
\begin{equation}
\label{eqn:loss_func}
\ell\big(\mathbf{Y}_\theta({\phi}^{(i)}), u^{(i)}\big)=\sum_{j=1}^n \text{CRPS}((\mathbf{Y}_\theta({\phi}^{(i)}))_j,u_j^{(i)}).
\end{equation}
The CRPS naturally aligns with the goal of producing feasible and well-calibrated predictions, as the metric rewards distributions that closely match observed outcomes. 
Enforcing our constraints in the distribution space guarantees that every sample from the predicted distribution is physically or operationally valid. 
Consequently, modeling the loss through the CRPS provides a principled way to reconcile domain constraints with distributional accuracy. 

\subsection{Differentiable Probabilistic Projection Layer (DPPL)}
\label{sxn:probprojlayer}

 We transform the constrained Problem~\ref{problem:learn_f_constrained} into the unconstrained optimization problem:
 \begin{equation}
 \underset{\theta\in \bar\Theta}{\argmin} \ \mathbb{E}_{({\phi}^{(i)}, u^{(i)}) \sim \mathcal{D}} \ \ell\big(\mathbf{Y}_\theta({\phi}^{(i)}), u^{(i)}\big),
 \label{problem:learn_f_unconstrained}
 \end{equation}
where  \( \bar{\Theta} \subseteq \Theta \) denotes the feasible parameter space that ensures constraint satisfaction, 
and \( \ell \) denotes the loss function in \cref{eqn:loss_func}. We solve this using a two-step procedure: first define a predictive output distribution, then project it onto the constraint manifold using a \emph{differentiable probabilistic projection layer (DPPL)} for end-to-end optimization.


Our framework begins with an established probabilistic backbone model. This can be a Gaussian Process \citep{rasmussen2003gaussian}, neural process \citep{kim2019attentive}, DeepVAR \citep{salinas2019high, rangapuramEndend2021}, or ensembles of neural networks or operators \citep{mouliUsing2024}. The base model \( f_\theta: \Phi \rightarrow \mathbb{R}^k \) predicts the distribution parameters (mean $\mu_\theta(\phi^{(i)})$ and covariance $\Sigma_\theta(\phi^{(i)})$, for $\theta \in \Theta$) -- without constraint awareness. We then use a reparameterization function \( r : \mathbb{R}^k \times \mathbb{R}^n \rightarrow \mathbb{R}^l \)  to define the distribution in one of two ways: either as an identity map, where $l=k$, that returns $f_\theta(\phi^{(i)}) = \big(\mu_\theta(\phi^{(i)}), \Sigma_\theta(\phi^{(i)})\big)$ 
for our efficient sample-free paradigm during training; 
 or as a map, where $l=n$, that combines the distribution parameters with noise $\xi \sim p(\xi)$, where $p$ denotes a tractable sampling distribution, and gives a sample \( z_\theta(\phi^{(i)}) \sim \mathbf{Z}_\theta(\phi^{(i)}) \) from the predicted distribution to generate constrained samples at inference. This dual-mode design  balances training efficiency with strict constraint feasibility at inference. 

The reparameterization function induces the base (unconstrained) distribution parameters or predictive random variable as:
\begin{equation}
r(f_\theta(\phi^{(i)}), \xi) = \begin{cases}
                                \big(\mu_\theta(\phi^{(i)}), \Sigma_\theta(\phi^{(i)})\big), \hfill\text{ (Training)} \\ 
                                \mathbf{Z}_{\theta}(\phi^{(i)}), \hfill\text{ (Inference)}  
                                \end{cases}
\label{eqn:prior}
\end{equation}
Following this Predictor Step above, we use the DPPL in the Corrector Step to restrict the parameter space to \( \bar{\Theta} \subseteq \Theta \), such that for all \( \hat u_\theta(\phi^{(i)}) \sim \mathbf{Y}_\theta(\phi^{(i)}) \), the constraints \( g(\hat u_\theta(\phi^{(i)})) \leq 0 \) and \( h(\hat u_\theta(\phi^{(i)})) = 0 \) are satisfied. The DPPL is our core architecture innovation for leveraging the base model to learn predictions that satisfy the given constraints. 
We define the projected distribution parameters or projected predictive random variable as: 
\begin{equation}
\label{eqn: dppl}
\text{DPPL}(r(f_\theta(\phi^{(i)}), \xi), g(\cdot), h(\cdot)) = r(\hat{f}_\theta(\phi^{(i)}), \xi) = 
 \begin{cases}
                                 \big(\hat \mu_\theta(\phi^{(i)}), \hat \Sigma_\theta(\phi^{(i)})  \big), 
                                 \hfill\text{ (Training)} \\
    \mathbf{Y}_{\theta}(\phi^{(i)}), 
    \hfill\text{ (Inference)}
\end{cases}
\end{equation}
for $r(f_\theta(\phi^{(i)}), \xi)$ in \cref{eqn:prior},
where $\hat f_{\theta}: \Phi \rightarrow \mathcal{Y} \subset \mathbb{R}^k$ denotes the probabilistic model that outputs the constrained distribution parameters $(\hat \mu_\theta(\phi^{(i)}), \hat \Sigma_\theta(\phi^{(i)}))$. 
Our DPPL yields a constraint-satisfying realization \(u^* \sim \mathbf{Y}_{\theta}(\phi^{(i)})\) 
as the final predictive random variable.



This two-step approach mirrors predictor-corrector methods \citep{boyd2004convex, bertsekas1997nonlinear}, with the DPPL serving as our key architectural innovation for ensuring constraint satisfaction. Equivalently, the DPPL can be formulated as a constrained least squares problem on the samples of \(\mathbf{Z_\theta}(\phi^{(i)})\). 
(See \cref{sxn:post_comptn} for details.)
Prior works on imposing hard constraints in time series and solving PDEs \citep{rangapuramEndend2021, hansenLearning2023} reduce to special cases of our method with linear constraints. 
(See \cref{app:special_cases} for details.) 
We draw \( z_\theta(\phi^{(i)}) \sim \mathbf{Z}_\theta(\phi^{(i)}) \), and we solve the following constrained optimization problem: 
\begin{align}
     u^*(z_\theta(\phi^{(i)})) :=  \argmin_{\substack{\hat u_\theta(\phi^{(i)}) \in \mathbb{R}^n,
     g(\hat u_\theta(\phi^{(i)})) \le 0,
     h(\hat u_\theta(\phi^{(i)})) = 0 }} \lVert \hat u_\theta(\phi^{(i)}) - z_\theta(\phi^{(i)}) \rVert^2_Q,
    \label{problem:learn_f_1_main}
\end{align}
where $u^*(z_\theta(\phi^{(i)}))$ denotes a predicted sample of $\mathbf{Y}_\theta(\phi^{(i)})$, and where \( \lVert x \rVert_Q = \sqrt{x^\top Q x} \) for some symmetric positive semi-definite matrix \( Q \). 
(See \cref{subseq:learn_q} for details on the flexibility of learning various forms of $Q$.) 

\subsection{DPPL on the Distribution Parameters for Location-Scale Distributions} 
\label{subsxn:postrlocscale}

In this subsection, we detail how to directly compute the parameters for the constrained distribution by applying our DPPL on the base distribution parameters for an efficient, sampling-free during training.  
To do so, we can assume that the prior distribution \( \mathcal{F} \) belongs to a multivariate, location-scale family, i.e., 
a distribution such that 
any affine transformation $\mathbf{Y}$ of a random variable $\mathbf{Z} = \mu + \Sigma^{1/2} \xi \sim \mathcal{F}(\mu, \Sigma)$ and $\xi \sim \mathcal{F}(0,1)$, 
remains within the same distribution family $\mathcal{F}$. This is an example of how to compute the random variable in  \cref{eqn:prior} for a multivariate location-scale distribution. 
A familiar case of this is when \( \mathbf{Z} \sim \mathcal{N}(\mu, \Sigma) \) and \( \mathbf{Y} = A\mathbf{Z} + B \) is an affine transformation; in which case \( \mathbf{Y} \sim \mathcal{N}(A\mu + B, A\Sigma A^\top) \). 
Alternatively, 
we can show that when $\mathbf{Y}$ is a nonlinear transformation of $\mathbf{Z}$, it has approximately (to first-order) the same distribution $\mathbf{Z}$, with an appropriately-chosen set of parameters (given in \cref{eqn:prob+proj_layer} below). 
We state this result more formally in \cref{thm: locscaldeltamethod}. 
The proof, given in \cref{app:deltamethod}, uses a first-order Taylor expansion to linearize the nonlinear function transformation, and is similar to the Multivariate Delta Method \citep{CaseBerg_deltamethod}.

\begin{restatable}{theorem}{locscaldeltamethod} 
\label{thm: locscaldeltamethod} 
Let $\mathbf{Z} \sim \mathcal{F}(\mu, \Sigma)$ be a random variable, where the underlying distribution $\mathcal{F}$ belongs to a multivariate location-scale family of distributions, with mean $\mu$ and covariance $\Sigma$; and let $\mathcal{T}$ be a function with continuous first derivatives, such that ${J_ \mathcal{T} (\mu)} \Sigma J_\mathcal{T}(\mu)^\top$ is symmetric positive semi-definite. 
Then, the transformed distribution $\mathbf{Y}=\mathcal{T}(\mathbf{Z})$ converges in distribution with first-order accuracy to $\mathcal{F}(\hat \mu, \hat \Sigma)$ with mean $\hat \mu = \mathcal{T}(\mu)$ and covariance $\hat \Sigma = J_{\mathcal{T}}(\mu)\Sigma J_{\mathcal{T}}(\mu)^\top$, where $J_{\mathcal{T}}(\mu) = {\nabla \mathcal{T} (\mu)}^\top$ denotes the Jacobian of $\mathcal{T}$ with respect to~$z$ evaluated at $\mu$.
\end{restatable}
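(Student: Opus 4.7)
The plan is to prove the result by mirroring the classical multivariate delta method, with the Gaussian step replaced by the closure of the location-scale family under affine maps. First I would use the defining property of the location-scale family $\mathcal{F}$ to write $\mathbf{Z} = \mu + \Sigma^{1/2}\xi$ with $\xi \sim \mathcal{F}(0, I)$. This isolates a ``standardized'' random variable and puts every nontrivial stochastic content of $\mathbf{Z}$ in a single source, which is convenient for propagating uncertainty through $\mathcal{T}$.

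Next I would perform a first-order Taylor expansion of $\mathcal{T}$ around $\mu$, justified by the continuity of the first derivatives:
\begin{equation*}
\mathcal{T}(\mathbf{Z}) = \mathcal{T}(\mu) + J_{\mathcal{T}}(\mu)(\mathbf{Z} - \mu) + R(\mathbf{Z}, \mu),
\end{equation*}
where $R(\mathbf{Z}, \mu) = o(\|\mathbf{Z} - \mu\|)$ as $\mathbf{Z} \to \mu$. Substituting $\mathbf{Z} - \mu = \Sigma^{1/2}\xi$ and dropping the higher-order remainder yields the linear surrogate $\mathbf{Y} \approx \mathcal{T}(\mu) + J_{\mathcal{T}}(\mu)\Sigma^{1/2}\xi$. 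This is the step that makes the claim a \emph{first-order} statement: the equality is exact modulo $R$, and the convergence is the usual sense of neglecting the $o(\|\mathbf{Z}-\mu\|)$ term. To make the ``converges in distribution with first-order accuracy'' claim rigorous, I would introduce the standard scaling device $\mathbf{Z}_\varepsilon = \mu + \varepsilon\Sigma^{1/2}\xi$ and examine $(\mathcal{T}(\mathbf{Z}_\varepsilon) - \mathcal{T}(\mu))/\varepsilon$, which converges in distribution to $J_{\mathcal{T}}(\mu)\Sigma^{1/2}\xi$ as $\varepsilon \to 0$ by the continuous mapping theorem applied to the Taylor residual bound; this is exactly the delta-method argument in \citep{CaseBerg_deltamethod} but stated for a generic location-scale base rather than a Gaussian limit.

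The final step is to identify the distribution of the linear surrogate. Here I would invoke the location-scale closure property: if $\xi \sim \mathcal{F}(0, I)$, then for any vector $a$ and matrix $B$, the affine transform $a + B\xi$ belongs to $\mathcal{F}$ with mean $a$ and covariance $BB^\top$. Applied with $a = \mathcal{T}(\mu)$ and $B = J_{\mathcal{T}}(\mu)\Sigma^{1/2}$ gives the stated $\hat{\mu} = \mathcal{T}(\mu)$ and $\hat{\Sigma} = J_{\mathcal{T}}(\mu)\Sigma J_{\mathcal{T}}(\mu)^\top$; the PSD hypothesis on $J_{\mathcal{T}}(\mu)\Sigma J_{\mathcal{T}}(\mu)^\top$ ensures that this is a legitimate covariance in the family (and in particular that $\Sigma^{1/2}$ composed with the Jacobian gives a well-defined scale factor, even when $J_{\mathcal{T}}(\mu)$ is rank-deficient).

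The main obstacle I anticipate is the convergence-in-distribution claim in the general location-scale setting. For the Gaussian case this is textbook, but for an arbitrary location-scale family one needs the family to be closed under affine maps (which the definition used in the excerpt gives) and a genuine notion of ``approximation to first order,'' which is cleanest via the $\varepsilon$-scaling argument above rather than a direct statement about $\mathbf{Z}$ itself. Once that convention is fixed, the rest is essentially bookkeeping on the Taylor expansion and the affine closure of $\mathcal{F}$.
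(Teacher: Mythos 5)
Your proposal is correct and follows essentially the same route as the paper's proof: a first-order Taylor expansion of $\mathcal{T}$ about $\mu$, followed by the affine-closure property of the location-scale family to identify the mean $\mathcal{T}(\mu)$ and covariance $J_{\mathcal{T}}(\mu)\Sigma J_{\mathcal{T}}(\mu)^\top$ of the linear surrogate. The only difference is cosmetic: you make the ``first-order accuracy'' claim precise via the $\varepsilon$-scaling device from the classical delta method, whereas the paper simply drops the Taylor remainder and computes the moments of the linearization directly.
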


Let \( \mathbf{Z} \sim \mathcal{F}(\mu, \Sigma) \) denote the prior distribution and $z \sim \mathbf{Z}$. 
We apply \cref{thm: locscaldeltamethod} with \( \mathcal{T}(z)  
=  u^*(z) \), where $u^*(z)$ denotes the solution of the constrained least squares problem in Problem \ref{problem:learn_f_1_main}. 
In this case, the projected random variable satisfies
\(
\mathbf{Y} \sim \mathcal{F}(\hat{\mu}, \hat{\Sigma})
\)
with updated parameters:
\begin{equation}
\hat{\mu} = \mathcal{T}(\mu), \qquad 
\hat{\Sigma} = J_{\mathcal{T}}(\mu)\,\Sigma\,J_{\mathcal{T}}(\mu)^\top.
\label{eqn:prob+proj_layer}
\end{equation}



\subsection{DPPL for Various Constraint Types} 
\label{subseq:constraint_types}

In this subsection, we discuss how to compute  the DPPL for various constraint types (linear equality, nonlinear equality, and convex inequality) for both train and inference modes. 
\cref{table:jacobianupdate} shows these constraints types require different treatments: linear equality have closed-form projections, nonlinear equality can be solved with iterative methods, and convex inequality require optimization solvers. 
  \begin{table}[h]
\centering
\caption{Summary of DPPL in \ourmethod{} for various constraint types. 
For linear equality constraints, the oblique projection $P_{Q^{-1}} =  I - Q^{-1}A^\top(AQ^{-1}A^\top)^{-1}A$; 
for nonlinear equality constraints, $R$ denotes the first-order optimality conditions.
}
\label{table:jacobianupdate} 
\begin{tabular}{ccccc}
\toprule
\textbf{Constraint Type}  &\textbf{Solution} $u^*(z)$  
&\textbf{Solver Type}  & \textbf{Jacobian} $J_{\mathcal{T}}$  \\ \hline
\textbf{Linear Equality} &
$P_{Q^{-1}} z + (I -P_{Q^{-1}})A^{\dag}b$ & closed-form &
\(P_{Q^{-1}} \) \\ \hline
\textbf{Nonlinear  Equality}  & $(u^*, \lambda^*)$ s.t. $R(u^*, \lambda^* ;z)=0$ & nonlinear &
implicit differentiation 
\\ \hline
\textbf{Convex Inequality} 
 & $\underset{h( \hat u)=0,\,g( \hat u)\le0}{\text{argmin}}\|\hat u-z\|_Q^2$ & convex opt.
&
\begin{tabular}{@{}c@{}}sensitivity analysis;  \\ argmin differentiation \end{tabular} 
\\
\bottomrule
\end{tabular}
\end{table}

\subsubsection{Linear Equality Constraints}

For linear equality constraints, \(h(\hat u) = A\hat u - b = 0\), where \(A \in \mathbb{R}^{q \times n}, q \leq n\), has full row rank $q$, we can derive a closed-form solution to the constrained least squares Problem~\ref{problem:learn_f_1_main}. In this case, both training and inference modes are equivalent since the DPPL projection is exact. 
(See \cref{subsec:linear_eq}.)


\subsubsection{Nonlinear Equality Constraints}

For nonlinear equality constraints, \(h(\hat u) = 0\), we can no longer derive the exact closed-form expression for the solution. 
Instead, we can provide an expression which is satisfied by the optimal solution. In particular, we approximate the parameter-level projection at training time. 
This can then be solved for the posterior mean $\hat \mu = u^*(\mu)$ in \cref{eqn:prob+proj_layer} with the nonlinear transformation $\mathcal{T}(\mu) = u^*(\mu)$ with iterative optimization methods, e.g., Newton's Method. 
(We can then compute the posterior covariance $\hat \Sigma$ in  \cref{eqn:prob+proj_layer} by estimating the Jacobian $J_{\mathcal{T}}(\mu)$ by differentiating the nonlinear equations $u^*(z) = z - Q^{-1} \nabla h(u^*(z))^\top \lambda\, ,\   h(u^*(z))=0$ with respect to $z$ via the implicit function theorem \citep{blondel2022efficient}, and evaluating it at $\mu$. 
(See \cref{subsec:nonlinear_eq_constraints}.) At inference, we project each sample exactly with our custom, batched optimization solver to ensure strict constraint feasibility.

\subsubsection{(Nonlinear) Convex Inequality Constraints}

For convex inequality constraints, $\hat u$ in Problem~\ref{problem:learn_f_1_main} is in a convex set, $\mathcal{C}\subset \mathbb{R}^n$. 
Closed-form expressions (such as those in previous subsections for linear and nonlinear equality constraints) do not exist \citep{boyd2004convex}. Instead, we rely on convex optimization solvers to ensure computational efficiency and scalability to compute the solution \(u^*\) in training. 
The gradients of the convex program can be calculated efficiently using sensitivity analysis \citep{bertsekas1997nonlinear, bonnans2013perturbation}, argmin differentiation \citep{sun2022alternating, agrawalDifferentiable2019, amosOptnet2017, gould2016differentiating}, and/or variational analysis \citep{rockafellar2009variational}. 
These techniques provide a means to compute the Jacobian \(J_{\mathcal{T}}(\mu)\), which represents the sensitivity of the optimal solution \(u^*\) to changes in the input vector \(\mu\), whose projection we are essentially computing to the convex constraints space. During inference, we solve the convex program per sample. (See \cref{subsec:convex_app}.) 

\subsection{Sample-free with Closed-form CRPS}
\label{subsec:crps}

We use a closed-form expression for the CRPS to enable a computationally efficient and sample-free approach for evaluating the CRPS in the loss function $\ell$ in \cref{eqn:loss_func}.  Calculating the CRPS for an arbitrary distribution requires generating samples \citep{rangapuramEndend2021, gneiting2007strictly}, but closed-form expressions for the CRPS exist for several location-scale distributions (Gaussian, logistic, student's t, beta, gamma, uniform).
Most notably, for the univariate Gaussian, the closed-form CRPS is given as:
\begin{math}
\label{crps_normal}
\text{CRPS}_{\mathcal{N}}(z) = \left[ z \cdot \left( 2 \mathrm{P}(z) - 1 \right) + 2 p(z) - \frac{1}{\sqrt{\pi}} \right],
\end{math}
where $p(z) = \frac{1}{\sqrt{2\pi}} \exp{(-z^2/2)}$ denotes the standard normal probability density function (PDF), and $\mathrm{P}(z) = \int_{-\infty}^z p(y) dy$ denotes the standard normal cumulative distribution function (CDF) for $z \sim \mathcal{N}(0,1)$~\citep{gneiting2005calibrated, taillardat2016calibrated}. This sample-free formulation is especially beneficial when the DPPL is computationally intensive, e.g., in the presence of nonlinear constraints. 

\section{Empirical Results}
\label{sec:results}

In our empirical evaluations, we aim to answer the following five questions about \ourmethod{}:
\begin{enumerate}[noitemsep,topsep=0pt,align=left, label=(Q\arabic*)] 
    \item \label{q1} Does training end-to-end with the imposed hard constraints improve upon the performance of imposing them only at inference time? 
     \item \label{q2}Is using a general oblique projection more beneficial than using the commonly-used orthogonal projection, and if so when? 
    \item \label{q3}Does training with the distribution-agnostic proper scoring rule, CRPS instead of NLL, improve performance? 
    \item \label{q4} What are the computational savings of projecting directly on the distribution parameters and using the closed form CRPS vs. projecting on the samples?  
    \item \label{q5} How does \ourmethod{} perform when extended to more general constraints, e.g., nonlinear equality and convex inequality constraints?
\end{enumerate}
See \cref{app:datasets} 
 for details on the test datasets, \cref{app:exp_details} for implementation details, and \cref{sxn:add_exps_pdes} for additional empirical results. 

\paragraph{Test Cases.}
We demonstrate the efficacy of \ourmethod{} in two constrained optimization applications: PDEs; and hierarchical forecasting. We show that our methodology with DPPL is model-agnostic, as demonstrated through its high-performance integration with different base models across applications. We first consider a series of PDE problems with varying levels of difficulty in learning their solutions, following the empirical evaluation from \citet{hansenLearning2023}. These PDEs are categorized as ``easy,'' ``medium,'' and ``hard,'' with the difficulty level determined by the smoothness or sharpness of the solution. 
(See \cref{app:pdes} for details.) In addition to PDEs, we also evaluate \ourmethod{} on five hierarchical time-series forecasting benchmark datasets from \citet{alexandrov2019gluonts}, where the goal is to generate probabilistic predictions that are coherent with known aggregation constraints across cross-sectional hierarchies \citep{rangapuramEndend2021}. (See \cref{app:ts} for details.)

\paragraph{Baselines.}
We compare two variants of \ourmethod{}, i.e., \ourmethod{}\texttt{-Ob}, which uses a general oblique projection ($Q = \Sigma^{-1}$) projection and is our default unless otherwise specified, and \texttt{\ourmethod{}-Or}, which uses an orthogonal projection ($Q = I$), against several probabilistic deep learning baselines commonly used for uncertainty quantification in constrained PDEs and probabilistic time series forecasting. For PDEs, \ourmethod{} uses \texttt{VarianceNO}~\citep{mouliUsing2024}, which is a probabilisitic extension of 
the Fourier Neural Operator (FNO)~\citep{liFourier2021} 
as the unconstrained model.
We compare \ourmethod{} with:
(i) \texttt{HardC}, which is based on \citet{negiarLearning2023, hansenLearning2023}, and which imposes the orthogonal projection only on the mean, but does not update the covariance; (ii) \texttt{ProbConserv} \citep{hansenLearning2023}, which applies the oblique projection only at inference time, and works only with linear constraints (in the nonlinear constraint case, we compare with \probhardinf{}, which is a variant of \texttt{ProbConserv} that imposes the nonlinear constraint at inference time only); 
 (iii) \texttt{SoftC} \citep{hansenLearning2023}, which introduces a soft penalty on constraint violation 
{\`a} la 
PINNs \citep{raissi2019physics,PINO_2021} during training but does not guarantee constraint satisfaction at inference; and (iv)
the unconstrained model backbone \texttt{VarianceNO}. For hierarchical time-series forecasting, \ourmethod{} uses \texttt{DeepVAR}~\citep{salinas2019high} as the probabilistic base model. 
We compare \ourmethod{} with: 
(i) \texttt{ProbConserv}; 
(ii) \texttt{HierE2E}~\citep{rangapuramEndend2021}, which enforces linear constraints via an end-to-end orthogonal projection; 
classical statistical approaches including \texttt{ARIMA-NaiveBU}, \texttt{ETS-NaiveBU} ~\citep{hyndmanOptimal2011,hyndman2025forecasting}, (iii) \texttt{PERMBU-MINT}~\citep{taiebCoherent2017,olivares2022hierarchicalforecast}; and (iv)  
the unconstrained model backbone \texttt{DeepVAR}.

\paragraph{Evaluation.}
We evaluate \ourmethod{} using the following metrics: 
Mean Squared Error (MSE), which measures the mean prediction accuracy; 
Constraint Error (CE), which measures the constraint errors on the samples (conservation law for PDEs and coherency for hierarchical time series forecasting); and 
Continuous Ranked Probability Score (CRPS), which measures performance in uncertainty quantification (UQ). (See \cref{app:metrics} for details on the metrics.)
For each model, we report these metrics when trained with either CRPS or Negative Log-Likelihood (NLL) as the loss. Although originally optimized with NLL, we also train a CRPS-based variant of \texttt{ProbConserv} to ensure a fair comparison. The experiments are conducted on a single NVIDIA V100 GPU in the PDEs case, and on an Intel(R) Xeon(R) CPU E5-2603 v4 @ 1.70GHz in the time series forecasting case. To ensure scalability, we use a diagonal covariance matrix \( Q \) in Problem~\ref{problem:learn_f_1_main}, following prior work \citep{hansenLearning2023, mouliUsing2024}. (See \cref{subseq:learn_q} for low-rank and full covariance structures.) 

\subsection{Linear Conservation and Hierarchical Constraints}

In this subsection, we test \ourmethod{} on linear constraints. Table \ref{tab:prob_crps_bench} presents our comparative evaluation results across multiple PDE datasets under linear constraints, and Table \ref{tab:hier_ts_metrics} presents our evaluation results across multiple time series forecasting datasets. We use these results to answer questions \ref{q1}-\ref{q4} raised above.

\begin{table*}[h]
\centering
\caption{
Test metrics on constrained PDEs across four datasets, which are ordered top to bottom in their learning difficulty. Metrics include MSE $\times 10^{-5}$, constraint (conservation) error (CE) $\times 10^{-3}$, and CRPS $\times 10^{-3}$. Each algorithm is trained with either CRPS or NLL. Best values per row are highlighted in bold.}
\label{tab:prob_crps_bench}
\resizebox{1\linewidth}{!}{
\begin{tabular}{{c|l|ll|ll|ll|ll|ll|ll}}
\toprule
Dataset & Metric 
& \multicolumn{2}{c}{\ourmethod{}-{\texttt{Ob}}} 
& \multicolumn{2}{c}{\texttt{ProbHardE2E-Or}}
& \multicolumn{2}{c}{\texttt{HardC}}
& \multicolumn{2}{c}{\texttt{ProbConserv}} 
& \multicolumn{2}{c}{\texttt{SoftC}} 
& \multicolumn{2}{c}{\texttt{VarianceNO}\ (base)} \\
& & CRPS & NLL & CRPS & NLL & CRPS & NLL & CRPS & NLL & CRPS & NLL & CRPS & NLL \\
\midrule

\multirow{3}{*}{Heat} 
& MSE  & 0.036 & 0.047 & 0.031 & 0.301& 0.031 & 0.090 & \textbf{0.027} & 1.26 & 0.051 & 0.156 & 0.029 & 2.01 \\
& CE   & \textbf{0} & \textbf{0} & \textbf{0} & \textbf{0} & \textbf{0}& \textbf{0}& \textbf{0}& \textbf{0} & 0.852 & 4.806 & 1.76 & 34.3 \\
& CRPS  & 0.304 & 0.37 & \textbf{0.271} & 0.713 & 0.275 & 0.452 & 0.392 & 4.27 & 0.354 & 1.129 & 0.396 & 4.39 \\
\midrule

\multirow{3}{*}{PME} 
& MSE  & 9.59 & \textbf{6.16} &9.01 & 11.08 & 8.870 & 10.55 & 8.801 & 10.5  & 8.187 & 7.362 & 7.945 & 8.132 \\
& CE   & \textbf{0} & \textbf{0} & \textbf{0} & \textbf{0} & \textbf{0} & \textbf{0} & \textbf{0} & \textbf{0} & 17.091 & 29.31 & 20.19 & 27.2 \\
& CRPS & 2.01 & 2.65 & 1.798 & 1.80 & 1.785 & \textbf{1.667} & 2.03 & 2.49 & 2.065 & 2.444 & 2.02 & 2.43 \\
\midrule

\multirow{3}{*}{Advection} 
& MSE   & 131 & 262 & \textbf{88.09} & 310.82 & 103.78 & 458.38 &134 & 277 & 148.11 & 599.11 & 149 & 605 \\
& CE    & \textbf{0} & \textbf{0} & \textbf{0} & \textbf{0} & \textbf{0} & \textbf{0} & \textbf{0} & \textbf{0} & 19.334 & 182.99 & 18.9 & 182 \\
& CRPS  & 4.19 & 7.03 & \textbf{2.94} & 8.669& 3.236 & 11.23 & 3.88 & 7.90 & 3.963 & 9.702 & 3.98 & 10.1 \\
\midrule

\multirow{3}{*}{Stefan} 
& MSE   & \textbf{186} & 207 & 394.84 & 432.92 &394.29 & 433.28 & 303 & 273 & 431.89 & 429.06 & 425 & 425 \\
& CE    & \textbf{0} & \textbf{0} & \textbf{0} & \textbf{0} & \textbf{0} & \textbf{0} & \textbf{0} & \textbf{0} & 166.93 & 168.75 & 180 & 169 \\
& CRPS  & \textbf{7.52} & 7.85 & 14.147 & 14.67&14.432 & 14.539& 7.85 & 8.33 & 9.878 & 10.062 & 9.51 & 10.2 \\
\bottomrule
\end{tabular}
 }
\end{table*}

\paragraph{Q1.} Regarding \ref{q1} on the benefits of training end-to-end with a hard constraint, the results demonstrate that our method achieves superior performance compared to existing approaches. 
Specifically, when measured against two accuracy metrics across four PDE datasets in Table \ref{tab:prob_crps_bench}, our method 
with either oblique (\ourmethod{}\texttt{-Ob}) or orthogonal (\texttt{ProbHardE2E-Or}) projection 
consistently outperforms both \texttt{ProbConserv}, which applies constraints only during post-processing, and \texttt{SoftC}, which implements constraints as soft penalties. This performance advantage directly addresses research question \ref{q1}, showing that our end-to-end approach is more effective than methods that treat constraints as either post-processing steps or soft penalties. In addition, across five diverse hierarchical time-series datasets, our method achieves state-of-the-art CRPS on the \textsc{Labour}, \textsc{Tourism-L}, and \textsc{Wiki} datasets. 
On the \textsc{Tourism} and \textsc{Traffic} datasets, it remains highly competitive, outperforming traditional approaches, e.g., \texttt{ARIMA} and \texttt{ETS}, and offering comparable performance to specialized methods, e.g., \texttt{PERMBU-MINT} and \texttt{HierE2E}.

\paragraph{Q2.} Regarding \ref{q2} on the effectiveness of the oblique vs. orthogonal projections, \cref{tab:prob_crps_bench} shows while both oblique (\ourmethod{}\texttt{-Ob}) and orthogonal (\texttt{ProbHardE2E-Or}) variants of \ourmethod{} enforce zero constraint error, their impact on predictive fidelity varies significantly, depending on the difficulty of the PDE problem. For the ``easy'' (smooth) Heat equation and ``medium'' tasks (PME and Advection), orthogonal projection reduces CRPS by $10-30\%$ relative to oblique projection and improves MSE by up to $33\%$. However, in the ``hard'' (sharp) nonlinear Stefan problem, the oblique projection-based method improves CRPS by more than 50\% compared to the orthogonal projection.  \cref{tab:hier_ts_metrics} shows that \ourmethod{}-\texttt{Or} generally performs better on the time series datasets with cross-sectional hierarchies, as it improves CRPS on \textsc{Labour} and \textsc{Tourism} datasets, compared to \texttt{ProbHardE2E-Ob}. This addresses \ref{q2} that correcting predictions along covariance-weighted (oblique) directions better preserves the true uncertainty structure around shocks and spikes, performing more effectively on problems with heteroscedastic data.

\begin{table*}[h]
\centering
\caption{CRPS $\times 10^{-3}$ for hierarchical time‑series datasets across various probabilistic forecasting algorithms. Constraint (coherency) error (CE) is given in parenthesis and is equal to 0 for all methods except the base unconstrained \texttt{DeepVAR}. \texttt{PERMBU-MINT} is not available for \textsc{Tourism-L}, because the dataset has a nested hierarchical structure, and \texttt{PERMBU-MINT} is not well-defined on this type of dataset \citep{rangapuramEndend2021}. \texttt{ARIMA-NaiveBU}, \texttt{ETS-NaiveBU} and \texttt{PERMBU-MINT} are deterministic models with no model uncertainty.}
\label{tab:hier_ts_metrics}
\resizebox{\linewidth}{!}{
\begin{tabular}{@{} c|l|l|l|l|l|l|l|l @{}}
\toprule
\small Dataset & \small\ourmethod{}\texttt{-Ob} & \small\ourmethod{}\texttt{-Or} & \small\texttt{ProbConserv} & \small\texttt{HierE2E} & \small \texttt{ARIMA-NaiveBU} & \small \texttt{ETS-NaiveBU} & \small \texttt{PERMBU-MINT} & \small\texttt{DeepVAR}\ (base) \\
\midrule
\textsc{Labour}          & 36.1$\pm$ 2.7 (0)  & \textbf{28.6}$\pm$6.5 (0)  & 45.8$\pm$6.5 (0)   & 50.5$\pm$20.6 (0) & 45.3 (0) & 43.2 (0)  & 39.3 (0)  & 38.2$\pm$4.5 (0.215) \\
\midrule
\textsc{Tourism}         & 98.9$\pm$13.0 (0)  & 82.4$\pm$6.6 (0)  & 100.7$\pm$7.7 (0)  & 103.1$\pm$16.3 (0) & 113.8 (0)  & 100.8 (0)  & \textbf{77.1} (0)  & 92.5$\pm$2.2 (2818.01)  \\
\midrule
\textsc{Tourism-L}   & \textbf{155.2}$\pm$3.6 (0)  & 156.4$\pm$9.4 (0)          & 176.9$\pm$21.5 (0) & 161.3$\pm$10.9 (0) & 174.1 (0)  & 169.0 (0)  & --     & 158.1$\pm$10.2 ($70000$)  \\
\midrule
\textsc{Traffic}         & 55.0$\pm$10.6 (0)  & 60.6$\pm$7.8 (0)           & 71.0$\pm$3.9 (0)   & \textbf{41.8$\pm$7.8 (0)}   & 80.8 (0)  & 66.5 (0) & 67.7 (0)  & 40.0$\pm$2.6 (0.192) \\
\midrule
\textsc{Wiki}            & \textbf{212.1}$\pm$29.4 (0) & 215.8$\pm$16.9 (0)         & 264.7$\pm$ 30.7 (0) & 216.5$\pm$26.7 (0) & 377.2 (0)  & 467.3 (0) & 281.2 (0) & 229.4$\pm$15.8 (8398.6)  \\
\bottomrule
\end{tabular}
}
\end{table*}

\vspace{-2mm}
\paragraph{Q3.} 
 Regarding \ref{q3} on the training objective, \cref{tab:prob_crps_bench} shows that training with the proper scoring rule, such as CRPS, improves UQ (measured by CRPS) across nearly all PDE datasets 
compared to training with the commonly-used NLL in previous SciML works. The only exception is \texttt{HardC} in the PME. The CRPS training objective also improves mean accuracy (measured by MSE) in approximately three-quarters of the dataset-model experiments.  In addition, Table \ref{tab:hier_ts_metrics} shows that on four out of five time series datasets, we improve upon the results of \texttt{Hier-E2E}, which uses the \texttt{DeepVAR} base model with an orthogonal projection on the samples, and which optimizes the sampling-based quantile loss by projecting directly on the distribution parameters. 

\paragraph{Q4.} Regarding \ref{q4} on the computational efficiency of our sampling-free approach, \cref{fig:pde_timing} shows the training time per epoch for the hierarchical time series datasets. Models trained for time series and PDEs (see \cref{app:linear_equal_res}) with 100 posterior samples per training step incur a \(3.3\text{--}4.6\times\) increase in epoch time relative to \ourmethod{}, which avoids sampling altogether by using a closed-form CRPS loss. Note that the computational overhead of \ourmethod{} is approximately \(2 \times\) that of the unconstrained model. However, compared to the sampling-based probabilistic baselines, our approach with the DPPL layer that directly projects distribution parameters and a closed-form CRPS objective offers significant training-time speed-ups across all forecasting and PDE testbeds. 

\subsection{General Nonlinear Equality and Convex Inequality Constraints (Q5)}

In this subsection, we test \ourmethod{} on  nonlinear equality and convex inequality constraints to address question \ref{q5} on PDE datasets, as (current) time series forecasting applications usually need predictions to satisfy only linear (e.g., hierarchical) constraints. 

\subsubsection{Nonlinear Equality Constraints}
We test \ourmethod{} with general nonlinear constraints using nonlinear conservation laws from PDEs. (See \cref{subsec:nonlinear_res_app} for details.)  Importantly,  \cref{tab:nonlinear_pme} shows that even in this challenging case of nonlinear constraints, the constraint error (CE) on the samples is 0 so that we ensure strict feasibility on the samples. In addition, we see superior performance of enforcing nonlinear constraints with \ourmethod{}. We see an even larger MSE performance improvement of $\approx 15-17\times$ when trained on CRPS, and CRPS performance improvement of $\approx 2.5\times$ over the various baselines that apply the nonlinear constraint just at inference time (\probhardinf{}), as a reduced linear constraint (\texttt{ProbConserv}) at inference time, and the unconstrained model (\texttt{VarianceNO}). These results highlight the benefit of training end-to-end with the constraint in the nonlinear case.  In particular, this validates our dual mode training and inference approach, where we obtain the computational benefits of projecting on the parameters and approximating satisfying the constraint during training, and exact constraint sanctification at inference time while achieving state-of-the-art performance measured in CRPS. 
 In addition, \cref{fig:nonlin_pme} shows that \ourmethod{} is able to significantly better capture the shock and has tighter uncertainty estimates, leading to lower CRPS values than the baselines. 

\begin{figure*}[ht]
    \centering
            \subfigure[Linear Equality: Timing]{\label{fig:pde_timing}\includegraphics[width=0.33\linewidth, height=3.75cm]{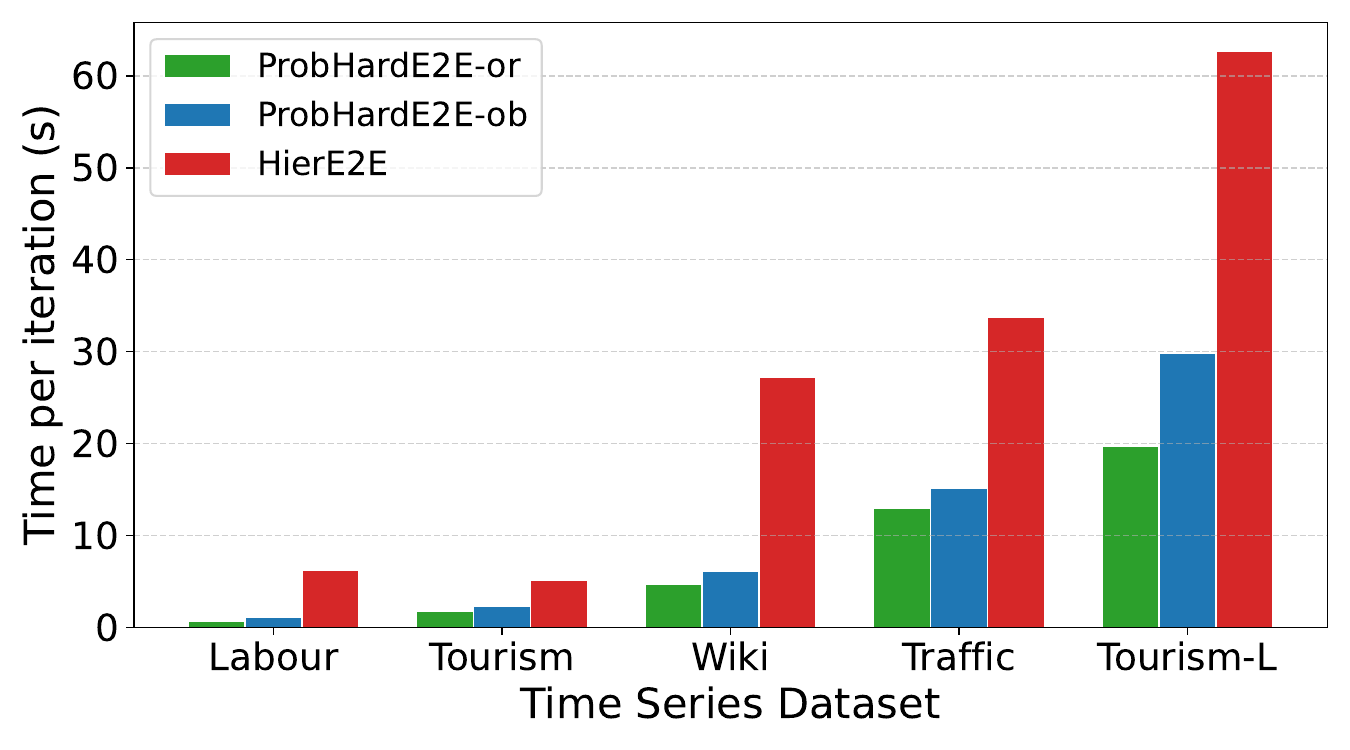}}
              \subfigure[Nonlinear Equality: PME]{\label{fig:nonlin_pme}\includegraphics[width=0.32\linewidth, height=3.75cm]{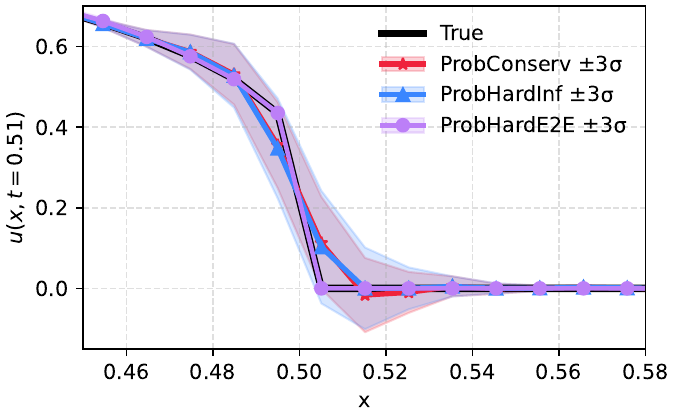}}
                 \subfigure[Convex Inequality: TVD]{\label{fig:lin_adv_tvd}\includegraphics[width=0.33\linewidth, height=3.75cm]{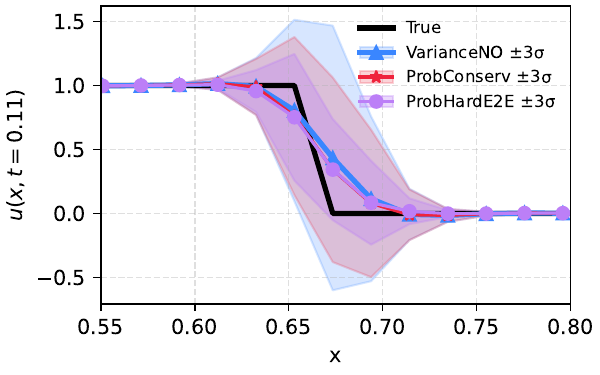}}
    \caption{\ourmethod{} on the various constraint types.
        (a) \textbf{Linear Equality}: Average time per iteration (in seconds) for \ourmethod{}, compared to the \texttt{HierE2E} on five hierarchical time-series datasets; 
        (b) \textbf{Nonlinear Equality}: Mean $\pm$3 standard deviation for the PME with conservation constraint at time $t=0.51$, with PDE parameter $m_{\text{train}}\in[3,4]$ and $m_{\text{test}}=3.88$; (c) \textbf{Convex Inequality}: Mean $\pm$3 standard deviation for linear advection with TVD constraint at time $t=0.51$, with PDE parameter $\beta_{\text{train}} \in [1,2]$ and $\beta_{\text{test}} = 1.5$. The horizontal axes in (b)-(c) are zoomed in 
        to highlight the uncertainty near the propagating front.
    }
    \label{fig:pme_nonlinear}
\end{figure*}


\begin{table*}[h]
\centering
\caption{Test metrics on the nonlinear PME with PDE coefficient $k(u) = u^m$, which controls the sharpness of the solution (larger values are ``harder''), for NLL and CRPS training. The training and test parameters are sampled from this range of $m$. Metrics include MSE $\times 10^{-6}$, calibration error (CE) $\times 10^{-3}$, and CRPS $\times 10^{-4}$. Best values per row and metric are bolded. 
}
\label{tab:nonlinear_pme}
\resizebox{1\linewidth}{!}{
\begin{tabular}{{c|l|ll|ll|ll|ll|ll}}
\toprule
\textbf{PME Dataset} & \textbf{Metric} 
& \multicolumn{2}{c}{\ourmethod{}-\texttt{Ob}} 
& \multicolumn{2}{c}{\ourmethod{}\texttt{-Or}} 
& \multicolumn{2}{c}{\probhardinf{}} 
& \multicolumn{2}{c}{\texttt{ProbConserv}} 
& \multicolumn{2}{c}{\texttt{VarianceNO} (base)} \\
& & CRPS & NLL & CRPS & NLL & CRPS & NLL & CRPS & NLL & CRPS & NLL \\
\midrule
\multirow{3}{*}{$m \in \left[2,3\right]$} 
& MSE   & \textbf{5.04} & 106.638 & 9.38 & 43.5 & 78.185 & 86.147 & 88.539 & 94.467 & 80.342 & 89.212 \\
& CE    & \textbf{0}    & \textbf{0} & \textbf{0} & \textbf{0} & \textbf{0} & \textbf{0} & \textbf{0} & \textbf{0} & 0.020 & 0.028 \\
& CRPS  & \textbf{8.648} & 18.867 & 11.34 & 14.8 & 19.005 & 32.977 & 20.672 & 36.724 & 20.779 & 37.140 \\
\midrule
\multirow{3}{*}{$m \in \left[3,4\right]$} 
& MSE  & 296.4 & 471.3 & \textbf{3.19} & 134.7 & 157.8 & 200.2 & 184.5 & 276.4 & 162 & 201.5 \\
& CE   & \textbf{0}    & \textbf{0} & \textbf{0} & \textbf{0} & \textbf{0} & \textbf{0} & \textbf{0} & \textbf{0} & 14.8 & 34.1 \\
& CRPS & 11.23 & 16.9 & \textbf{7.10} & 11.18 & 22.60 & 30.4 & 24.7 & 35.1 & 23.7 & 48.5 \\
\midrule
\multirow{3}{*}{$m \in \left[4,5\right]$} 
& MSE   & 424.8 & 716.8 & \textbf{1.59} & 206.49 & 280.4 & 332.3 & 276.7 & 619.9 & 199.2 & 341.7 \\
& CE   & \textbf{0}    & \textbf{0} & \textbf{0} & \textbf{0} & \textbf{0} & \textbf{0} & \textbf{0} & \textbf{0} & 22.8 & 59.7 \\
& CRPS  & 10.8 & 19.9 & \textbf{5.62} & 9.36 & 23.3 & 32.4 & 25.4 & 41.3 & 27.2 & 35.9 \\
\bottomrule
\end{tabular}
}
\end{table*}

\subsubsection{(Nonlinear) Convex  Inequality Constraints}

We impose a convex relaxation of the total variation diminishing (TVD) constraint that has been commonly used in numerical methods for PDEs to minimize artificial oscillations~\citep{leveque2002finite}. In particular, we impose $\text{TVD} = \sum_{n=1}^{N_t}\sum_{i=1}^{N_x} \left| u(t_n, x_{i+1}) - u(t_n, x_{i}) \right|$ as a regularization term. (See \cref{subeq:convex_res} for details.) Note that this discrete form is analogous to total variation denoising in signal processing \citep{rudin1992nonlinear, boyd2004convex}. 
\cref{fig:lin_adv_tvd} illustrates that imposing this TVD relaxation improves the shock location prediction, compared to the unconstrained model \texttt{VarianceNO}. We see that \ourmethod{} has smaller variance, compared to both \texttt{ProbConserv}, which only enforces the conservation law, and \texttt{VarianceNO}. Most importantly, we see that \ourmethod{} is less likely to predict non-physical negative samples, which violates the positivity of the true solution. In addition, the variance of the \ourmethod{} solution also has a smaller peak above the shock, and hence it is less prone to oscillations than the other baselines.

\section{Conclusion}
\label{sxn:conclusion}
In this work, we propose \ourmethod{}, which seamlessly incorporates constraints ranging from commonly used linear constraints to challenging nonlinear constraints into black-box probabilistic deep learning models using a differentiable probabilistic projection layer (DPPL). We show that \ourmethod{} is applicable in a wide range of scientific and operational domains, ranging from linear coherency constraints in time series forecasting to nonlinear conservation laws in solving PDEs.  Contrary to prior works \citep{hansenLearning2023}, which only support linear global conservation constraints, \ourmethod{} can support nonlinear conservation constraints. This paves the way for future work to enforce conservation locally over sub-regions or over control volumes 
{\`a} la finite volume methods \citep{leveque2002finite}.
Future work also includes extending our method to handle general non-convex, nonlinear inequality constraints using advanced optimization techniques or relaxations, to richer covariance parameterizations, e.g., low-rank or~dense, and to empirical distributions other than location-scale families by sample projection. 

\todo{\begin{itemize}
    \item Add about efficient sampling from solutions of convex programs
\end{itemize}}

\paragraph{Acknowledgements.}
The authors thank Jiayao Zhang, Pedro Eduardo Mercado Lopez and Gaurav Gupta for discussions on this work and related early investigations. The authors would additionally like to thank Chris Rackauckas and Alan Edelman for their insights and support of the project.

\bibliography{exported_refs}
\bibliographystyle{abbrvnat}


\appendix
\onecolumn

\section{Related Work}
\label{app:related_work}
In this section, we review works that have been motivated by dealing with hard constraints in various domains including imposing constraints in neural networks \citep{min2024hard, donti2021dc3}, probabilistic time series forecasting \citep{pmlr-v206-rangapuram23a, rangapuramEndend2021, olivares2024clover} and scientific machine learning  \citep{negiarLearning2023, hansenLearning2023}.  
\cref{table:compdiffmethods} summarizes several of these methods. We see that existing general methododologies, e.g., HardNet~\citep{min2024hard} and DC3~\citep{donti2021dc3}, work across various domains and different types of constraints---HardNet handles convex constraints, and DC3 tackles nonlinear ones. The biggest limitation of these methods is that they provide point estimates only. Despite having the point forecast satisfying the constraints, they are unsuitable for PDEs and forecasting applications, which generally now require variance estimates. Hier-E2E~\citep{rangapuramEndend2021} and CLOVER~\citep{olivares2024clover} are specialized solutions for forecasting problems, which both deal with probabilistic forecasting under linear constraints. Linear constraints are common in the time series forecasting domain. Both methods require sampling during training, which can be computationally intensive. Within the PDE-focused methods, ProbConserv~\citep{hansenLearning2023} and HardC~\citep{hansenLearning2023} handle linear constraints and include variance estimates in their probabilistic models. The training procedure with the constraint is not end-to-end since the constraint is only applied at inference time. PDE-CL~\citep{negiarLearning2023} handles nonlinear constraints and supports end-to-end training, but at the cost of not supporting variance estimation.

\begin{table*}[h]
\centering
\caption{Summary of methods motivated by dealing with hard constraints in various domains: imposing constraints in neural networks \citep{min2024hard, donti2021dc3}, probabilistic time series forecasting \citep{pmlr-v206-rangapuram23a, rangapuramEndend2021, olivares2024clover} and scientific machine learning  \citep{negiarLearning2023, hansenLearning2023}. For models that only provide point estimates, we evaluate their capabilities on sampling-free training and satisfying constraints on distributions, while treating the point estimate as a degenerate probabilistic distribution.
}
\label{table:compdiffmethods}
\scriptsize\begin{tabularx}
{\linewidth}{@{} l *{8}{>{\centering\arraybackslash}X} @{}}
\toprule
\textbf{Method} & \textbf{Domain} & \textbf{Constraint Type} & \textbf{End-to-End} & \textbf{Prob. Model w/ Variance Estimate} & \textbf{Sampling-free Training} & \textbf{Constraint on Dstbn.} \\ 
\midrule
HardNet \citep{min2024hard} & General & Convex & \ding{51} & \ding{55} &  \ding{51}  &  \ding{51} 
\\
DC3 \citep{donti2021dc3} & General & Nonlinear & \ding{51} & \ding{55} & \ding{51} &   \ding{51} 
\\
Hier-E2E \citep{, rangapuramEndend2021, pmlr-v206-rangapuram23a} & Forecasting & Linear & \ding{51} & \ding{51} & \ding{55} & \ding{55} \\
CLOVER \citep{olivares2024clover} & Forecasting & Linear & \ding{51} & \ding{51} & \ding{55} & \ding{51}  \\
PDE-CL \citep{negiarLearning2023} & PDEs  & Nonlinear & \ding{51} & \ding{55} & \ding{51} & \ding{51} \\ 
ProbConserv \citep{hansenLearning2023} & PDEs  & Linear & \ding{55} & \ding{51} & \ding{51} & \ding{51} \\ 
HardC \citep{hansenLearning2023} & PDEs  & Linear & \ding{55} & \ding{51} & \ding{51} & \ding{51} \\ 
\midrule
\ourmethod{} (Ours) & General & Nonlinear & \ding{51} & \ding{51} & \ding{51} & \ding{51} \\ 
\bottomrule
\end{tabularx}
\end{table*}

Our proposed method \ourmethod{} bridges the gaps left by its predecessors. It combines the flexibility of general domain application with the ability to handle nonlinear constraints, and it maintains end-to-end training capability. Perhaps most notably, it achieves this while incorporating probabilistic modeling with variance estimates, supporting sampling-free training, and maintaining constraints on distributions.

\subsection{Imposing Deterministic Constraints on Neural Networks}
\label{sxn:related-constraints}

Enforcing constraints in neural networks (NNs) has been explored in various forms. 
In fact, activation functions, e.g., sigmoid, ReLU, and softmax, inherently impose implicit constraints by restricting outputs to specific intervals. Another well-established method for enforcing constraints in NNs involves differentiating the Karush-Kuhn-Tucker (KKT) conditions, which enables backpropagation through optimization problems. 
This technique has led to the development of differentiable optimization layers \citep{amosOptnet2017, agrawalDifferentiable2019} and projected gradient descent methods \citep{rosen1960gradient}.

Most commonly, soft constraint methods, e.g., Lagrange duality based methods, are often employed in ML to balance  minimizing the primary objective with satisfying the constraints. These methods typically do so by adding the constraint as a penalty term to the loss function \citep{battaglia2018relational}. For example, Lagrange dual methods and relaxed formulations are frequently used to allow flexibility in the optimization process, while still guiding the model toward feasible solutions.
These methods encourage---but do not strictly enforce---adherence to the constraints during training; and this lack of strict enforcement can be undesirable in some scientific disciplines, where known constraints must be satisfied exactly \citep{hansenLearning2023, rangapuramEndend2021}.

More recently, there have been approaches that have been motivated by satisfying hard constraints. DC3 \citep{donti2021dc3} is a general method for learning a family of constrained optimization problems using a correction and variable completion procedure.  The variable completion approach has a strong theoretical and practical foundation.  A limitation is that it does require knowledge of the structure of the matrix $A$ to identify these corresponding predicted and completed variables, which hinders its generalizability. In addition for inequality constraints, it only achieves hard constraint satisfaction asymptotically; that is, the ``correction'' procedure to enforce inequality constraints is carried out through gradient-descent optimization algorithms \citep{min2024hard, donti2021dc3}.

Projection-based methods are an alternate method for enforcing hard constraints in NNs.  \citet{min2024hard} identify cases where the aforementioned DC3 framework~\citep{donti2021dc3} is outperformed by their proposed HardNet projection layer approach. Additionally, \citet{min2024hard} investigate the expressiveness of projection layers, which builds on the foundational work in \citet{agrawalDifferentiable2019, amosOptnet2017}, to further advance the understanding of constraint enforcement in NNs.
Projection-based methods have also been used to enforce constraints on specific architectures, e.g., neural ordinary differential equations (Neural ODEs) \citep{kasim2022constants, matsubara2022finde}. In particular, \citet{white2024} use a closed-form projection operator to enforce a nonlinear constraint $g(u) = 0$ in a Neural ODE, using techniques from \citet{boumal2024}. A common limitation of these works is that they impose the constraint deterministically, on point estimates rather than on an entire probability distribution.

\subsection{Probabilistic Time Series Forecasting}
\label{sxn:related-uq}
Probabilistic time series forecasting extends beyond predicting point estimates, e.g., the mean or median, by providing a framework to capture uncertainty, with practical application in estimating high quantiles, e.g., P99. Classical statistical models, e.g., autoregressive integrated moving average (ARIMA) models \citep{box2015time}, state-space models \citep{kalman1960new}, and copula-based models \citep{joe1997multivariate} are prominent examples. 
More recently, deep learning models, e.g., DeepAR \citep{salinas2020deepar} and its multivariate extension DeepVAR \citep{salinas2019high}, multivariate quantile regression-based models~\citep{wen2018multihorizonquantilerecurrentforecaster, mqt, pmlr-v151-park22a}, temporal fusion transformers  (TFT) \citep{lim2021temporal}, and foundational models based on large language models (LLMs) \citep{ansari2024chronos, tabpfnts, das2023decoder, moirai} have shown success. See \citet{benidis22} for an overview. 

Linear constraints are important in hierarchical time series forecasting, where coherent aggregation constraints are required over regions \citep{rangapuramEndend2021, olivares2024clover} and over temporal hierarchies \citep{pmlr-v206-rangapuram23a}. This constraint is critical in scenarios where higher-level forecasts must be aggregates of lower-level ones, which is a common requirement in time-series forecasting. Early works in hierarchical forecasting focus on mean forecasts under linear/hierarchical constraints, starting from the naive bottom-up and top-down approaches \citep{hyndman2018forecasting}. More recently, \citet{hyndmanOptimal2011} show that the Middle-Out projection-based method yields better forecast accuracy. Since then, projection-based reconciliation methods, e.g., GTOP \citep{van2015game}, MinT \citep{Wickramasuriya03042019}, and ERM \citep{ben2019regularized} have been developed. These methods leverage generic time series models, e.g., ARIMA and exponential smoothing (ETS), to derive the unconstrained mean forecast, and then they use a linear projection to map these forecasts to the consistent space.
\citet{taiebCoherent2017} further extend the reconciliation method (MinT) to probabilistic forecasting by developing a method called PERMBU that constructs cross-sectional dependence through a sequence of permutations. A more thorough review of forecasting reconciliation is provided in~\citet{athanasopoulos2024forecast}. 

To better handle the trade-off between forecast accuracy and coherence within the model, several works have proposed end-to-end methodologies. For example, \citet{rangapuramEndend2021} propose Hier-E2E, which is an end-to-end learning approach that imposes constraints via an orthogonal projection on samples from the distribution. Hier-E2E produces coherent probabilistic forecasts without requiring explicit post-processing reconciliation. One limitation is that Hier-E2E relies on expensive sampling techniques to achieve this coherence, by projecting directly on the samples rather than on the distribution itself, which has a closed-form expression.

Separately, DPMN~\citep{olivares2024probabilistic} adopts an equality constraint completion approach similar to that in DC3~\citep{donti2021dc3}, rather than a projection-based approach, for satisfying the coherency constraint. DPMN assumes that the bottom-level series follow a Poisson mixture distribution, with a multivariate discrete distribution on the Poisson rates across bottom-level series. Compared to Hier-E2E, DPMN uses a CNN-based encoder rather than DeepVAR, and it shows improved forecast accuracy over Hier-E2E. As follow-up work to DPMN, \citet{olivares2024clover} propose CLOVER, a framework which enforces coherency as a hard constraint in probabilistic hierarchical time series forecasting models using a CNN encoder. 
In particular, CLOVER only predicts the base forecasts in the first step, and it solves for the aggregate forecasts by leveraging the constraint relation. 
Finally, CLOVER models the joint distribution of all the forecasts in the scoring function calculation. Similar to Hier-E2E, CLOVER also relies on sampling to enforce hierarchical coherency and to generate uncertainty estimates.
This affects the training time, and it requires tuning of the number of samples for an accurate approximation of this scoring function.   
Although CLOVER admits constraint satisfaction, the exact provably convergent procedure only exists for linear equality constraints \citep{donti2021dc3}, and it has not been applied to nonlinear equality or convex inequality constraints.

\subsection{Scientific Machine Learning (SciML)}
\label{sxn:related-pdes}
Partial differential equations (PDEs) are ubiquitous in science and engineering applications, and have been used to model various physical phenomena, ranging from nonlinear fluid flows with the Navier-Stokes equations to nonlinear heat transfer.  Classical numerical methods to solve PDEs include finite difference \citep{leveque2007finite}, finite element~\citep{hughes2003finite}, and finite volume methods~\citep{leveque2002finite}. These numerical methods discretize the solution on a spatio-temporal mesh, and the accuracy increases as the mesh becomes finer. 
For this reason, numerical methods can be computationally expensive on real-world, time dependent, 3D spatial problems that require fine meshes for high accuracy. 

Recently, Scientific Machine Learning (SciML) methods aim to alleviate the high computational requirement of numerical methods. State-of-the-art data-driven methods include operator-based methods, which aim to learn a mapping from PDE parameters or initial/boundary conditions to the PDE solution, e.g., Neural Operators (NOs) \citep{li2020graphkernel, liFourier2021, gupta2021multiwavelet} and DeepONet \citep{luLearning2021},  and message-passing Graph Neural Networks (GNNs)-based MeshGraphNets \citep{plaff2021, fortunato2022multiscalemeshgraphnets}. These data-driven methods are not guaranteed to satisfy the PDE or known physical laws exactly, e.g., conservation laws \citep{hansenLearning2023, mouliUsing2024} or boundary conditions \citep{saad_bc_2022, cheng2024} since they only implicitly encode the physics through the supervised training simulation data~\citep{soatto2023}.

Similar to imposing constraints on NNs, most work on imposing constraints in SciML has been focused on soft constraints. One well-known type of approach is Physics-Informed Neural Networks (PINNs) \citep{raissi2019physics, karniadakis2021physics}, which approximates the solution of a PDE as a NN. PINNs and similarly Physics-Informed Neural Operators (PINOs)~\citep{PINO_2021} impose the PDE as an additional term in the loss function, akin to the aforementioned soft constraint regularization.  \citet{krishnapriyan2021characterizing, edwards2022neural} identify limitations of this approach on problems with large PDE parameter values, where adding this regularization term can actually make the loss landscape sharp, non-smooth and more challenging to optimize. In addition, \citet{hansenLearning2023} show that adding the constraint to the loss function does not guarantee exact constraint enforcement, which can be critical in the case of conservation and other physical laws.  This constraint violation primarily happens because the Lagrangian duals of the constrained optimization problem are typically non-zero, i.e., the physical constraint is not exactly satisfied. 

Recent work has studied imposing physical knowledge as hard constraints on  various SciML methods.  \citet{negiarLearning2023} propose PDE-CL, which uses differentiable programming and the implicit function theorem~\citep{krantz2002} to impose nonlinear PDE constraints directly. \citet{chalapathi2024scaling}, extend this work by leveraging a mixture-of-experts (MoE) framework to better scale the method.  Similarly, \citet{beucler2021enforcing} impose analytical constraints in NNs with applications to climate modeling. In addition, Universal Differential Equations (UDEs) \citep{rackauckas2020universal, utkarsh2024automated} provide a GPU-compatible and end-to-end differentiable way to learn PDEs while also enforcing implicit constraints. \citet{CHEN2024108764} propose KKT-hPINN to enforce linear equality constraints by using a projection layer that is derived from the KKT conditions. These works show the benefit of imposing the PDE as a hard constraint rather than as a soft constraint. A limitation of these methods is that they impose the constraints deterministically, and they do not provide estimates of the underlying variance or uncertainties.
To address this, \citet{hansenLearning2023} propose ProbConserv, which incorporates linear conservation laws as hard constraints on probabilistic models by performing an oblique projection to update the unconstrained mean and variance estimates. Limitations are that this projection is only applied as a post-processing step at inference time, and the method only supports linear constraints.

\section{Universal Approximation Guarantees}
\label{subsec:uni_approx}

In this section, we prove a universal approximation result for our differentiable probabilistic projection layer (DPPL) in the case of convex constraints. As a consequence of this result, our \ourmethod{} in \cref{alg:probend2end} is a universal approximator, and can approximate any continuous target function that satisfies the given constraints. Our proof of this result generalizes the analysis of \cite{min2024hard} from the case \(Q = I\)  to our broader framework with arbitrary \(Q\). Since $Q$ is symmetric positive definite, we compute its Cholesky factorization \(Q = LL^T\), where \(L\) denotes a lower triangular matrix. We then show that  if \(f_\theta\) is a universal approximator, i.e., a sufficiently wide and deep neural network, then our DPPL preserves this universal approximation capability. Hence, \ourmethod{} retains the expressiveness of neural networks, both in its probabilistic formulation and in enforcing hard constraints. 

Our DPPL  in Problem~\ref{problem:learn_f_1_main} is formulated in terms of projecting the samples \( z_\theta(\phi^{(i)}) \sim \mathbf{Z}_\theta(\phi^{(i)}) \), where $\mathbf{Z}_\theta(\phi^{(i)}) \sim \mathcal{F}(\mu_\theta(\phi^{(i)}), \Sigma_\theta(\phi^{(i)}))$ for some multivariate location-scale distribution $\mathcal{F}$, and where \(({\phi}^{(i)}, u^{(i)}) \sim \mathcal{D}\) denotes training data from a distribution \(\mathcal{D}\). The mean $\mu_\theta(\phi^{(i)}) \in \mathbb{R}^n$  and covariance $\Sigma_\theta(\phi^{(i)}) \in \mathbb{R}^{n \times n}$ are the output from a deep neural network, $f_\theta: \Phi \to \mathbb{R}^k$. The value of $k$ depends on the approximation for $\Sigma_\theta(\phi^{(i)})$, e.g., $k=n + n^2$ for dense $\Sigma_\theta(\phi^{(i)})$, $k=2n$ for $\Sigma_\theta(\phi^{(i)})=\text{diag}(\sigma_1^2, \dots, \sigma_n^2)$, or $k=n$ for $\Sigma_\theta(\phi^{(i)})=I$. For notational simplicity, we assume in this section that $f_\theta: \Phi \to \mathbb{R}^n$ corresponds to the components that output the mean $\mu_\theta(\phi^{(i)})$. By setting $z_\theta(\phi^{(i)}) = \mu_\theta(\phi^{(i)}) = f_\theta(\phi^{(i)})$ in Problem~\ref{problem:learn_f_1_main}, our DPPL can also be formulated in terms of projecting the mean $\mu_\theta(\phi^{(i)})$ as the following constrained least squares problem: 
\begin{align}
     \hat \mu_\theta(\phi^{(i)}) := \argmin_{\substack{ \tilde \mu_\theta(\phi^{(i)}) \in \mathbb{R}^n \\ g(\tilde \mu_\theta(\phi^{(i)})) \le 0 \\h(\tilde \mu_\theta(\phi^{(i)})) = 0 }} \lVert  \tilde \mu_\theta(\phi^{(i)}) - f_\theta(\phi^{(i)}) \rVert^2_Q, 
   \label{eqn:constr_least_squares_app}
\end{align}
where $Q$ denotes a symmetric positive definite matrix and  $g(\cdot)\le0, h(\cdot) = 0$ denote the convex constraints. In particular, we show that the projected mean $\hat \mu_\theta(\phi^{(i)})$ is a universal approximator of the true solution $u \in \mathbb{R}^n$. We now state the theorem and provide its proof below.

\begin{restatable}{theorem}{universalapprox}
\label{thm:uni_approx_main}
Consider Problem~\ref{eqn:constr_least_squares_app} with the projection step defined using a symmetric positive definite (SPD) matrix \(Q \in \mathbb{R}^{n \times n}\), a deep neural network that is a universal approximator, \(f_\theta: \Phi \to \mathbb{R}^n\), where $\Phi \subset \mathbb{R}^m$ denotes a compact set, convex constraints $g(\cdot) \le 0, h(\cdot) = 0$, and training data \(({\phi}^{(i)}, u^{(i)}) \sim \mathcal{D}\) from a distribution \(\mathcal{D}\).  
For any continuous target function that satisfies the constraints, i.e., the true solution \(u: \Phi \to  \mathcal{C} \subseteq \mathbb{R}^n\), $u \in C(\Phi)$, where $\mathcal{C}$ denotes the convex set of feasible points defined by the convex constraints and $C(\Phi)$ denotes the space of continuous functions on $\Phi$, there exists a choice of network parameters for \( f_\theta(\phi^{(i)}) = \mu_\theta(\phi^{(i)}) \in \mathbb{R}^n\), such that the projected mean, which is composition of \(f_\theta\) with the projection step, i.e., $  \Pi_{\mathcal{C}}^Q\bigl(f_\theta(\phi^{(i)})\bigr) = \hat f_\theta(\phi^{(i)}) = \hat \mu_\theta(\phi^{(i)}) \in \mathcal{C} \subseteq\mathbb{R}^n$, approximates the target function arbitrarily well, where $\hat f_{\theta}: \Phi \rightarrow \mathcal{C} \subseteq \mathbb{R}^n$. Hence, under these conditions, \ourmethod{} is a universal approximator for constrained mappings. 
\end{restatable}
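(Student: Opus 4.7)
The plan is to exploit two ingredients: (i) the non-expansiveness of projections onto closed convex sets, adapted to the $Q$-weighted norm via a Cholesky factorization; and (ii) the universal approximation property of the base network $f_\theta$. The conceptual core is that since $u$ already maps into the feasible set $\mathcal{C}$, the projection acts as the identity on it; therefore it suffices to approximate $u$ directly by $f_\theta$ and transfer the approximation error through the projection via Lipschitz continuity.

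First, I would reformulate the $Q$-projection in a coordinate system where it becomes a standard Euclidean projection. Since $Q$ is SPD, write $Q = LL^\top$ (Cholesky). Under the change of variables $\tilde x = L^\top x$, the $Q$-norm becomes the standard Euclidean norm: $\|x\|_Q = \|L^\top x\|_2$. The feasible set $\mathcal{C}$ is closed (by continuity of $g, h$) and convex (by convexity of $g$ and affinity of $h$) and transforms to $\tilde{\mathcal{C}} = L^\top \mathcal{C}$, which is also closed and convex. Consequently the $Q$-projection factors as $\Pi_\mathcal{C}^Q(z) = L^{-\top} \Pi_{\tilde{\mathcal{C}}}(L^\top z)$, where $\Pi_{\tilde{\mathcal{C}}}$ denotes the standard Euclidean projection. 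Nonemptiness of $\mathcal{C}$ (needed for the projection to be well-defined and single-valued) is immediate from the hypothesis that $u$ takes values in $\mathcal{C}$.

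Next, I would establish that $\Pi_\mathcal{C}^Q$ is 1-Lipschitz in the $Q$-norm. Using that Euclidean projection onto a closed convex set is 1-Lipschitz, for any $z_1, z_2 \in \mathbb{R}^n$,
\[
\|\Pi_\mathcal{C}^Q(z_1) - \Pi_\mathcal{C}^Q(z_2)\|_Q = \|\Pi_{\tilde{\mathcal{C}}}(L^\top z_1) - \Pi_{\tilde{\mathcal{C}}}(L^\top z_2)\|_2 \le \|L^\top(z_1-z_2)\|_2 = \|z_1-z_2\|_Q.
\]
Because all norms on $\mathbb{R}^n$ are equivalent, there is a constant $C_Q > 0$ (depending on the extreme eigenvalues of $Q$) translating between $\|\cdot\|_\infty$ and $\|\cdot\|_Q$.

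Finally, I would apply universal approximation. Fix $\epsilon > 0$. Since $u \in C(\Phi, \mathcal{C})$ on the compact set $\Phi$, by the universal approximation theorem there exist parameters $\theta$ with $\sup_{\phi \in \Phi}\|f_\theta(\phi) - u(\phi)\|_\infty < \epsilon/C_Q^2$. Since $u(\phi) \in \mathcal{C}$, the projection fixes it: $\Pi_\mathcal{C}^Q(u(\phi)) = u(\phi)$. Combining with Step~2,
\[
\sup_\phi \|\hat f_\theta(\phi) - u(\phi)\|_\infty \le C_Q \sup_\phi \|\Pi_\mathcal{C}^Q(f_\theta(\phi)) - \Pi_\mathcal{C}^Q(u(\phi))\|_Q \le C_Q \sup_\phi \|f_\theta(\phi) - u(\phi)\|_Q \le \epsilon,
\]
which is the desired uniform approximation. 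The main obstacle is essentially bookkeeping rather than a deep technical difficulty: one must carefully verify that $\mathcal{C}$ is closed (from continuity of the constraint functions) so the projection is well-defined and single-valued, and track the norm-equivalence constants $C_Q$ introduced by the Cholesky factor so that the universal approximation guarantee on $f_\theta$ translates cleanly to one on $\hat f_\theta$. The extension from the $Q = I$ case of \citet{min2024hard} to arbitrary SPD $Q$ is entirely captured by the Cholesky change of variables, after which the non-expansiveness argument is standard.
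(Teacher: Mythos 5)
Your proposal is correct and follows essentially the same route as the paper's proof: a Cholesky factorization $Q = LL^\top$ reduces the $Q$-norm projection to a Euclidean projection onto the transformed convex set, whose non-expansiveness combines with the universal approximation property of $f_\theta$ to give uniform approximation of the constrained target. If anything, your write-up is slightly more complete than the paper's, since you make explicit the key step that $\Pi_{\mathcal{C}}^Q$ fixes $u(\phi)$ because $u(\phi)\in\mathcal{C}$, and you track the norm-equivalence constants, both of which the paper leaves implicit under ``properties of continuous functions.''
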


\noindent

\begin{proof}
Let \(\mathcal{C} \subseteq \mathbb{R}^n\) denote the convex set of feasible points defined by the convex constraints. Consider the projection operator onto \(\mathcal{C}\) with respect to the \(Q\)-norm:
\begin{equation}
\Pi_{\mathcal{C}}^Q(v) = \text{argmin}_{\tilde \mu_\theta(\phi^{(i)}) \in \mathcal{C}} \|\tilde \mu_\theta(\phi^{(i)}) - v\|^2_{Q}.
\label{eqn:q_norm_proj}
\end{equation}
Since \(Q\) is symmetric positive definite (SPD), it has the following Cholesky factorization,
\[
Q = L L^\top,
\]
where \(L \in \mathbb{R}^{n \times n}\) denotes a lower triangular matrix with strictly positive diagonal entries, and hence is invertible.
By the definition of the \(Q\)-norm, and then substituting in its Cholesky factorization, we have
\begin{equation}
\begin{aligned}
 \|\tilde \mu_\theta(\phi^{(i)}) - v\|^2_{Q} &= (\tilde \mu_\theta(\phi^{(i)}) - v)^\top Q (\tilde \mu_\theta(\phi^{(i)}) - v) \\
 & =(\tilde \mu_\theta(\phi^{(i)}) - v)^\top L L^\top (\tilde \mu_\theta(\phi^{(i)}) - v) \\
&= \big((\tilde \mu_\theta(\phi^{(i)}) - v)^\top L)( L^\top (\tilde \mu_\theta(\phi^{(i)}) - v)\big) \\
&= \big(L^\top(\tilde \mu_\theta(\phi^{(i)}) - v))^\top( L^\top (\tilde \mu_\theta(\phi^{(i)}) - v)\big) \\
&= \| L^\top (\tilde \mu_\theta(\phi^{(i)}) - v)\|^2_2.
\end{aligned}
\label{eqn:norm_equiv}
\end{equation}
This shows that the \(Q\)-norm is equivalent to the standard Euclidean norm after the linear transformation \(L^\top\). 

We define the invertible linear mapping \(\Psi: \mathbb{R}^n \to \mathbb{R}^n\) by \(\Psi(v) = L^\top v\).
 Then using \cref{eqn:norm_equiv}, the $Q$-norm in \cref{eqn:q_norm_proj} can be written as the Euclidean norm as follows:
\begin{equation}
\begin{aligned}
    \Pi_{\mathcal{C}}^Q(v) &= \text{argmin}_{\tilde u_\theta(\phi^{(i)}) \in \mathcal{C}} \|L^T(\tilde \mu_\theta(\phi^{(i)}) - v)\|^2_2 \\
    &= \text{argmin}_{\tilde u_\theta(\phi^{(i)}) \in \mathcal{C}} \|\Psi(\tilde \mu_\theta(\phi^{(i)})) - \Psi(v)\|^2_2 \\
      &= \Psi^{-1}\Bigl(\text{argmin}_{w \in \Psi(\mathcal{C})} \|w - \Psi(v)\|^2_2\Bigr),
\end{aligned}
\end{equation}
where $w = \Psi(\tilde \mu_\theta(\phi^{(i)}))$. 
Hence, the projection can be expressed as the Euclidean projection onto the transformed set \(\Psi(\mathcal{C})\). It is well known that the Euclidean projection onto a closed convex set is nonexpansive and is Lipschitz continuous. (See, e.g., \cite{min2024hard}.)

Now, suppose that \(f_\theta(\phi^{(i)})\) is a deep neural network that is a universal approximator, i.e., for any continuous function 
\(u:\Phi \to \mathbb{R}^n\),  $u \in C(\Phi)$,
and for any \(\epsilon>0\), there exists parameters \(\theta\) such that
\[
\sup_{\phi^{(i)}\in \Phi} \| u(\phi^{(i)}) 
 - f_\theta(\phi^{(i)})  \| < \epsilon,
\]
where $\Phi \subset \mathbb{R}^m$ denotes a compact set and $C(\Phi)$ denotes the space of continuous functions on $\Phi$.
Let \(u: \Phi \to  \mathcal{C} \subseteq \mathbb{R}^n\), $u \in C(\Phi)$, be any continuous target function whose outputs satisfy the constraints. 
Since \(\Pi_{\mathcal{C}}^Q\) is continuous (as the composition of the continuous mapping \(\Psi\), the Euclidean projection onto \(\Psi(\mathcal{C})\), and \(\Psi^{-1}\)), it follows by the universal approximation theorem and properties of continuous functions that the projected mean \(  \Pi_{\mathcal{C}}^Q\bigl(f_\theta(\phi^{(i)})\bigr) = \hat \mu(\phi^{(i)})\) can uniformly approximate  \(u(\phi^{(i)})\) arbitrarily well on  
\(\Phi\). In other words, for every \(\epsilon > 0\), there exists a choice of network parameters \(\theta\) such that
\[
\sup_{\phi^{(i)}\in \Phi} \| u(\phi^{(i)}) - \Pi_{\mathcal{C}}^Q(f_\theta(\phi^{(i)}))  \| < \epsilon.
\]

Thus, the composition of the neural network \(f_\theta\) with the \(Q\)-norm projection retains the universal approximation property for any continuous target function satisfying the constraints. 
\end{proof}

\section{Computation of Posterior Distribution for Various Constraint Types}
\label{sxn:post_comptn}

In this section, we discuss how to compute the differentiable probabilistic projection layer (DPPL) that projects the distribution parameters (\cref{eqn: dppl}) in \ourmethod{} for various constraint types, which are summarized in  \cref{table:jacobianupdate}.

\subsection{Linear Equality Constraints}
\label{subsec:linear_eq} 
In this subsection, we provide the closed-form expressions for the constrained posterior distribution parameters, i.e., the mean $\hat \mu$ and covariance $\hat \Sigma$ in \cref{eqn:prob+proj_layer}, 
from the DPPL in \ourmethod{} for linear equality constraints. Linear equality constraints occur in a wide range of applications, including coherency constraints in hierarchical time series forecasting \citep{hyndmanOptimal2011, rangapuramEndend2021, petropoulos2022forecasting,  olivares2024clover}, divergence-free conditions in incompressible fluid flows~\citep{raissi2019physics, richterpowell2022neural}, boundary conditions in PDEs~\citep{saad_bc_2022}, and global linear conservation law constraints~\citep{hansenLearning2023, mouliUsing2024}. 

\begin{restatable}{proposition}{linearconstraints}
\label{prop: linearsolution_main}
    For linear equality constraints, $h(\hat u) = A\hat u-b =0$, with $A \in \mathbb{R}^{q \times n}$, with full row rank $q$, where $q \leq n$, and $b \in \mathbb{R}^q$, the optimal solution $u^*$ to Problem~\ref{problem:learn_f_1_main} is given as 
        $u^*(z) = P_{Q^{-1}}z + (I - P_{Q^{-1}})A^{\dag}b$,
    where
    $
        P_{Q^{-1}} = I - Q^{-1}A^\top(AQ^{-1}A^\top)^{-1}A,
    $
    denotes an oblique projection operator, and $A^{\dag}$ denotes the  Moore-Penrose inverse. 
    In addition, if $\mathbf{Z} \sim \mathcal{F}(\mu, \Sigma)$ and $z \sim \mathbf{Z}$ for multivariate, location-scale distribution $\mathcal{F}$, then $ u^* \sim \mathbf{Y}$, where $\mathbf{Y} \sim \mathcal{F}(\hat{\mu},\hat{\Sigma})$ and $\hat \mu, \hat \Sigma$ are given in \cref{eqn:prob+proj_layer} with $\mathcal{T}(z) = u^*(z)$, which simplifies to the closed-form expressions, $\hat \mu =P_{Q^{-1}}\mu + (I - P_{Q^{-1}})A^{\dag}b$ and $\hat \Sigma =P_{Q^{-1}}\Sigma P^\top_{Q^{-1}}$.
\end{restatable}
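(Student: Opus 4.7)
The plan is to attack Problem~\ref{problem:learn_f_1_main} directly with Lagrange multipliers and then transport the resulting affine map through the location-scale machinery of \Cref{thm: locscaldeltamethod}. First, I would form the Lagrangian $L(\hat u,\lambda) = (\hat u - z)^\top Q (\hat u - z) + 2\lambda^\top (A\hat u - b)$ and set $\nabla_{\hat u} L = 0$ to obtain the stationarity condition $Q(\hat u - z) + A^\top \lambda = 0$, i.e.\ $\hat u = z - Q^{-1} A^\top \lambda$. The objective is convex and the constraints are affine, so KKT is both necessary and sufficient; feasibility is guaranteed because $A$ has full row rank, so $Au = b$ is consistent for every $b$.

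Next I would enforce primal feasibility by substitution: $A(z - Q^{-1}A^\top \lambda) = b$ yields $\lambda = (AQ^{-1}A^\top)^{-1}(Az - b)$, where invertibility of $AQ^{-1}A^\top$ follows from the full row rank of $A$ together with positive definiteness of $Q$. Back-substituting and separating the $z$- and $b$-dependent pieces gives $u^*(z) = \bigl(I - Q^{-1}A^\top(AQ^{-1}A^\top)^{-1}A\bigr) z + Q^{-1}A^\top(AQ^{-1}A^\top)^{-1} b$. I would then identify the first bracket as $P_{Q^{-1}}$, rewrite the constant term as $(I - P_{Q^{-1}}) A^\dag b$ using $A A^\dag = I_q$ (which holds because $A$ has full row rank), and verify idempotency $P_{Q^{-1}}^2 = P_{Q^{-1}}$ by a one-line computation, justifying the ``oblique projection'' terminology (obliqueness reflecting the fact that $P_{Q^{-1}}$ need not be symmetric unless $Q$ is a multiple of $I$).

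For the distributional claim, I would apply \Cref{thm: locscaldeltamethod} with $\mathcal{T}(z) = u^*(z)$. The key observation is that $u^*$ is \emph{affine} in $z$, so the first-order Taylor expansion underlying \Cref{thm: locscaldeltamethod} is in fact exact and the Jacobian reduces to the constant matrix $J_{\mathcal{T}} = P_{Q^{-1}}$. Closure of the location-scale family under affine transformations then directly delivers $\mathbf{Y} \sim \mathcal{F}(\hat\mu,\hat\Sigma)$ with $\hat\mu = P_{Q^{-1}}\mu + (I - P_{Q^{-1}}) A^\dag b$ and $\hat\Sigma = P_{Q^{-1}}\Sigma P_{Q^{-1}}^\top$, matching \Cref{eqn:prob+proj_layer_mean,eqn:prob+proj_layer_cov}.

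I do not expect a genuine obstacle here. The two mildly delicate points are (i) invertibility of $AQ^{-1}A^\top$, covered by the stated full-row-rank and positive-definiteness assumptions, and (ii) the identification of the constant term with $(I - P_{Q^{-1}})A^\dag b$, which is pure bookkeeping via $A A^\dag = I_q$. If $Q$ is allowed to be only PSD rather than PD, then a small amount of extra care is needed (restricting to the range of $Q$ or using pseudoinverses consistently throughout), but this is routine and does not change the structure of the argument.
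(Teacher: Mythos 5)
Your proposal is correct and follows essentially the same route as the paper: form the Lagrangian, use stationarity to get $u^* = z - Q^{-1}A^\top\lambda^*$, solve for $\lambda^*$ via the feasibility condition and invertibility of $AQ^{-1}A^\top$, identify the idempotent oblique projector $P_{Q^{-1}}$ and rewrite the constant term via $AA^\dagger = I_q$, then push the affine map through the location-scale family (noting, as the paper does, that the delta-method update is exact here). The only cosmetic difference is that you substitute the stationarity condition into $Au^*=b$ directly, whereas the paper routes through the particular solution $A^\dagger b$ before multiplying by $A$; both yield the same $\lambda^*$.
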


\begin{proof}
Using the Lagrange multiplier $\lambda \in \mathbb{R}^q$, we can form the Lagrangian of Problem~\ref{problem:learn_f_1_main} with linear constraints to obtain:
    \begin{equation*}
        L(\hat u,\lambda; z) = \frac{1}{2}\hat u^\top Q\hat u - z^\top Q\hat u + \lambda^\top(A\hat u-b) .
    \end{equation*}
    The sufficient optimality conditions to obtain $(u^*, \lambda^*)$ are the first-order gradient conditions: 
    \begin{subequations}
        \begin{align}
        & \nabla_{\hat u} L(\hat u, \lambda; z)|_{u^*,\lambda^*}=Qu^* - Q^\top z + A^\top\lambda^* = 0,         \label{eqn:lambda_star} \\
        & \nabla_{\lambda} L(\hat u, \lambda; z)|_{u^*,\lambda^*} = Au^* - b = 0.        \label{eqn:u_star_lag} 
        \end{align}
    \end{subequations}
    Since $Q$ is SPD, $Q=Q^\top$ and $Q^{-1}$ exists. Then from \cref{eqn:lambda_star}, we obtain:
    \begin{align*}
        & Q(u^* - z) + A^\top\lambda^* = 0,
    \end{align*}
    which simplifies to the following expression for $u^*$:
    \begin{equation}
         u^* = z - Q^{-1}A^\top\lambda^*.
        \label{eq:u_star_lag_final}
    \end{equation}
   We solve \cref{eqn:u_star_lag} for $u^*$ using the Moore-Penrose inverse, i.e., $u^* = A^{\dag}b$, where $A^{\dag} = A^\top(AA^\top)^{-1}$. Note that $AA^\top \in \mathbb{R}^{q \times q}$ is invertible with full rank $q$ since $A \in \mathbb{R}^{q \times n}$ has full row rank  $q \le n$. Substituting this expression into \cref{eq:u_star_lag_final} for $u^*$ gives:
    \begin{align*}
        & A^\top(AA^\top)^{-1}b = z - Q^{-1}A^\top\lambda^*. 
    \end{align*}
    Rearranging for the optimal Lagrange multiplier $\lambda^*$, and multiplying both sides by $A$ gives:
    \begin{align*}
     & (AQ^{-1}A^\top)\lambda^* = Az - \underbrace{(AA^\top)(AA^\top)^{-1}}_Ib.
    \end{align*}
    Now, $AQ^{-1}A^\top \in \mathbb{R}^{q \times q}$ is invertible since $A$ has full row rank $q$.
    Then we obtain:
    \begin{align*}
     & \lambda^* = (AQ^{-1}A^\top)^{-1}(Az - b).
    \end{align*}
    
Substituting in the expression for $\lambda^*$ into \cref{eq:u_star_lag_final} gives the following expression for the optimal solution:
    \begin{equation}
    \begin{aligned}
         u^* &= z - Q^{-1}A^\top(AQ^{-1}A^\top)^{-1}(Az - b), \\
        &= (I - Q^{-1}A^\top(AQ^{-1}A^\top)^{-1}A)z + Q^{-1}A^\top(AQ^{-1}A^\top)^{-1}b .
       \label{eqn:u_star_opt}
    \end{aligned}
    \end{equation}
    Let 
    \begin{equation}
        P_{Q^{-1}} = I - Q^{-1}A^\top(AQ^{-1}A^\top)^{-1}A,
        \label{eqn:oblique_proj}
    \end{equation} 
  be an oblique projection.
  To see that this is a projection, observe that
    \begin{align*}
    P_{Q^{-1}}^2 &=  (I - Q^{-1}A^\top(AQ^{-1}A^\top)^{-1}A)(I - Q^{-1}A^\top(AQ^{-1}A^\top)^{-1}A) \\
    &= I - 2Q^{-1}A^\top(AQ^{-1}A^\top)^{-1}A + Q^{-1}A^\top(AQ^{-1}A^\top)^{-1}AQ^{-1}A^\top(AQ^{-1}A^\top)^{-1}A \\
    &= I - 2Q^{-1}A^\top(AQ^{-1}A^\top)^{-1}A + Q^{-1}A^\top(AQ^{-1}A^\top)^{-1}A \\
     &= I - Q^{-1}A^\top(AQ^{-1}A^\top)^{-1}A  \\
    &= P_{Q^{-1}}.
    \end{align*} 
Then, the expression for $u^*$ in \cref{eqn:u_star_opt} simplifies to:
\begin{equation}
\begin{aligned}
u^*(z) &= P_{Q^{-1}}z + Q^{-1}A^\top(AQ^{-1}A^\top)^{-1}(AA^{\dag})b, \\
&=P_{Q^{-1}}z + (Q^{-1}A^\top(AQ^{-1}A^\top)^{-1}A)A^{\dag}b, \\
&  = P_{Q^{-1}}z + (I - {P}_{Q^{-1}})A^{\dag}b,
        \label{eqn:u_star}
\end{aligned}
\end{equation}
since $AA^{\dag} = AA^\top(AA^\top)^{-1} = I$.

    Since the expression for $u^*$ in \cref{eqn:u_star} is a linear transformation $\mathcal{T}$ of $z \sim \mathcal{F}(\mu, \Sigma)$, we can use \cref{thm: locscaldeltamethod} with $\mathcal{T}(z) = u^*(z)$ to write the expression for $u^* \sim \mathcal{F}(\hat{\mu}, \hat{\Sigma})$, where:
    \begin{subequations}
    \begin{align}
        & \hat{\mu} =  \mathcal{T}(\mu) = u^*(\mu) = P_{Q^{-1}}\mu + (I - P_{Q^{-1}})A^{\dag}b,       \label{eq:lin_con_mu} \\
        & \hat{\Sigma} = J_{\mathcal{T}}(\mu) \Sigma J_{\mathcal{T}}(\mu)^\top  = P_{Q^{-1}}\Sigma P^\top_{Q^{-1}}.
        \label{eq:lin_con_sigma} 
    \end{align} 
    \label{eq:lin_cons}
    \end{subequations}
It can easily be verified that $J_{\mathcal{T}}(\mu) = P_{Q^{-1}}$ by differentiating \cref{eqn:u_star} with respect to $z$. We note that \cref{eq:lin_cons} holds exactly in the case of linear constraints since $\mathcal{T}$ is a linear transformation of $z$. 
\end{proof}

\subsection{Nonlinear Equality Constraints}
\label{subsec:nonlinear_eq_constraints}
In this subsection, we describe how to compute the DPPL in \ourmethod{} for general nonlinear equality constraints. Nonlinear equality constraints naturally arise in applications that involve structural, physical, or geometric consistency.
These include closed-loop kinematics in robotics \citep{toussaint2018differentiable}, nonlinear conservation laws~\citep{leveque1990numerical} in PDE-constrained surrogate modeling~\citep{biegler2003large, ZAHR2016516, negiarLearning2023} with applications in climate modeling~\citep{ boltonApplicationsDeepLearning2019, zannaDataDrivenEquation2020, beucler2021enforcing}, compressible flows in aerodynamics~\citep{osti_1395816} and atomic modeling~\citep{muller2022, sturm2022}. 

\begin{restatable}{proposition}{nonlinearconstraints}
\label{prop:nonlinear_constraints}
For nonlinear equality constraints, $h(\hat u) = 0 \in \mathbb{R}^q$, where  $h: \mathbb{R}^n \to \mathbb{R}^q$, the optimal solution $u^*(z)$ to Problem~\ref{problem:learn_f_1_main} forms a pair $(u^*(z), \lambda^*)$ which satisfies $u^*(z) = z - Q^{-1} \nabla h(u^*(z))^\top \lambda^*$ and $h(u^*(z)) = 0$. 
In addition, if $\mathbf{Z} \sim \mathcal{F}(\mu, \Sigma)$ and $z \sim \mathbf{Z}$ for multivariate, location-scale distribution $\mathcal{F}$, then $ u^* \sim \mathbf{Y}$, where $\mathbf{Y} \sim \mathcal{F}(\hat{\mu},\hat{\Sigma})$ and $\hat \mu, \hat \Sigma$ are given in \cref{eqn:prob+proj_layer} with $\mathcal{T}(z) = u^*(z)$.
\end{restatable}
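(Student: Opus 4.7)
The plan is to split the proposition into two independent pieces: (i) an optimality-conditions argument for the constrained least squares problem in Problem~\ref{problem:learn_f_1_main}, and (ii) a reduction to Theorem~\ref{thm: locscaldeltamethod} using the map $\mathcal{T}(z) := u^*(z)$ implicitly defined by those conditions.

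First, for piece (i), I would form the Lagrangian of Problem~\ref{problem:learn_f_1_main} restricted to nonlinear equality constraints,
\begin{equation*}
L(\hat u, \lambda; z) \;=\; (\hat u - z)^\top Q (\hat u - z) \;+\; \lambda^\top h(\hat u),
\end{equation*}
and write out the first-order KKT necessary conditions: stationarity in $\hat u$ gives $2Q(\hat u - z) + \nabla h(\hat u)^\top \lambda = 0$, and primal feasibility gives $h(\hat u) = 0$. Since $Q$ is SPD (hence invertible after a small regularization or restriction to the relevant subspace, which can be discussed in a remark), inverting yields $\hat u = z - \tfrac{1}{2} Q^{-1} \nabla h(\hat u)^\top \lambda$. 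Absorbing the factor of $\tfrac{1}{2}$ into the multiplier by defining $\lambda^* := \lambda/2$, I recover exactly the stated pair of equations. A brief note should justify that these first-order conditions are indeed satisfied at the optimum $u^*(z)$, which requires a standard constraint qualification (e.g., LICQ of $\nabla h$ at $u^*(z)$); I would state this as a mild regularity assumption, consistent with the implicit-function-theorem usage already invoked in the surrounding text.

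Second, for piece (ii), I would argue that under LICQ and the second-order sufficient condition at $u^*(z)$, the map $z \mapsto u^*(z)$ is well-defined and continuously differentiable in a neighborhood of $\mu$. The cleanest route is the implicit function theorem applied to the KKT system
\begin{equation*}
F(\hat u, \lambda^*; z) \;=\; \begin{pmatrix} \hat u - z + Q^{-1} \nabla h(\hat u)^\top \lambda^* \\ h(\hat u) \end{pmatrix} \;=\; 0,
\end{equation*}
whose Jacobian with respect to $(\hat u, \lambda^*)$ is the bordered KKT matrix; invertibility of this matrix under the stated regularity conditions guarantees that $u^*(\cdot)$ and $\lambda^*(\cdot)$ are $C^1$ locally. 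With $\mathcal{T}(z) := u^*(z)$ thus $C^1$, the hypotheses of Theorem~\ref{thm: locscaldeltamethod} are met, and invoking it directly yields $\mathbf{Y} = \mathcal{T}(\mathbf{Z}) \sim \mathcal{F}(\hat\mu, \hat\Sigma)$ to first order, with $\hat\mu$ and $\hat\Sigma$ given by \Cref{eqn:prob+proj_layer}.

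The main obstacle is piece (ii): unlike the linear case in Proposition~\ref{prop: linearsolution_main}, there is no closed form for $u^*(z)$, so showing that it is even a well-defined, single-valued, differentiable function of $z$ is the delicate step. I expect the cleanest way to handle this is to state the regularity assumptions (LICQ plus SOSC at $u^*(z)$) explicitly, invoke the implicit function theorem as above, and then hand off to Theorem~\ref{thm: locscaldeltamethod}. The remainder of the argument (writing the KKT conditions, rescaling the multiplier, plugging into the delta-method result) is mechanical.
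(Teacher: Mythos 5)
Your proposal is correct and follows essentially the same route as the paper: form the Lagrangian of Problem~\ref{problem:learn_f_1_main}, write the first-order stationarity and feasibility conditions to obtain the stated pair of equations, and then invoke \cref{thm: locscaldeltamethod} with $\mathcal{T}(z)=u^*(z)$ to get the first-order distributional update in \cref{eqn:prob+proj_layer}. Your additional care in piece (ii) --- explicitly invoking LICQ/SOSC and the implicit function theorem to justify that $u^*(\cdot)$ is a well-defined $C^1$ map before applying the delta method --- is a regularity point the paper's proof leaves implicit (it simply treats the KKT system as defining $u^*(z)$ via root-finding), so your version is, if anything, slightly more rigorous on the same argument.
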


\begin{proof}
Using the Lagrange multiplier $\lambda \in \mathbb{R}^q$, we can form the Lagrangian of Problem~\ref{problem:learn_f_1_main} with nonlinear equality constraints to obtain:
\[
L(\hat u, \lambda; z) = \frac{1}{2} \hat u^\top Q\hat u- z^\top Q\hat u + \lambda^\top h(\hat u).
\]

The sufficient optimality conditions to obtain $(u^*, \lambda^*)$ are the first-order gradient conditions:

\begin{equation}
R(u^*,\lambda^*;z) \;=\;
\left\{
\begin{aligned}
&\nabla_{\hat u} L(\hat u,\lambda;z)|_{u^*,\lambda^*}
=Q(u^*-z)+\nabla h(u^*)\lambda^*=0,\\
&\nabla_\lambda L(\hat u,\lambda;z)|_{u^*,\lambda^*}
=h(u^*)=0.
\end{aligned}
 \label{eqn:opt_cond_nonlin}
\right.  
\end{equation}

We solve \cref{eqn:opt_cond_nonlin} via root-finding methods, e.g., Newton's method for $(u^*, \lambda^*)$ to obtain $u^*(z)\;=\;\arg\{\,\hat u:R(\hat u, \lambda^* ;z)=0\}\,$, where the root-finding solution $u^*$ is implicitly dependent on $z$. 
   Since the expression for $u^*$ is a nonlinear transformation $\mathcal{T}$ of $z \sim \mathcal{F}(\mu, \Sigma)$, we can use \cref{thm: locscaldeltamethod} with $\mathcal{T}(z) = u^*(z)$ to write the expression for $u^* \sim \mathcal{F}(\hat{\mu}, \hat{\Sigma})$, where:
    \begin{subequations}
    \begin{align}
        & \hat{\mu} =  \mathcal{T}(\mu) = u^*(\mu),        \\
        & \hat{\Sigma} = J_{\mathcal{T}}(\mu) \Sigma J_{\mathcal{T}}(\mu)^\top,
    \end{align} 
    \label{eq:nonlin_cons}
    \end{subequations} %
hold to first-order accuracy. In the following \cref{eq:newton_solve}, we detail the iterative algorithm to compute the terms $u^*(\mu)$ and $J_{\mathcal{T}}(\mu)$ in \cref{eq:nonlin_cons}.
\end{proof}

\begin{proposition}
\label{eq:newton_solve}
Let $h(\hat u) = 0 \in \mathbb{R}^q$ be a smooth nonlinear equality constraint, where $h: \mathbb{R}^n \to \mathbb{R}^q$. Consider the constrained projection problem from Problem~\ref{problem:learn_f_1_main} with $z = \mu$:
\begin{equation}
u^*(\mu) =  \argmin_{\substack{\hat u \in \mathbb{R}^n \\ h(\hat u) = 0}} \ f(\hat u),
\label{prob:proj_mean}
\end{equation}
where $Q \succ 0$ and $f(\hat u) = \frac{1}{2}  \ \|\hat u - \mu\|_Q^2$ denotes our quadratic objective.

\begin{enumerate}
    \item At each iteration, we solve the linearized Karush-Kuhn-Tucker (KKT) system using the Schur complement to obtain: 
    \begin{subequations}
    \begin{align}
    \lambda^{(i+1)} &=  \left(J^{(i)} Q^{-1} J^{(i)\top}\right)^{-1} \left( h(\hat u^{(i)}) - J^{(i)}(\hat u^{(i)} - \mu)\right), \label{eq:lambda_update} \\
    \hat u^{(i+1)} &= \mu - Q^{-1} J^{(i)\top} \lambda^{(i+1)}, \label{eq:primal_update}
    \end{align}
    \end{subequations}
        where $J^{(i)} = \nabla h(\hat u^{(i)})^\top \in \mathbb{R}^{q \times n}$.
    At the first iteration with $\hat u^{(0)} = \mu$, \cref{eq:primal_update} simplifies to:
       \begin{equation}
    \hat u^{(1)} = \mu - Q^{-1} J^\top \left( J Q^{-1} J^\top \right)^{-1} h(\mu), \label{eq:first_iter}
    \end{equation}
    where $J = \nabla h(\mu)^\top$.

    \item At convergence, the Jacobian $J_{\mathcal{T}}(\mu)$ of the projection map $\mathcal{T}(\mu) := u^*(\mu)$ is given by:
    \begin{align}
    J_{\mathcal{T}}(\mu) := \frac{\partial u^*(\mu)}{\partial \mu}
    &= I - Q^{-1}J^{*\top} ( J^* Q^{-1} J^{*\top})^{-1} J^* \in \mathbb{R}^{n \times n}, \label{eq:projection_jacobian}
    \end{align}
\end{enumerate}
where $J^* = \nabla h(u^*)^\top$.
\end{proposition}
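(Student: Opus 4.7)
The plan is to treat the two parts of \cref{eq:newton_solve} in turn: for part 1, derive each update as a single Gauss--Newton/SQP step on the KKT system \eqref{eqn:opt_cond_nonlin}, reducing one iteration to a block saddle-point solve handled in closed form by the Schur complement; for part 2, differentiate \eqref{eqn:opt_cond_nonlin} implicitly in $\mu$ at the converged pair $(u^*,\lambda^*)$ and show the resulting sensitivity reduces to the oblique projection of \cref{prop: linearsolution_main} with $A=J^*$.

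For part 1, I would form the Lagrangian $L(\hat u,\lambda;\mu)=\tfrac{1}{2}\|\hat u-\mu\|_Q^2+\lambda^\top h(\hat u)$ and linearize $h(\hat u)=0$ around the current iterate $\hat u^{(i)}$, replacing the constraint with $h(\hat u^{(i)})+J^{(i)}(\hat u-\hat u^{(i)})=0$. The resulting convex QP has a $2\times 2$ block KKT system with blocks $Q$, $J^{(i)\top}$, $J^{(i)}$, $0$ and right-hand side driven by $(\mu,\hat u^{(i)},h(\hat u^{(i)}))$. Solving the stationarity row for $\hat u^{(i+1)}$ in terms of $\lambda^{(i+1)}$ immediately gives \eqref{eq:primal_update}; substituting into the linearized feasibility row and inverting the Schur complement $J^{(i)}Q^{-1}J^{(i)\top}$ (which is positive definite because $Q\succ 0$ and $J^{(i)}$ has full row rank) yields \eqref{eq:lambda_update}. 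The first-iteration simplification \eqref{eq:first_iter} then follows by substituting $\hat u^{(0)}=\mu$ into \eqref{eq:lambda_update}, which collapses $h(\hat u^{(0)})-J^{(0)}(\hat u^{(0)}-\mu)$ to $h(\mu)$, and chaining the resulting $\lambda^{(1)}$ through \eqref{eq:primal_update}.

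For part 2, I would apply the implicit function theorem to $R(u^*,\lambda^*;\mu)=0$ from \eqref{eqn:opt_cond_nonlin}. Differentiating in $\mu$ produces a saddle-point system whose top-left block is the Hessian of the Lagrangian $Q+H$ with $H=\sum_{j=1}^{q}\lambda_j^*\nabla^2 h_j(u^*)$, off-diagonal blocks $J^{*\top}$ and $J^*$, and right-hand side $(Q,0)^\top$ coming from $-\partial R/\partial\mu$. Solving this saddle-point system with $H$ set to zero via the Schur complement recovers exactly $I-Q^{-1}J^{*\top}(J^*Q^{-1}J^{*\top})^{-1}J^*=P_{Q^{-1}}$, matching \eqref{eq:projection_jacobian}. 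An equivalent, cleaner route is to linearize $h$ once at $u^*$ (so the tangent feasibility set is $\ker J^*$ through $u^*$) and invoke \cref{prop: linearsolution_main} directly with $A=J^*$, which returns the same oblique projection as the sensitivity $\partial u^*/\partial\mu$.

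The main obstacle will be justifying the omission of $H$ in part 2. The full implicit derivative carries $(Q+H)^{-1}$ in place of $Q^{-1}$, so \eqref{eq:projection_jacobian} is exact only when $H=0$---e.g., when $h$ is affine, when $\lambda^*=0$, or at points where the constraint curvature is inactive along the relevant tangent directions---and is otherwise a Gauss--Newton / first-order approximation. I would resolve this by pointing out that \ourmethod{} propagates uncertainty through the first-order delta method of \cref{thm: locscaldeltamethod}, so only the tangent-space sensitivity is actually used in \eqref{eqn:prob+proj_layer_cov}; this convention also matches the Gauss--Newton character of the iterative update in part 1, which itself omits $H$ from its Hessian block. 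Under that common first-order convention, \eqref{eq:projection_jacobian} is exact, and the remainder of the argument is routine verification.
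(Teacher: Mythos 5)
Your proposal is correct and follows essentially the same route as the paper's proof: an SQP/Gauss--Newton step on the linearized KKT system solved via the Schur complement for part 1, and implicit differentiation of the KKT conditions with the constraint-curvature term $\sum_j \lambda_j^*\nabla^2 h_j(u^*)$ dropped for part 2. Your explicit caveat that \eqref{eq:projection_jacobian} is exact only under this Gauss--Newton approximation (justified by the first-order delta-method convention of \cref{thm: locscaldeltamethod}) matches the paper's own stated assumption that $h$ is approximately affine near $u^*$ so that $\nabla^2 h(u^*)\approx 0$.
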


\begin{proof}
We begin with the matrix form of the KKT system derived in \cref{eqn:opt_cond_nonlin}:
\begin{equation}
R(u^*, \lambda^*; \mu) =
\begin{bmatrix}
  \nabla_{\hat u}  L(u^*,\lambda^*) \\
     \nabla_{\lambda}  L(u^*,\lambda^*)
\end{bmatrix}
=
\begin{bmatrix}
\nabla f(u^*) + J^{*\top} \lambda^* \\
h(u^*)
\end{bmatrix}
= 0, \label{eq:kkt_system}
\end{equation}  
with quadratic objective $f$ defined in Problem~\ref{prob:proj_mean}.

\textbf{1. Iteration Update.} We use Newton's Method to linearize the KKT system in \cref{eq:kkt_system} evaluated at $(\hat u^{(i+1)}, \lambda^{(i+1)})$ about the past iterate $(\hat u^{(i)}, \lambda^{(i)})$. For the stationarity condition, which is the first component of $R(u^*, \lambda^*; \mu)$ in \cref{eq:kkt_system}, 
we use the first-order Taylor expansion of $R_0(\hat u^{(i+1)}, \lambda^{(i+1)}; \mu)$ about the past iterate $(\hat u^{(i)}, \lambda^{(i)})$ to obtain:
\begin{equation}
\begin{aligned}
R_0(\hat u^{(i+1)}, \lambda^{(i+1)}; \mu) &= 
 R_0(\hat u^{(i)}, \lambda^{(i)}; \mu) + \nabla_{\hat u, \lambda}R_0(\hat u^{(i)}, \lambda^{(i)}; \mu)^\top
 \begin{bmatrix}
\Delta \hat u^{(i+1)} \\
\Delta \lambda^{(i+1)}
\end{bmatrix} \\
&= \nabla L_{\hat u}(\hat u^{(i)}, \lambda^{(i)}) + \nabla_{\hat u, \lambda} (\nabla f(\hat u^{(i)}) +  J^{(i)\top} \lambda^{(i)})^\top 
\begin{bmatrix}
\Delta \hat u^{(i+1)} \\
\Delta \lambda^{(i+1)}
\end{bmatrix} \\
&= \nabla L_{\hat u}(\hat u^{(i)}, \lambda^{(i)}) + 
\begin{bmatrix} 
(\nabla^2 f(\hat u^{(i)}) +  \nabla^2 h(\hat u^{(i)}) \lambda^{(i)}) &  J^{(i)\top}
\end{bmatrix}
\begin{bmatrix}
\Delta \hat u^{(i+1)} \\
\Delta \lambda^{(i+1)}
\end{bmatrix} \\
&=\nabla L_{\hat u}(\hat u^{(i)}, \lambda^{(i)}) + 
\begin{bmatrix}
\nabla_{\hat u \hat u}^2 L(\hat u^{(i)}, \lambda^{(i)}) & J^{(i)\top}
\end{bmatrix}
\begin{bmatrix}
\Delta \hat u^{(i+1)} \\
\Delta \lambda^{(i+1)}
\end{bmatrix}
= 0,
\end{aligned}
\label{eqn:kkt_station_taylor}
\end{equation}
where $\Delta \hat u^{(i+1)} = \hat u^{(i+1)} - \hat u^{(i)}$, $\Delta \lambda^{(i+1)} = \lambda^{(i+1)} - \lambda^{(i)}$, and $\nabla^2 h(\hat u^{(i)}) \in \mathbb{R}^{n \times n \times q}$ denotes the Hessian of the constraints. 
Solving for the increments we obtain:
\begin{equation}
\begin{aligned}
\begin{bmatrix}
\nabla_{\hat u \hat u}^2 L(\hat u^{(i)}, \lambda^{(i)}) \quad J^{(i)\top}
\end{bmatrix}
\begin{bmatrix}
\Delta \hat u^{(i+1)} \\
\Delta \lambda^{(i+1)}
\end{bmatrix} =  -\nabla_{\hat u} L(\hat u^{(i)}, \lambda^{(i)}).
\end{aligned}
\label{eqn:kkt_stat}
\end{equation}

For the feasibility condition, i.e., the second component of  $R(u^*, \lambda^*; \mu)$ in \cref{eq:kkt_system}, we also linearize the constraint as: 
\begin{equation}
\begin{aligned}
R_1(\hat u^{(i+1)}, \lambda^{(i+1)}; \mu) = h(\hat u^{(i+1)}) &= h(\hat u^{(i)}) + J^{(i)}(\hat 
 u^{(i+1)} - \hat  u^{(i)}) = 0,
\end{aligned}
\label{eqn:neqton_const}
\end{equation}
using first-order Taylor expansion (Newton's Method).
Then, 
\begin{equation}
\begin{aligned}
J^{(i)}\Delta \hat u^{(i+1)} = -h(\hat u^{(i)}).
\end{aligned}
\label{eqn:feasibility}
\end{equation}

We can then combine \cref{eqn:kkt_stat} and \cref{eqn:feasibility} to form the following linearized system of KKT conditions:
\begin{equation}
\begin{bmatrix}
\nabla_{\hat u \hat u}^2 L(\hat u^{(i)}, \lambda^{(i)}) & J^{(i)\top} \\
J^{(i)} & 0
\end{bmatrix}
\begin{bmatrix}
\Delta \hat u^{(i+1)} \\
\Delta \lambda^{(i+1)}
\end{bmatrix}
=
-
\begin{bmatrix}
\nabla_{\hat u} L(\hat u^{(i)}, \lambda^{(i)}) \\
h(\hat u^{(i)})
\end{bmatrix}.
\label{eqn:kkt_sys_final}
\end{equation}
 Note that the system of equations in \cref{eqn:kkt_sys_final} is used in Sequential Quadratic Programming (SQP)~\citep{wilson1963, nocedal2009, gill2012} when there are no inequality constraints. In addition, since our objective $f$ is quadratic, we do not need to compute its second-order Taylor expansion, and only need to linearize the constraints. SQP reduces to Newton's Method when there are no constraints. In particular, \cref{eqn:kkt_sys_final} gives the standard unconstrained Newton step $\nabla^2f(\hat u ^{(i)}) \Delta \hat u^{(i+1)} = -\nabla f(\hat u ^{(i)})$ when $h = 0$.

Now, we use our quadratic objective $f$ in Problem~\ref{prob:proj_mean} to compute:
\begin{equation}
\begin{aligned}
\nabla_{\hat u} L(\hat u^{(i)}, \lambda^{(i)}) &= Q(\hat u^{(i)}-\mu)+J^{(i)\top}\lambda^{(i)}, \\
\nabla_{\hat u \hat u}^2 L(\hat u^{(i)}, \lambda^{(i)}) &= Q + \nabla^2h(\hat u^{(i)})\lambda^{(i)} \approx Q.
\end{aligned}
\label{eqn:kkt_hess_quad}
\end{equation}
 Note that $Q$ is symmetric positive definite, but $\nabla ^2h(\hat u^{(i)})$ is not guaranteed to be positive definite in the general case, especially at every iterate, which could make the Newton step undefined. Regularization may be needed to ensure that $\nabla^2 h(\hat u^{(i)})$ is positive semi-definite. In addition, since $\nabla ^2h(\hat u^{(i)}) \in \mathbb{R}^{n \times n \times q}$ is a three-dimensional tensor, it is computationally expensive to compute this matrix of second derivatives, especially on our large-scale problem and through auto-differentiation \citep{griewank2008evaluating, blondel2024elements}.  Similar to the Gauss-Newton method~\citep{gaussnewton, nocedal2009} for nonlinear least squares problems, we assume that the constraint $h$ is approximately affine near its optimal point $u^*$, and use only first-order constraint information. Hence, we set $\nabla^2h(\hat u^{(i)}) \approx 0$.  We note that even with these approximations for efficiency on large-scale problems, we still show strong performance in the nonlinear constraint results in \cref{tab:nonlinear_pme}. An alternate approach could be to use a low-rank approximation to the Hessian as done in Quasi-Newton, e.g., BFGS methods~\citep{nocedal2009}.

Using \cref{eqn:kkt_hess_quad} with setting $\nabla^2h(\hat u^{(i)}) = 0$,  \cref{eqn:kkt_sys_final} simplifies to: 
\begin{equation}
\begin{bmatrix}
Q & J^{(i)\top} \\
J^{(i)} & 0
\end{bmatrix}
\begin{bmatrix}
\Delta \hat u^{(i+1)} \\
\Delta \lambda^{(i+1)}
\end{bmatrix}
=
-
\begin{bmatrix}
Q(\hat u^{(i)}-\mu) +J^{(i)\top}\lambda^{(i)} \\
h(\hat u^{(i)})
\end{bmatrix}.
\label{eqn:simplified_quad_sys}
\end{equation}
Then, 
\begin{subequations}
\begin{align}
Q\Delta \hat u^{(i+1)} + J^{(i)\top}(\lambda^{(i+1)}-\lambda^{(i)}) &= -Q(\hat u^{(i)}-\mu)- J^{(i)\top}\lambda^{(i)},
\label{eqn:kkt_stat_quad}\\
J^{(i)} \Delta \hat u^{(i+1)} &= -h(\hat u^{(i)}).
\label{eqn:kkt_feas_quad}
\end{align}
\end{subequations}
We see that the only terms involving $\lambda^{(i)}$ cancel from both sides of the equation. Note that the method does not require tracking the dual variable, so it could also be equivalently reset to $\lambda^{(i)} = 0$ at each iteration, and we compute $\lambda^{(i+1)}$ only for computing the primal update in \cref{eq:primal_update}. 

Since $Q \succ 0$, it is invertible, we can multiply \cref{eqn:kkt_stat_quad} by $Q^{-1}$ to obtain:
\begin{equation}
\begin{aligned}
\Delta \hat u^{(i+1)}  &= \hat u^{(i+1)} - \hat u^{(i)} = -(\hat u^{(i)}-\mu) -Q^{-1}J^{(i)\top}\lambda^{(i+1)}.
\end{aligned}
\label{eqn:solve_delta_u}
\end{equation}
Multiplying both sides of \cref{eqn:solve_delta_u} by $J^{(i)}$ and using \cref{eqn:kkt_feas_quad}, we can eliminate $\Delta \hat u^{(i+1)}$ to obtain:
\begin{equation}
-h(\hat u^{(i)}) = -J^{(i)}(\hat u^{(i)}-\mu) - J^{(i)}Q^{-1}J^{(i)\top}\lambda^{(i+1)}.
\label{eqn:lambda_solve}
\end{equation}

Since $Q \succ 0$, $Q^{-1} \succ 0$ and then $ J^{(i)}Q^{-1}J^{(i)\top} \succ 0$, and hence it is invertible. We can then solve \cref{eqn:lambda_solve} for $\lambda^{(i+1)}$ to obtain:
\begin{equation}
\lambda^{(i+1)} = \left(J^{(i)}Q^{-1}J^{(i)\top}\right)^{-1}\left(h(\hat u^{(i)})-J^{(i)}(\hat u^{(i)}-\mu)\right),
\label{eqn:lambda_i_final}
\end{equation}
which gives the desired \cref{eq:lambda_update}. 
Then, solving \cref{eqn:solve_delta_u} for $\hat u^{(i+1)}$ gives:
\begin{equation}
u^{(i+1)} = \mu - Q^{-1} J^{(i)\top} \lambda^{(i+1)},
\label{eqn:u_i_final}
\end{equation}
which is the desired  \cref{eq:primal_update}. Lastly, for the first iterate, substituting  \cref{eqn:lambda_i_final} into \cref{eqn:u_i_final}, setting $i=0$ and $u^{(0)} = \mu$ gives the desired  \cref{eq:first_iter}.

We can also solve  \cref{eqn:simplified_quad_sys} efficiently using block Gaussian elimination and the Schur complement $(J^{(i)}Q^{-1}J^{(i)\top})^{-1} \in \mathbb{R}^{q \times q}$ ~\citep{golub2003}. In particular, the block matrix in \cref{eqn:simplified_quad_sys} can be factored into a product of elementary matrices as:
\begin{equation}
\begin{bmatrix}
Q & J^{(i)\top} \\
J^{(i)} & 0
\end{bmatrix}
= 
\begin{bmatrix}
I & 0 \\
J^{(i)}Q^{-1} & I
\end{bmatrix}
\begin{bmatrix}
Q^{-1} & 0 \\
0 & -J^{(i)}Q^{-1} J^{(i)\top}
\end{bmatrix}
\begin{bmatrix}
I & Q^{-1} J^{(i)\top} \\
0 & I
\end{bmatrix}.
\label{eqn:block_decomp}
\end{equation}
Since the matrix factorization in \cref{eqn:block_decomp} is a product of elementary matrices and a diagonal matrix, we can easily compute its inverse as:
\begin{equation}
\begin{bmatrix}
Q & J^{(i)\top} \\
J^{(i)} & 0
\end{bmatrix}^{-1}
= 
\begin{bmatrix}
I & -Q^{-1} J^{(i)\top} \\
0 & I
\end{bmatrix}
\begin{bmatrix}
Q^{-1} & 0 \\
0 & -(J^{(i)}Q^{-1} J^{(i)\top})^{-1}
\end{bmatrix}
\begin{bmatrix}
I & 0 \\
-J^{(i)}Q^{-1} & I
\end{bmatrix}
\label{eqn:block_decomp_inv}
\end{equation}
Then multiplying by the right-hand side in \cref{eqn:simplified_quad_sys} gives the solution:
\[
\begin{aligned}
\begin{bmatrix}
\hat u^{(i+1)}   \\
\lambda^{(i+1)}
\end{bmatrix}
&=
\begin{bmatrix}
\hat u^{(i)} \\
0
\end{bmatrix}
-\begin{bmatrix}
I & -Q^{-1} J^{(i)\top} \\
0 & I
\end{bmatrix}
\begin{bmatrix}
Q^{-1} & 0 \\
0 & -(J^{(i)}Q^{-1} J^{(i)\top})^{-1}
\end{bmatrix}
\begin{bmatrix}
I & 0 \\
-J^{(i)}Q^{-1} & I
\end{bmatrix}
\begin{bmatrix}
Q(\hat u^{(i)}-\mu)  \\
h(\hat u^{(i)})
\end{bmatrix} \\
&= \begin{bmatrix}
\hat u^{(i)} \\
0
\end{bmatrix} - \begin{bmatrix}
I & -Q^{-1} J^{(i)\top} \\
0 & I
\end{bmatrix}
\begin{bmatrix}
Q^{-1} & 0 \\
0 & -(J^{(i)}Q^{-1} J^{(i)\top})^{-1}
\end{bmatrix} \begin{bmatrix}
Q(\hat u^{(i)}-\mu)  \\
h(\hat u^{(i)})-J^{(i)}(\hat u^{(i)}-\mu)
\end{bmatrix} \\
&= \begin{bmatrix}
\hat u^{(i)} \\
0
\end{bmatrix} - \begin{bmatrix}
I & -Q^{-1} J^{(i)\top} \\
0 & I
\end{bmatrix}
\begin{bmatrix}
\hat u^{(i)}-\mu \\
-(J^{(i)}Q^{-1} J^{(i)\top})^{-1}(h(\hat u^{(i)})-J^{(i)}(\hat u^{(i)}-\mu))
\end{bmatrix} \\
&= 
\begin{bmatrix}
\mu  - Q^{-1} J^{(i)\top} \lambda^{(i+1)} \\
(J^{(i)}Q^{-1} J^{(i)\top})^{-1}(h(\hat u^{(i)})-J^{(i)}(\hat u^{(i)}-\mu))
\end{bmatrix}. \\
\end{aligned}
\]

Using the Schur complement reduces the Newton system from an indefinite $(n + q) \times (n + q)$ solve to a $n \times n$ SPD solve with $Q^{-1}$ and $q \times q$ SPD solve with the Schur complement, where $q \le n$. Similarly, the Jacobian expression in \cref{eq:projection_jacobian}, which we will show next, is obtained by implicitly differentiating the linearized KKT conditions and eliminating the dual block, which also avoids the need to invert a full $(n + q) \times (n + q)$ saddle-point or indefinite matrix.

\textbf{2. Jacobian $J_{\mathcal{T}}(\mu)$.} Here, we compute the Jacobian $J_{\mathcal{T}}(\mu) := \partial u^*(\mu)/\partial \mu$ of the transformation $\mathcal{T}(\mu) = u^*(\mu)$ using implicit differentiation.  At convergence, the optimal pair $(u^*, \lambda^*)$ satisfies \cref{eqn:opt_cond_nonlin}. Differentiating both sides of the first stationarity equation in \cref{eqn:opt_cond_nonlin} w.r.t.\ $\mu$ gives:
\begin{equation}
\begin{aligned}
\frac{\partial}{\partial \mu}R_0(u^*,\lambda^*;\mu) &= \frac{\partial}{\partial \mu}(Q(u^*-\mu)+\nabla h(u^*)\lambda^*) = 0,  \\
&\iff \big(Q + \nabla^2 h(u^*) \lambda^*\big) \frac{\partial u^*}{\partial \mu} + \nabla h(u^*)\frac{\partial \lambda^*}{\partial \mu} = Q.
\end{aligned}
\label{eqn:stat_diff_mu}
\end{equation}
Similar to \cref{eqn:kkt_hess_quad}, we assume $h(u^*)$ is approximately affine near the optimal point, and we approximate $\nabla^2h(u^*) \approx 0$.

Similarly differentiating both sides of the second feasibility equation in \cref{eqn:opt_cond_nonlin} w.r.t $\mu$ gives:
\begin{equation}
\begin{aligned}
\frac{\partial}{\partial \mu}R_1(u^*,\lambda^*;\mu) = \frac{\partial}{\partial \mu} h(u^*) =  \nabla h(u^*)^\top \frac{\partial u^*}{\partial \mu} = 0.
\end{aligned}
\label{eqn:feas_diff_mu}
\end{equation}
Combining \cref{eqn:stat_diff_mu} and \cref{eqn:feas_diff_mu} leads to the following block linear system:
\[
\begin{bmatrix}
Q & J^{*\top} \\
J^* & 0
\end{bmatrix}
\begin{bmatrix}
\partial u^*/\partial \mu \\
\partial \lambda^*/\partial \mu
\end{bmatrix}
=
\begin{bmatrix}
Q \\
0
\end{bmatrix}.
\label{eqn:jac_opt}
\]
Similar to \cref{eqn:block_decomp}, we can use the Schur complement to eliminate the dual term via block substitution. Using the block inverse in \cref{eqn:block_decomp_inv} with $J^{(i)} = J^*$, we have

\[
\begin{aligned}
\begin{bmatrix}
\partial u^*/\partial \mu \\
\partial \lambda^*/\partial \mu
\end{bmatrix}
&=
\begin{bmatrix}
I & -Q^{-1} J^{*\top} \\
0 & I
\end{bmatrix}
\begin{bmatrix}
Q^{-1} & 0 \\
0 & -(J^{*}Q^{-1} J^{*\top})^{-1}
\end{bmatrix}
\begin{bmatrix}
I & 0 \\
-J^{*}Q^{-1} & I
\end{bmatrix}
\begin{bmatrix}
Q  \\
0
\end{bmatrix} \\
&= 
\begin{bmatrix}
I & -Q^{-1} J^{*\top} \\
0 & I
\end{bmatrix}
\begin{bmatrix}
Q^{-1} & 0 \\
0 & -(J^{*}Q^{-1} J^{*\top})^{-1}
\end{bmatrix} \begin{bmatrix}
Q  \\
-J^{(*)}
\end{bmatrix} \\
&=
\begin{bmatrix}
I & -Q^{-1} J^{*\top} \\
0 & I
\end{bmatrix}
\begin{bmatrix}
I \\
(J^{*}Q^{-1} J^{*\top})^{-1}J^{*}
\end{bmatrix} \\
&= 
\begin{bmatrix}
I  - Q^{-1} J^{*\top} \partial \lambda^*/\partial \mu \\
(J^{*}Q^{-1} J^{*\top})^{-1}J^*
\end{bmatrix}. \\
\end{aligned}
\]
Hence, the Jacobian is given by the first component as:
\[
J_{\mathcal{T}}(\mu) = \frac{\partial u^*(\mu)}{\partial \mu}
= I - Q^{-1} J^{*\top} \left( J^* Q^{-1} J^{*\top} \right)^{-1} J^*,
\]
which is the desired \cref{eq:projection_jacobian}.
\end{proof}

\subsection{(Nonlinear) Convex Inequality Constraints}
\label{subsec:convex_app} 
In this subsection, we describe how to compute the DPPL in \ourmethod{} for nonlinear convex inequality constraints. Convex inequality constraints arise naturally in many scientific and engineering applications. For example, total variation (TV) regularization is widely used to promote smoothness or piecewise-constant structure in spatial fields, e.g., image denoising \citep{rudin1992nonlinear, boyd2004convex} and total variation diminishing (TVD) constraints to avoid spurious artificial oscillations in numerical solutions to PDEs~\citep{harten1997high, osti_1395816, schein2021preserving}. Other common convex constraints include box constraints, which enforce boundedness of physical or operational quantities \citep{bertsekas1997nonlinear}.

We consider the constrained projection Problem~\ref{problem:learn_f_1_main},
where $Q \succ 0$, and $h: \mathbb{R}^n \to \mathbb{R}^q$, $g: \mathbb{R}^n \to \mathbb{R}^s$ denote smooth functions representing equality and convex inequality constraints, respectively. This is a convex optimization problem due to the strictly convex quadratic objective and the assumption that $g(u)$ is convex. The associated Lagrangian is
\[
L(u, \lambda, \nu; z) = \tfrac{1}{2}(u - z)^\top Q (u - z) + \lambda^\top h(u) + \nu^\top g(u),
\]
with Lagrange multipliers $\lambda \in \mathbb{R}^q$ for the equality constraints and $\nu \in \mathbb{R}^s$ for the inequality constraints, where $\nu \geq 0$. The KKT optimality conditions are given as:
\begin{align}
\begin{split}
\label{eqn:kkt_convex}
\text{(Stationarity)} \quad & Q(u^* - z) + \nabla h(u^*) \lambda^* + \nabla g(u^*) \nu^* = 0, \\
\text{(Primal feasibility)} \quad & h(u^*) = 0, \quad g(u^*) \leq 0, \\
\text{(Dual feasibility)} \quad & \nu^* \geq 0, \\
\text{(Complementary slackness)} \quad & \nu_j^* \cdot g_j(u^*) = 0 \quad \text{for all } j = 1, \dots, s.
\end{split}
\end{align}
Note the first two conditions are the same as the ones for nonlinear equality constraints with $\nu = 0$, in \cref{eqn:opt_cond_nonlin}.

The KKT conditions in \cref{eqn:kkt_convex} are necessary and sufficient for optimality, under standard constraint qualifications, e.g., Slater’s condition~\citep{boyd2004convex}. \cref{eqn:kkt_convex} can be solved by various optimization methods, e.g., stochastic trust-region methods with sequential quadratic programming (SQP) \citep{ boyd2004convex, hong2023constrained} and exact augmented Lagrangian \citep{boyd2004convex, fang2024fully}. The augmented Lagrangian balances the need for both constraint satisfaction and computational efficiency, which makes it particularly effective in large-scale optimization problems. While the inequality constraints $g(u) \leq 0$ are convex by assumption, the equality constraints $h(u) = 0$ are typically required to be affine to ensure that the feasible set remains convex~\citep{boyd2004convex}. Nonlinear equalities generally yield non-convex level sets, which can violate problem convexity even when the objective and inequalities are convex. Although exceptions exist where nonlinear equalities define convex sets, these cases are rare and must be verified explicitly~\citep{bertsekas1997nonlinear, boyd2004convex}.

To compute the Jacobian $J_{\mathcal{T}}(\mu):= \partial u^*(\mu)/\partial \mu$ of the projection map with respect to the input $\mu$, we could, in principle, apply implicit differentiation to the KKT conditions in \cref{eqn:kkt_convex}. For general constrained problems with nonlinear equality and convex inequality constraints, the derivation becomes analytically complex, particularly due to active set variability and non-affine structure. In the special case of quadratic programs with affine constraints, OptNet~\citep{amosOptnet2017} provides an explicit expression for the derivatives via KKT conditions. In addition, CVXPYLayers~\citep{agrawalDifferentiable2019} enables gradient-based learning for general convex cone programs by canonicalizing them into a standard conic form. In our implementation, we use CVXPYLayers to enforce the constraints during the projection step. Since CVXPYLayers does not currently support full Jacobian extraction or higher-order derivatives, we estimate the variance of the projection map using Monte Carlo methods by applying random perturbations to the inputs and computing empirical statistics over repeated forward passes.

\section{Special Cases of \ourmethod{}}
\label{app:special_cases}

In this section, we show applications of \ourmethod{} in two seemingly unrelated but technically related domains: (1) hierarchical time series forecasting with coherency constraints \citep{rangapuramEndend2021, olivares2024probabilistic}; (2) solving partial differential equations (PDEs) with global conservation constraints \citep{hansenLearning2023, mouliUsing2024}. 
 Both are special cases of \ourmethod{} with linear equality constraints, and orthogonal ($Q=I$) and oblique ($Q=\Sigma^{-1}$) projections, respectively.  
 \cref{fig:probharddiag} illustrates the wide variety of cases that our framework covers. 

\begin{figure}[h]
    \centering
    \includegraphics[width=0.7\linewidth]{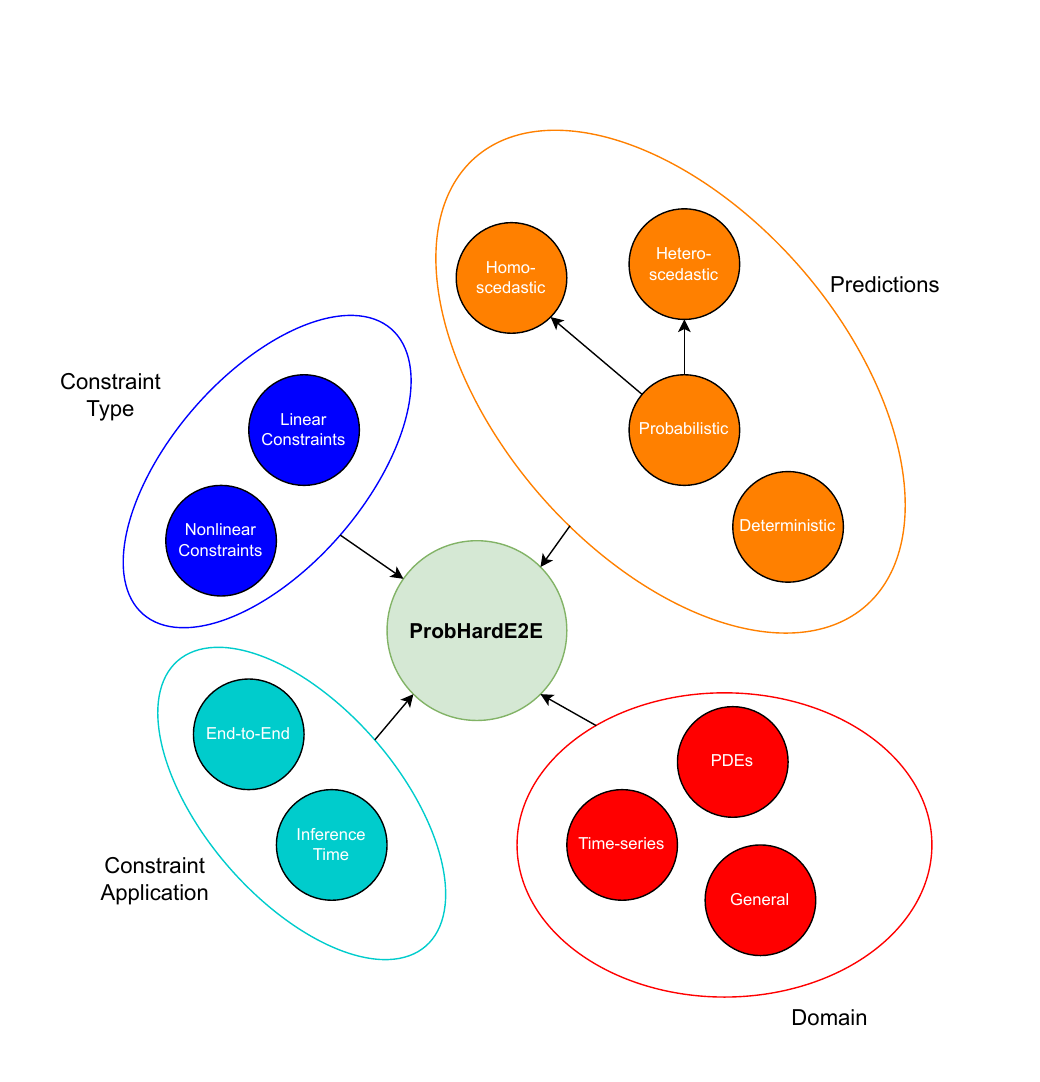}
    \caption{\ourmethod{}  serves as a probabilistic unified framework for learning with hard constraints.}
    \label{fig:probharddiag}
\end{figure}

\subsection{Enforcing Coherency in Hierarchical Time Series Forecasting}
\label{subseq:hier_fore}
 Hierarchical time series forecasting is abundant in several applications, e.g., retail demand forecasting and electricity forecasting. In retail demand forecasting, the sales are tracked at various granularities, including item, store, and region levels. In electricity forecasting, the consumption demand is tracked at individual and regional levels.  Each time series at time $t$ can be separated into bottom and aggregate levels. Bottom-levels aggregate into higher-level series at each time point through known relationships, which can be represented as dependency graphs. Let $z_t = [a_t \quad b_t]^\top \in \mathbb{R}^n$, where $a_t \in \mathbb{R}^q$ denotes the aggregate entries, $b_t \in \mathbb{R}^{\tilde q}$ denotes the bottom-level entries, and  $n = q + \tilde q$. Let $S_{\text{sum}} \in \{0,1\}^{q \times \tilde q}$ denote the summation matrix, which defines the relationship between the bottom and aggregate levels as $a_t = S_{\text{sum}}b_t$. This coherency constraint can be equivalently expressed as: 
\begin{equation} \label{eqn: tscons}
\begin{bmatrix}
    I_q & -S_{\text{sum}}
\end{bmatrix} 
\begin{bmatrix} 
a_t \\ b_t 
\end{bmatrix} = 0 \Leftrightarrow Az_t = 0, \quad \forall t, 
\end{equation}
where $I_{q}$ denotes the $q \times q$ identity matrix.  See \cite{hyndmanOptimal2011, rangapuramEndend2021, olivares2024clover} and the references therein for details, and \cref{fig:hierarchical_ts} for an illustration.

\begin{figure}
\centering
\begin{tikzpicture}[sibling distance=10em, level distance=6em,
  every node/.style = {shape=rectangle, draw, align=center, top color=white, bottom color=blue!20}]
  
  \node {Total \\ $a_{t_1}$}
    child {node {Region 1 \\ $a_{t_2}$}
      child {node {City 1 \\ $b_{t_1}$}}
      child {node {City 2 \\ $b_{t_2}$}}
      child {node {City 3 \\ $b_{t_3}$}}
    }
    child {node {Region 2 \\ $a_{t_3}$}
      child {node {City 4 \\ $b_{t_4}$}}
      child {node {City 5 \\ $b_{t_5}$}}
      child {node {City 6 \\ $b_{t_6}$}}
    }
    child {node {Region 3 \\ $a_{t_4}$}
      child {node {City 2 \\ $b_{t_2}$}}
      child {node {City 3 \\ $b_{t_3}$}}
    };

\end{tikzpicture}
\caption{Example hierarchical time series structure with $a_t \in \mathbb{R}^4$,  $b_t \in \mathbb{R}^6$ and $S_{\text{sum}} =
\begin{bmatrix}
    1 & 1 & 1 & 1 &1 & 1\\
    1 & 0 & 0 & 1 &1 & 0 \\
    0 & 0 & 0 & 1 & 1 & 1 \\
    0 & 1 & 1 & 0 & 0 & 0
\end{bmatrix}$.}
\label{fig:hierarchical_ts}
\end{figure}
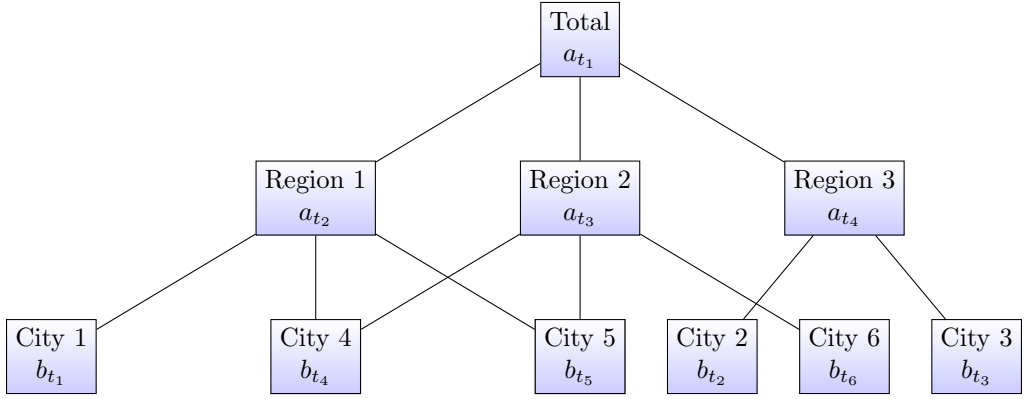

\texttt{HierE2E} \citep{rangapuramEndend2021} enforces the coherency constraint in \cref{eqn: tscons} by projecting the multivariate samples $z_t$ onto the null space of the constraint, i.e., $ Az_t=0$. It uses the following projection:
\begin{align} \label{eqn: lincons} \begin{split} & u^*(z_t) = (I - A^\top(AA^\top)^{-1}A)z_t = (I - A^{\dag} A)z_t, \end{split} \end{align}
where $A^{\dag} = A^\top(AA^{\top})^{-1}$ denotes the right psuedoinverse, and $P_I = P_I^2 = P_I^\top$ denotes an orthogonal projector. 
 
We show that \texttt{HierE2E} can be formulated in our \ourmethod{} framework with the following posterior mean and covariance:
\begin{subequations}
    \label{eqn:hierend2end_update}
    \begin{align}
    \hat \mu_{\text{HierE2E}} &= (I - A^{\dag}A)\mu, 
    \label{eqn:lincons_mean} 
    \\
    \label{eqn:lincons_cov} 
        \hat {\Sigma}_{\text{HierE2E}} &=    \Sigma - A^{\dag}A\Sigma -\Sigma A^{\dag}A +A^{\dag}A \Sigma A^{\dag}A,
    \end{align}
\end{subequations}
where $P_I$ is defined in \cref{eqn: lincons}. 
In particular, we show in \cref{prop: hiere2e} that the \texttt{HierE2E} posterior update in \cref{eqn:hierend2end_update}   
is a special linear constraint case of our \ourmethod{} method, which uses an orthogonal projection with $Q = I$ and $b=0$.  

\begin{restatable}{proposition}{hierendtoend}\label{prop: hiere2e} The projected mean and covariance for \texttt{HierE2E} in \cref{eqn:hierend2end_update} is given by the solution to Problem \ref{problem:learn_f_1_main} with linear constraints in \cref{prop: linearsolution_main}, i.e., $h(u) = Au = 0$,  $b=0$, where $Q = I$ for an orthogonal projection and $\mathbf{Z}  \sim \mathcal{N}(\mu, \Sigma)$ is a multivariate Gaussian.
\end{restatable}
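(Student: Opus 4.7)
The plan is to specialize \cref{prop: linearsolution_main} to the HierE2E setting. Taking the linear equality constraint $h(\hat u) = A\hat u - b = 0$ with the summation matrix $A = [I_q \ -S_{\text{sum}}]$ from \cref{eqn: tscons}, $b = 0$, and weight matrix $Q = I$, and noting that $\mathcal{N}(\mu, \Sigma)$ is a multivariate location-scale distribution, we can invoke \cref{prop: linearsolution_main} directly to obtain a Gaussian posterior with closed-form parameters $\hat\mu = P_{Q^{-1}}\mu + (I - P_{Q^{-1}})A^{\dag}b$ and $\hat\Sigma = P_{Q^{-1}}\Sigma P_{Q^{-1}}^\top$. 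The remainder of the proof is just to check that these two formulas collapse to \cref{eqn:hierend2end_update} under the stated specialization.

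First, substituting $Q = I$ into the definition of the oblique projector gives $P_{Q^{-1}} = I - A^\top(AA^\top)^{-1}A = I - A^{\dag}A$, which is precisely the orthogonal projector $P_I$ onto $\ker(A)$ used by HierE2E in \cref{eqn: lincons}. With $b = 0$, the posterior-mean formula then reduces to $\hat\mu = P_I\mu = (I - A^{\dag}A)\mu$, matching \cref{eqn:lincons_mean}.

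For the covariance, I would next record that $P_I$ is symmetric: since $(AA^\top)^{-1}$ is symmetric (as the inverse of a symmetric matrix), $A^{\dag}A = A^\top(AA^\top)^{-1}A$ is symmetric, and hence $P_I^\top = P_I$. Expanding $\hat\Sigma = P_I\Sigma P_I^\top = (I - A^{\dag}A)\Sigma(I - A^{\dag}A)$ term by term yields
\begin{equation*}
\hat\Sigma = \Sigma - A^{\dag}A\,\Sigma - \Sigma A^{\dag}A + A^{\dag}A\,\Sigma A^{\dag}A,
\end{equation*}
which is exactly \cref{eqn:lincons_cov}. Together with the mean identity, this establishes that HierE2E is the special case $(Q, b) = (I, 0)$ of \ourmethod{} under linear equality constraints, and the Gaussian class is preserved because $\mathcal{N}$ lies in the location-scale family to which \cref{prop: linearsolution_main} applies.

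There is no real obstacle here: the only step that requires any thought is verifying the symmetry of $P_I$ so that the right-hand side of $\hat\Sigma = P_I\Sigma P_I^\top$ expands to the four explicit terms in \cref{eqn:lincons_cov}; every other step is a direct substitution of $Q = I$ and $b = 0$ into the general closed-form expressions from \cref{prop: linearsolution_main}.
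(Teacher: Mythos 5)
Your proposal is correct and follows essentially the same route as the paper's proof: substitute $Q=I$ into the oblique projector to recover the orthogonal projector $P_I = I - A^{\dag}A$, specialize the closed-form mean with $b=0$, and expand $P_I\Sigma P_I^\top$ into the four terms of \cref{eqn:lincons_cov}. The only (minor) difference is that you explicitly justify the symmetry $P_I^\top = P_I$, a step the paper uses without comment.
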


\begin{proof}
  The oblique projection in \cref{eqn:oblique_proj} used in \ourmethod{} for linear constraints is given as $P_{Q^{-1}} = I - Q^{-1}A^\top(AQ^{-1}A^\top)^{-1}A$. Setting $Q=I$, the expression simplifies to $P_{Q^{-1}} = P_I = I - A^\top(AA^\top)^{-1}A = I - A^
  {\dag}A$.
 
The posterior mean $\hat \mu$ for \ourmethod{} with linear constraints is given in \cref{eq:lin_con_mu} with $b=0$ as: 
     \begin{equation}
     \begin{aligned}
         \hat{\mu} &=   P_{Q^{-1}}\mu    \\ 
         &=  P_{I}\mu    \\ 
        & = (I - A^{\dag}A)\mu  \\
         & = \hat \mu_{\text{Hier-E2E}},
         \label{eqn:mu_hiere2e}
    \end{aligned}
    \end{equation}
which is the desired expression in \cref{eqn:lincons_mean}. 

Similarly, the posterior covariance $\hat \Sigma$ for \ourmethod{} in \cref{eq:lin_con_sigma} is given as: 
\begin{equation}
    \begin{aligned}
    \hat{\Sigma} & = P_{Q^{-1}}\Sigma P^\top_{Q^{-1}} \\
    &= P_I\Sigma P_{I}^\top \\
    &= P_I\Sigma P_{I} \\
                & = (I - A^{\dag}A) \Sigma (I - A^{\dag}A) \\
                &=(I - A^{\dag}A)(\Sigma - \Sigma A^{\dag}A) \\
                &=\Sigma - A^{\dag}A\Sigma -\Sigma A^{\dag}A +A^{\dag}A \Sigma A^{\dag}A \\
               &=  \hat \Sigma_{\text{Hier-E2E}},
    \end{aligned}
    \label{eqn:cov_hiere2e}
\end{equation}
which is the desired expression in \cref{eqn:lincons_cov}. 
\end{proof}
Note that \texttt{HierE2E} does not directly project the distribution parameters, even though a closed form exists, as shown in \cref{eqn:mu_hiere2e} and \cref{eqn:cov_hiere2e}. Instead, it directly projects the samples in \cref{eqn: lincons}.  An improvement to \texttt{HierE2E} (that we do in \ourmethod{}) is to eliminate the computationally expensive sampling in the training loop. (See \cref{subsec:crps}.) \texttt{HierE2E} samples from the parametric distribution generated by \texttt{DeepVAR} \citep{salinas2019high, salinas2020deepar, alexandrov2019gluonts}, reconciles these samples, and computes the loss over time using the Continuous Ranked Probability Score (CRPS). Generally, for unknown distributions, the CRPS evaluation requires sampling, which may explain its necessity in their framework. For many standard distributions, e.g., the multivariate Gaussian distribution in \texttt{HierE2E}, the CRPS can be computed analytically \citep{matheson1976scoring, taillardat2016calibrated} using the mean and covariance of the output distribution. 

\subsection{Enforcing Conservation Laws in PDEs} 
\label{eqn:probconserv_Q}
In addition to hierarchical forecasting, another (at first seemingly-unrelated) application of \ourmethod{}  is enforcing conservation laws in  solutions to partial differential equations (PDEs). A conservation law is given as $ u_t + \nabla \cdot F(u) = 0$, for unknown $u(t,x)$ and nonlinear flux function $F(u)$ \citep{leveque1990numerical}. \citet{hansenLearning2023} propose the \texttt{ProbConserv} method to enforce the integral form of conservation laws from finite volume methods \citep{leveque2002finite} as a linear constraint $Au=b$ for specific problems that satisfy a boundary flux linearity assumption. In particular, \texttt{ProbConserv} proposes the following update equations for the posterior mean and covariance matrix:
\begin{subequations}
\label{eqn:probconserv_update}
\begin{align}
\label{eqn:probconserv_update_mean}
   \hat \mu_{\text{ProbConserv}} &= \mu - \Sigma A^\top (A \Sigma A^\top)^{-1} (A\mu - b), \\
   \label{eqn:probconserv_update_cov}
   \hat \Sigma_{\text{ProbConserv}} &= \Sigma - \Sigma A^\top (A \Sigma A^\top)^{-1} A \Sigma,
\end{align}
\end{subequations}
 given the mean $\mu$ and the covariance matrix $\Sigma$ estimated from a black-box probabilistic model, e.g., Gaussian Process, probabilistic Neural Operators \citep{mouliUsing2024} or Attentive Neural Process (ANP) \citep{hansenLearning2023} or DeepVAR \citep{salinas2019high} used in the hierarchical forecasting case. 
 
 In \texttt{ProbConserv}, the posterior mean $\hat \mu$ in \cref{eqn:probconserv_update_mean} is shown to be the solution to the constrained least squares problem: 
\begin{equation*}
\begin{aligned}
\hat \mu_{\text{ProbConserv}} = \argmin_{\substack{\tilde \mu \in \mathbb{R}^n \\
A\tilde \mu = b}} \frac{1}{2}||\tilde \mu - \mu||^2_{\Sigma^{-1}}.
\label{eqn:probconserv_opt}
\end{aligned}
\end{equation*}
We formulate this optimization problem more generally, and show that by assuming that $z \sim \mathbf{Z} \sim \mathcal{N}(\mu, \Sigma)$ is a multivariate Gaussian, a constrained sample $u^*(z) \sim \mathbf{Y} \sim \mathcal{N}(\hat \mu, \hat \Sigma)$ in \texttt{ProbConserv} is a solution to our Problem \ref{problem:learn_f_1_main} with $Q = \Sigma^{-1}$ and linear constraints.  In particular, we show in \cref{prop:probconserv} that the \texttt{ProbConserv} posterior update in \cref{eqn:probconserv_update}  
is a special linear constraint case of our \ourmethod{} method, which uses an oblique projection with $Q = \Sigma^{-1}$.

\begin{restatable}{proposition}{probconserv}\label{prop:probconserv}

The projected mean and covariance for \texttt{ProbConserv} in \cref{eqn:probconserv_update} is given by the solution to Problem \ref{problem:learn_f_1_main} with linear constraints in \cref{prop: linearsolution_main}, i.e., $h(u) = Au-b = 0$, where $Q = \Sigma^{-1}$ for an oblique projection and $\mathbf{Z}  \sim \mathcal{N}(\mu, \Sigma)$ is a multivariate Gaussian.
\end{restatable}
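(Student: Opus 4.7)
The plan is to observe that Proposition 6.2 is a direct specialization of Proposition 3.2 (linearsolution\_main), and that the verification reduces to a straightforward substitution plus a transpose/cancellation bookkeeping check. Since $\Sigma$ is symmetric positive definite (as a covariance matrix) and the Gaussian family is multivariate location-scale, the hypotheses of Proposition 3.2 are satisfied with $Q = \Sigma^{-1}$, and $Q^{-1} = \Sigma$ is well-defined.

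First, I would instantiate the oblique projector from \Cref{eqn:oblique_proj} at $Q^{-1} = \Sigma$, obtaining $P_{\Sigma} = I - \Sigma A^\top (A\Sigma A^\top)^{-1} A$. Then I would verify the mean update: plugging $P_{\Sigma}$ into $\hat \mu = P_{Q^{-1}}\mu + (I - P_{Q^{-1}})A^{\dag}b$ from \Cref{eq:lin_con_mu}, the key simplification is that since $A$ has full row rank, $A A^{\dag} = A A^\top (A A^\top)^{-1} = I$, so the pseudoinverse factor collapses and the second term becomes $\Sigma A^\top (A\Sigma A^\top)^{-1} b$. Rearranging then gives exactly $\mu - \Sigma A^\top(A\Sigma A^\top)^{-1}(A\mu - b) = \hat \mu_{\text{ProbConserv}}$ in \Cref{eqn:probconserv_update_mean}.

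Next, for the covariance, I would expand $\hat \Sigma = P_{\Sigma} \Sigma P_{\Sigma}^\top$ from \Cref{eq:lin_con_sigma}. Using that $(A \Sigma A^\top)^{-1}$ is symmetric (since $A\Sigma A^\top$ is) and that $\Sigma$ is symmetric, the transpose $P_{\Sigma}^\top = I - A^\top (A\Sigma A^\top)^{-1} A \Sigma$. Multiplying out yields the sum $\Sigma - \Sigma A^\top(A\Sigma A^\top)^{-1}A\Sigma - \Sigma A^\top(A\Sigma A^\top)^{-1}A\Sigma + \Sigma A^\top(A\Sigma A^\top)^{-1}(A\Sigma A^\top)(A\Sigma A^\top)^{-1}A\Sigma$, where the quartic term telescopes through $(A\Sigma A^\top)^{-1}(A\Sigma A^\top) = I$ into a single copy of the quadratic term. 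The three copies of $\Sigma A^\top(A\Sigma A^\top)^{-1}A\Sigma$ then combine ($-1-1+1 = -1$) to yield $\Sigma - \Sigma A^\top(A\Sigma A^\top)^{-1}A\Sigma = \hat \Sigma_{\text{ProbConserv}}$ in \Cref{eqn:probconserv_update_cov}.

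Finally, since $\mathbf{Z} \sim \mathcal{N}(\mu, \Sigma)$ belongs to the multivariate location-scale family and the projection $u^*(z) = P_{\Sigma} z + (I - P_{\Sigma})A^{\dag} b$ is an affine transformation, \Cref{thm: locscaldeltamethod} applies exactly (not just to first order), so $u^* \sim \mathcal{N}(\hat \mu, \hat \Sigma)$ with the computed parameters, which matches ProbConserv's posterior distribution. There is no real obstacle here; the only place to be careful is tracking symmetry when transposing $P_{\Sigma}$, and confirming that the cross-term cancellation in the covariance expansion yields the correct sign, which is immediate once the middle inverse cancellation is spotted.
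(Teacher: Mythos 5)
Your proposal is correct and follows essentially the same route as the paper's proof: instantiate the oblique projector at $Q=\Sigma^{-1}$, collapse the $A^{\dag}$ term via $AA^{\dag}=I$ to recover the \texttt{ProbConserv} mean, and expand $P_{\Sigma}\Sigma P_{\Sigma}^\top$ using the symmetry of $\Sigma$ and $A\Sigma A^\top$ so that the quartic term telescopes and the cross terms combine to a single $-\Sigma A^\top(A\Sigma A^\top)^{-1}A\Sigma$. Your closing remark that the delta-method result holds exactly here because the projection is affine is also consistent with the paper's treatment in \cref{prop: linearsolution_main}.
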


\begin{proof} 
    The oblique projection in \cref{eqn:oblique_proj} used in \ourmethod{} for linear constraints is given as $P_{Q^{-1}} = P_{\Sigma} = I - Q^{-1}A^\top(AQ^{-1}A^\top)^{-1}A$.  Setting $Q=\Sigma^{-1}$, we have that $P_{Q^{-1}} = I - \Sigma A^\top(A\Sigma A^\top)^{-1}A = P_{\Sigma}$.
    
The posterior mean $\hat \mu$ for \ourmethod{} with linear constraints is given in \cref{eq:lin_con_mu} as: 
    \begin{align*}
        & \hat{\mu} =   P_{Q^{-1}}\mu + (I - P_{Q^{-1}})A^{\dag}b  \\ 
        & = (I - \Sigma A^\top(A\Sigma A^\top)^{-1}A)\mu + (I - (I - \Sigma A^\top(A\Sigma A^\top)^{-1}A))A^{\dag}b  \\
         & = (I - \Sigma A^\top(A\Sigma A^\top)^{-1}A)\mu + \Sigma A^\top(A\Sigma A^\top)^{-1}\underbrace{AA^{\dag}}_Ib \\
         & = \mu - \Sigma A^\top(A\Sigma A^\top)^{-1}(A\mu -b) \\
         & = \hat \mu_{\text{ProbConserv}},
    \end{align*}
which is equal to the desired expression in \cref{eqn:probconserv_update_mean}.

Similarly, the posterior covariance $\hat \Sigma$ for \ourmethod{} in \cref{eq:lin_con_sigma} is given as: 
\begin{equation*}
    \begin{aligned}
    \hat{\Sigma} & = P_{Q^{-1}}\Sigma P^\top_{Q^{-1}} \\
        & = (I - \Sigma A^\top(A\Sigma A^\top)^{-1}A)\Sigma(I - A^\top(A\Sigma A^\top)^{-1}A\Sigma) \\
                             &= (I - \Sigma A^\top(A\Sigma A^\top)^{-1}A)(\Sigma - \Sigma A^\top(A\Sigma A^\top)^{-1}A\Sigma) \\
                             &= \Sigma - 2\Sigma A^\top(A\Sigma A^\top)^{-1}A\Sigma  + \Sigma A^\top(A\Sigma A^\top)^{-1}(A\Sigma A^\top)(A\Sigma A^\top)^{-1}A\Sigma \\ 
                             &= \Sigma - \Sigma A^\top(A\Sigma A^\top)^{-1}A\Sigma \\ & = \hat \Sigma_{\text{ProbConserv}}, \\
    \end{aligned}
\end{equation*}
which is equal to the desired expression in \cref{eqn:probconserv_update_cov}.
\end{proof}

Note that the projected distribution parameters in \cref{eqn:probconserv_update} are applied only at inference time in \texttt{ProbConserv}. In \ourmethod, we show the benefits of imposing the constraints at training time as well in an end-to-end manner. 

\section{Flexibility in the choice of $Q$ and its structure}
\label{subseq:learn_q}

In this section, we discuss the modeling choices for the projection matrix \( Q \) in our DPPL, which defines the energy norm in the objective in the constrained least squares Problem~\ref{problem:learn_f_1_main}. Its specification significantly influences both the learning dynamics and the inductive biases of the model. Selecting or learning \( Q \) offers a principled mechanism to reflect the statistical structure of the data, particularly in settings involving multivariate regression or heteroscedastic noise \citep{kendall2018multi, stirn2023faithful}. Table~\ref{tab:q_structures} summarizes common structure choices for \( Q \) and their trade-offs. 
Of course, in many applications, there is a single goal for the choice of Q---to optimize accuracy.

\begin{table}[h]
\centering
\begin{tabular}{@{}lll@{}}
\toprule
\textbf{Structure of $Q$} & \textbf{Example Form} & \textbf{Merits and Demerits} \\ \midrule
\textbf{Identity} & \( Q = I \) & 
\begin{tabular}[c]{@{}l@{}}+ Simplest choice, no parameters\\
+ Strong regularization\\
-- Ignores uncertainty and correlations\end{tabular} \\ \addlinespace
\textbf{Diagonal (learned)} & \( Q = \text{diag}(q_1, \dots, q_n) \) & 
\begin{tabular}[c]{@{}l@{}}+ Captures heteroscedasticity\\
+ Efficient to compute and invert\\
-- Ignores correlations\end{tabular} \\ \addlinespace
\textbf{Low-rank (learned $L$)} & \( Q = LL^\top, \, L \in \mathbb{R}^{n \times d} \) & 
\begin{tabular}[c]{@{}l@{}}+ Captures dominant correlations\\
+ Fewer parameters than full\\
-- Still computationally involved\end{tabular} \\ \addlinespace
\textbf{Full (learned $L$)} & \( Q = LL^\top, \, L \in \mathbb{R}^{n \times n} \) & 
\begin{tabular}[c]{@{}l@{}}+ Fully expressive\\
-- High memory and compute cost\\
-- Prone to overfitting\end{tabular} \\
\bottomrule
\end{tabular}
\caption{Several structure choices for the matrix \( Q \) and their associated trade-offs.}
\label{tab:q_structures}
\end{table}

In practice, the space of symmetric positive definite (SPD) matrices is too large to be explored (and ``learned'') without additional structure, especially in high-dimensional settings. 
To address this, structural constraints are often imposed on \( Q \), reducing the number of parameters, and acting as a form of regularization \citep{willette2021meta}. 
These structures encode modeling assumptions, e.g., output independence, sparsity, or low-rank correlations, and they trade off statistical expressivity against computational efficiency.

In many cases, the choice of \( Q \) (or the form of \(Q\)) should ideally reflect (knowledge or assumptions or hope about) the structure of the underlying data distribution. 
The simplest choice, \( Q = I \), assumes isotropy across output dimensions, and is often used for its regularization benefits and ease of implementation. This choice neglects any correlation structure in the data, and it tends to perform poorly in the presence of strong heteroscedasticity. A diagonal matrix \( Q = \text{diag}(q_1, \dots, q_n) \) introduces per-dimension weighting, and is well-suited to heteroscedastic tasks where the variance differs across outputs \citep{kendall2017uncertainties, skafte2019reliable}. Low-rank approximations provide a compromise between model complexity and expressivity, by capturing dominant correlation directions \citep{willette2021meta}. Full-rank matrices allow flexibility and often require strong priors or large datasets to avoid overfitting \citep{weinberger2009distance}. 

We focus on two concrete realizations of \( Q \): the identity matrix \( Q = I \) that is used in the \texttt{HierE2E}~\citep{rangapuramEndend2021} (see \cref{subseq:hier_fore}), and a diagonal matrix defined as the inverse of a predicted diagonal covariance, \( Q = {\Sigma}^{-1} \) that is used in \texttt{ProbConserv}~\citep{hansenLearning2023} (see \cref{eqn:probconserv_Q}), where \( {\Sigma} = \text{diag}(\sigma_1^2, \dots, \sigma_d^2) \) denote the empirical variances output by the model. This latter choice corresponds to a heteroscedastic formulation that scales residuals based on their predicted precision, which emphasizes more confident predictions, and down-weights less certain ones \citep{stirn2023faithful, le2005heteroscedastic, hansenLearning2023}. 

\section{Proof of \cref{thm: locscaldeltamethod}}
\label{app:deltamethod} 

In this section, we begin by first restating Theorem \ref{thm: locscaldeltamethod}, which provides a closed-form update for our DPPL in \cref{eqn:prob+proj_layer} for a prior distribution that belongs to a multivariate local-scale family of distributions; and then we provide its proof. 

\locscaldeltamethod*

\begin{proof}
Recall that a family of probability distributions is said to be a location-scale family if for any random variable $\mathbf{Z}$ whose distribution belongs to the family $\mathbf{Z} \sim \mathcal{F}(\mu, \Sigma)$, then there exists a transformation (re-parameterization) of the form
\[
\mathbf{Y} \stackrel{d}{=} A\mathbf{Z} + B,
\]
 where $A$ denotes a scale transformation matrix, $B$ denotes the location parameter, and $\stackrel{d}{=}$ denotes equality in distribution.

Let $\mathbf{Y} = \mathcal{T}(\mathbf{Z})$ be a nonlinear transformation. We calculate  the first-order Taylor series expansion to linearize the function about the mean $\mu$ as:
\begin{align}
\mathbf{Y} = \mathcal{T}(\mathbf{Z}) &\approx \mathcal{T}(\mu) + J_ \mathcal{T}(\mu)(\mathbf{Z}-\mu)\label{eqn:taylor_theorem} \\
 &= \underbrace{J_\mathcal{T}}_A (\mu)\mathbf{Z} + \underbrace{(\mathcal{T}(\mu)-J_\mathcal{T}(\mu) \mu)}_{B} \nonumber.
\end{align}
Then, since $\mathbf{Z}$ belongs to the location-scale family of distributions, the linearization of $\mathbf{Y} \sim \mathcal{F}(\hat \mu, \hat \Sigma)$ also belongs to the family with mean $\hat \mu$ and covariance $\hat \Sigma$, which we compute below.

Taking the expectation of both sides of \cref{eqn:taylor_theorem} we get:
    \begin{align}
    \hat \mu = \mathbb{E}[\mathcal{T}(\mathbf{Z})] & \approx \mathbb{E}[\mathcal{T}(\mu) + J_\mathcal{T}(\mu)(\mathbf{Z}-\mu) ] \nonumber \\ 
   &=  \mathbb{E}[\mathcal{T}(\mu)] + \mathbb{E}[J_\mathcal{T}(\mu)(\mathbf{Z}-\mu)] \text{  (by linearity of expectation)} \nonumber \\
    & = \mathcal{T}(\mu) + J_\mathcal{T}(\mu)\underbrace{(\mathbb{E}[\mathbf{Z}]-\mu)}_0 \text{  (since $\mu$ is not a random variable)} \nonumber \\
    & = \mathcal{T}(\mu).
    \label{eqn: delta_mean}
    \end{align}

Then, the covariance $\hat \Sigma$ is given as:
 \begin{align*} 
        \hat \Sigma
    &= \mathbb{E}[(\mathcal{T}(\mathbf{Z}) - \mathbb{E}[\mathcal{T}(\mathbf{Z})])(\mathcal{T}(\mathbf{Z}) - \mathbb{E}[\mathcal{T}(\mathbf{Z})])^\top] \\ 
      &= \mathbb{E}[(\mathcal{T}(\mathbf{Z}) - \mathcal{T}(\mu))(\mathcal{T}(\mathbf{Z}) - \mathcal{T}(\mu))^\top]  \text{     (by \cref{eqn: delta_mean})}  \\
      &\approx \mathbb{E}[( \mathcal{T}(\mu) + J_\mathcal{T}(\mu)(\mathbf{Z}-\mu) - \mathcal{T}(\mu))(\mathcal{T}(\mu) + J_\mathcal{T}(\mu)(\mathbf{Z}-\mu) - \mathcal{T}(\mu))^\top] \text{     (by \cref{eqn:taylor_theorem})} \\
      &= \mathbb{E}[(J_\mathcal{T}(\mu)(\mathbf{Z}-\mu))(J_\mathcal{T}(\mu)(\mathbf{Z}-\mu))^\top] \\
      &= J_\mathcal{T}(\mu) \mathbb{E}[(\mathbf{Z}-\mu)(\mathbf{Z}-\mu)^\top] J_\mathcal{T}(\mu)^\top\\
      &= J_\mathcal{T}(\mu) \Sigma J_\mathcal{T}(\mu)^T.
 \end{align*}
 \end{proof}

Importantly, the approximation error between the nonlinear transformation and its linearization converges to zero in probability \citep{van2000asymptotic}, which ensures the validity of this approach asymptotically. We note that this result is closely related to the Multivariate Delta Method \citep{CaseBerg_deltamethod}, which shows that for a nonlinear function \( \mathcal{T} \), the sample mean of \( \mathcal{T}(z_1, \dots, z_n) \)  
also converges in distribution, under mild conditions. 
Specifically, if the sample mean of \( n \) i.i.d. draws from \( \mathbf{Z} \) converges to a multivariate  Gaussian (by the CLT), then the same linearization argument and Slutsky's theorem imply that the sample mean of the projected samples converges to a multivariate Gaussian, with parameters given in \cref{eqn:prob+proj_layer}. 
Second-order approximations (via a quadratic expansion of \( \mathcal{T} \)) yield higher-order corrections,  and can lead to non-Gaussian outcomes (e.g., chi-squared) \citep{CaseBerg_deltamethod}. 

\section{Benchmarking Datasets}
\label{app:datasets}

In this section, we detail the benchmarking datasets in both applications domains, i.e., PDEs and probabilistic time series forecasting.

\subsection{PDEs}
\label{app:pdes}
We consider a series of conservative PDEs with varying levels of difficulties, where the goal is to learn an approximation of the solution that satisfies known conservation laws. We follow the empirical evaluation protocol from \citet{hansenLearning2023}. The PDEs we study are conservation laws, which take the following differential form:
\begin{equation}
    u_t + \nabla \cdot F(u) = 0,
    \label{eqn:conserv_law}
\end{equation}
for some nonlinear flux function $F(u)$.  These equations can be written in their conservative form as:
\begin{equation}
\frac{d}{dt} \int_{\Omega} u(t,x) d \Omega = F(u(t,x_0)) - F(u(t, x_N)),
\label{eqn:global_conserv}
\end{equation}
by applying the divergence term in 1D over the domain $\Omega = [x_0, x_N]$ \citep{leveque1990numerical, hansenLearning2023}. This global conservation law states that the rate of change of total mass or energy in this system is given by the difference of the flux into the domain and the flux out of the domain. Note that in higher dimensions, the flux difference on the right-hand side of \cref{eqn:global_conserv} can be written as a surface integral along the boundary of the domain. This conservative form is at the heart of numerical finite volume methods~\citep{leveque2002finite}, which discretize the domain into control volumes and solve this equation locally in each control volume, to enforce local conservation, i.e., so that the flux into a control volume is equal to the flux out of it. In the following, we summarize the PDE test cases with their initial and boundary conditions, exact solutions, and derived linear conservation constraints from \citet{hansenLearning2023}.

\subsubsection{Generalized Porous Medium Equation (GPME)}
The Generalized Porous Medium Equation (GPME) is given by the following degenerate parabolic PDE:
\begin{equation}
u_t - \nabla \cdot (k(u)\nabla u) = 0,
\label{eqn:gpme}
\end{equation}
where the flux in \cref{eqn:conserv_law} is given as $F(u) = -k(u) \nabla u$, and \( k(u)\) denotes the diffusivity parameter. This diffusivity parameter $k(u)$ may depend nonlinearly and/or discontinuously on the solution \( u \). We consider three representative cases within the GPME family, by changing this parameter $k(u)$. Each instance of the GPME increases in difficulty based on the regularity of the solution and the presence of shocks or discontinuities.

\paragraph{Heat Equation (``Easy'').}
The classical parabolic heat equation arises when the diffusivity is constant, i.e., \( k(u) = k \) in \cref{eqn:gpme}. We use the heat equation with the following sinusoidal initial condition and periodic boundary conditions from \citet{krishnapriyan2021characterizing, hansenLearning2023}:
\begin{equation}
\begin{aligned}
    u_t &= k \Delta u, \hspace{0.75cm} \forall x \in \Omega = [0, 2\pi], \ \forall t \in [0, 1], \\
    u(0, x) &= \sin(x),\hspace{0.55cm} \forall x  \in [0, 2\pi],\\
    u(t, 0) &= u(2\pi, t), \quad \forall t \in [0, 1],
\end{aligned}
\label{eqn:heat}
\end{equation}
respectively.
The exact solution, which can be solved using the Fourier Transform,  is given as:
\[
u_{\text{exact}}(t,x) = e^{-k t} \sin(x).
\]
The solution is a smooth sinusoidal curve that exponentially decays or dissipates over time, and has an infinite speed of propagation. With these specific initial and boundary conditions in \cref{eqn:heat}, the global conservation law in \cref{eqn:global_conserv} reduces to the following linear equation:
\begin{equation}
\int_0^{2\pi} u(t,x) \, dx = 0, \quad \forall t \in [0, 1], 
\label{eqn:conserv_constr_heat}
\end{equation}
since the net flux on the boundaries is 0.

\paragraph{Porous Medium Equation (PME) (``Medium'').}
The PME is a nonlinear degenerate subclass of the GPME, where  the diffusivity is a nonlinear, monomial of the solution, i.e., $k(u) = u^m$ in \cref{eqn:gpme}. It has been using in modeling nonlinear heat transfer~\citep{vazquez2007, maddix2018numericala}. We use the PME with the following initial condition and growing in time left Dirichlet boundary condition from \citet{LIPNIKOV2016111, maddix2018numericala, hansenLearning2023}:
\begin{equation}
\begin{aligned}
    u_t - \nabla \cdot (u^m \nabla u) &=0, \hspace{1.95cm} \forall x \in \Omega = [0, 1], \ \forall t \in [0, 1], \\
    u(0, x) &= 0, \hspace{1.95cm} \forall x  \in [0, 1], \\
    u(t, 0) &= (m t)^{1/m}, \hspace{0.9cm} \forall t \in [0, 1]. 
\end{aligned}
\label{eqn:pme}
\end{equation}
The exact solution is given as:
\[
u_{\text{exact}}(t, x) = \left(m \, \mathrm{ReLU}(t - x)\right)^{1/m}.
\]
For small values of $k(u)$, this degenerate parabolic equation behaves hyperbolic in nature. The solution exhibits a sharp front at the degeneracy point $t=x$ with a finite speed of propagation.  With these specific initial and boundary conditions in \cref{eqn:pme}, the global conservation law in \cref{eqn:global_conserv} reduces to the following linear equation:
\begin{equation}
\int_0^1 u(t,x) \, dx = \frac{(m t)^{1 + 1/m}}{m + 1}, \quad \forall t \in [0, 1].
\label{eqn:conserv_constr_pme}
\end{equation}

\paragraph{Stefan Equation (``Hard'').}
The Stefan equation has been used in foam modeling~\citep{vandermeer2016} and crystallization~\citep{SETHIAN1992231}, and models phase transitions with the following discontinuous diffusivity:
\begin{align}
    k(u) &=
    \begin{cases}
        1, & u \geq u^* \\
        0, & u < u^*
    \end{cases}, \quad u^* \geq 0,
    \nonumber
\end{align}
in \cref{eqn:gpme}.  
We use the Stefan equation with the following initial condition and  Dirichlet boundary conditions from \citet{maddix2018numericalb, hansenLearning2023}:
\begin{equation}
\begin{aligned}
    u_t - \nabla \cdot (k(u)\nabla u) &= 0, \quad  \forall x \in \Omega = [0,1], \ t \in [0, 1], \\
    u(0, x) &= 0, \quad \forall x  \in [0, 1], \\
    u(t,0) &= 1, \quad \forall t \in [0, 1].\\
\end{aligned}
\label{eqn:stefan}
\end{equation}
The exact solution is given as:
\[
u_{\text{exact}}(t,x) = \mathbf{1}_{u \geq u^*} \left[ 1 - \frac{1 - u^*}{\mathrm{erf}(\alpha/2)} \mathrm{erf}\left( \frac{x}{2\sqrt{t}} \right)\right],
\]
where $\mathbf{1}$ denotes the indicator function, \( \mathrm{erf}(z) = (2/\sqrt{\pi}) \int_0^z \exp(-y^2) dy \) denotes the error function, and \( \alpha  = 2\tilde \alpha\) and $\tilde \alpha$ satisfies the following nonlinear equation:
\[
\frac{1 - u^*}{\sqrt{\pi}} = u^* \, \mathrm{erf}(\tilde \alpha) \tilde \alpha e^{\tilde \alpha^2}.
\]
 The solution is a rightward moving shock. With these specific initial and boundary conditions in \cref{eqn:stefan}, the global conservation law in \cref{eqn:global_conserv} reduces to the following linear equation:
\begin{equation}
\int_0^1 u(t,x) \, dx = \frac{2(1 - u^*)}{\mathrm{erf}(\alpha/2)} \sqrt{\frac{t}{\pi}}, \quad \forall t \in [0, 1].
\label{eqn:conserv_constr_stefan}
\end{equation}

\subsubsection{Hyperbolic Linear Advection Equation}
The hyperbolic linear advection equation models fluids transported at a constant velocity, and is given by \cref{eqn:conserv_law} with linear flux $F(u) = \beta u$. We use the
1D linear advection problem with the following step-function initial condition and inflow Dirichlet boundary conditions from \citet{hansenLearning2023}:
\begin{equation}
\begin{aligned}
    u_t + \beta u_x &= 0, \hspace{1cm} \forall x \in \Omega= [0,1], \forall t \in  [0,1],\\
    u(0,x) &= \mathbf{1}_{x \leq 0.5}, \hspace{0.2cm}
    \forall x \in [0, 1], \\
    u(t,0) &= 1, \hspace{1cm}
    \forall t \in [0, 1]. 
\end{aligned}
\label{eqn:lin_adv}
\end{equation} 
The exact solution is given as:
\[
u(x,t) = h(x - \beta t),
\]
where \( h(x) = \mathbf{1}_{x \leq 0.5} \) denotes the initial condition. The solution remains a shock, which travels to the right with a finite speed of propagation $\beta$. With these specific initial and boundary conditions in \cref{eqn:lin_adv}, the global conservation law in \cref{eqn:global_conserv} reduces to the following linear equation:
\begin{equation}
\int_0^1 u(x,t) \, dx = \frac{1}{2} + \beta t,
\label{eqn:conserv_constr_adv}
\end{equation}
which shows that the total mass increases linearly with time due to the fixed inflow.

\subsection{Probabilistic Time Series Forecasting}
\label{app:ts}

In addition to PDEs, we also evaluate \ourmethod{} on five hierarchical time series forecasting benchmark datasets, where the goal is to generate probabilistic predictions that are coherent with known aggregation constraints across cross-sectional hierarchies \citep{rangapuramEndend2021}.

\cref{tab:ts_dataset_summary} provides an overview of the time series datasets used in our empirical evaluation. For each benchmarking dataset, it details the total number of series, the number of bottom level series (i.e., the leaf nodes in the hierarchy), the number of series aggregated from the bottom-level series, the depth of the hierarchy in terms of the number of levels, the number of time series  observations, and the prediction horizon~$\tau$. 

We adopt the same dataset configurations as in \citet{rangapuramEndend2021}, from which we use the hierarchical forecasting benchmarks and pre-processing pipeline. These datasets are available in GluonTS package \citep{alexandrov2019gluonts}. The \textsc{Labour} dataset~\citep{Aulabor2019Aulabor_dataset} contains monthly Australian employment statistics from 1978 to 2020, organized into a 57-series hierarchy. The \textsc{Traffic} dataset~\citep{ben2019regularized} includes sub-hourly freeway lane occupancy data, aggregated into daily observations forming a 207-series structure. \textsc{Tourism}~\citep{canberra2005tourismS_dataset} consists of quarterly tourism counts across 89 Australian regions (1998–2006), and the extended \textsc{Tourism-L} dataset~\citep{Wickramasuriya03042019} comprises 555 grouped series based on both geography and travel purpose. Lastly, \textsc{Wiki} contains daily page view counts from 199 Wikipedia pages collected over two years~\citep{wikipedia2018web_traffic_dataset}.

\begin{table}[h]
\centering
\caption{A summary of the time-series datasets. \textsc{Tourism-L} has two hierarchies, defined by geography and travel purpose; consequently, it has different numbers of bottom series and different depths in each hierarchy.}
\label{tab:ts_dataset_summary}
\begin{tabular}{lccccccc}
\toprule
\textbf{Dataset} & \textbf{Total} & \textbf{Bottom} & \textbf{Aggregated} & \textbf{Levels} & \textbf{Obs.} & \textbf{Horizon $\tau$} & \textbf{Frequency}\\
\midrule
\textsc{Tourism}            & 89   & 56       & 33       & 4     & 36   & 8  & Quarterly \\
\textsc{Tourism-L} & 555 & 76; 304  & 175     & 4; 5 & 228  & 12 & Monthly\\
\textsc{Labour}             & 57   & 32       & 25       & 4     & 514  & 8  & Monthly \\
\textsc{Traffic}            & 207  & 200      & 7        & 4     & 366  & 1  & Daily\\
\textsc{Wiki}               & 199  & 150      & 49       & 5     & 366  & 1  & Daily\\
\bottomrule
\end{tabular}
\end{table}

\section{Implementation Details}
\label{app:exp_details}

In this section, we provide the implementation details of \ourmethod. \cref{fig:probharde2ealg} illustrates the overall pipeline of \ourmethod{}, which integrates probabilistic modeling, constraint enforcement, and loss-based calibration into a unified differentiable architecture. The core contribution lies in the DPPL, which acts as a ``corrector'' to the ``predictor,'' which is the unconstrained distribution predicted by a wide class of models. Conceptually, this layer parallels classical predictor-corrector and primal-dual methods from numerical optimization \citep{boyd2004convex,bertsekas1997nonlinear}, where a candidate solution is refined to satisfy known constraints before evaluation.

We evaluate \ourmethod{} on two scientific domains: (1) PDEs, where structured physical constraints, e.g., conservation laws and boundary conditions, must be enforced (see \cref{app:pdes}), and (2) probabilistic hierarchical time series forecasting, where aggregation coherency is required (see \cref{app:ts}). We show that \ourmethod{} is model-agnostic by using a base probabilistic model (predictor) from each application domain, i.e., \texttt{VarianceNO}~\citep{mouliUsing2024} for PDEs and \texttt{DeepVAR}~\citep{salinas2019high} for forecasting. We then enforce the corresponding constraint with our DPPL (corrector). We provide the experimental details for each application in the following subsections.

\begin{figure}[h]
    \centering
    \includegraphics[width=\linewidth]{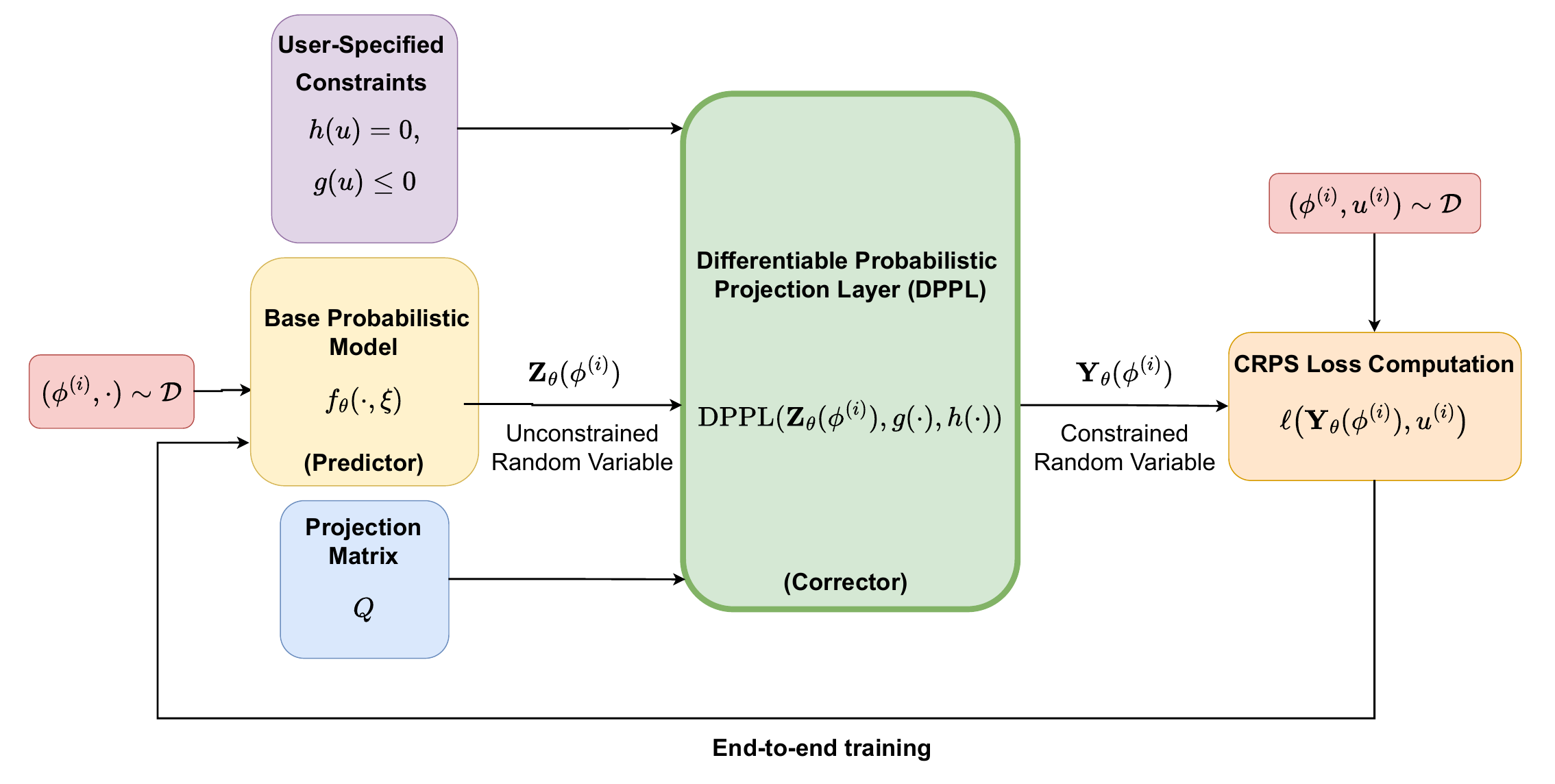}
    \caption{Schematic representation of \ourmethod{} (see \cref{alg:probend2end}). Here, a known pathwise-differentiable probabilistic model is chosen to predict a (unconstrained) prior distribution. (Optionally, the projection matrix can be specified as a part of the prediction from the probabilistic model or modeled separately.) Next, we transform the distribution with our DPPL to obtain the transformed distribution, done empirically or via the Delta Method (see \cref{subsxn:postrlocscale}), which enforces the constraints. Lastly, we choose an appropriate loss function, e.g., CRPS, to calibrate the transformed distribution with the target variable.}
    \label{fig:probharde2ealg}
\end{figure}

\subsection{PDEs}
All the experiments are performed on a single NVIDIA V100 GPU. We use a probabilistic Fourier Neural Operator (FNO)~\citep{liFourier2021}, i.e., \texttt{VarianceNO}~\citep{mouliUsing2024} to learn a mapping from PDE parameters to solutions, e.g., the diffusivity mapping \( k(u) \mapsto u(t, x) \) in the (degenerate) parabolic Generalized Porous Medium Equation (GPME), or the velocity mapping \( \beta \mapsto u(t, x) \) in the hyperbolic linear advection equation. (See \cref{app:pdes} for details on the datasets.)

\subsubsection{Dataset Generation}
\cref{tab:pde_overview} provides an overview of the PDE data generation. 
 For each PDE in \cref{app:pdes}, we generate a dataset of $N=200$ parameter-solution pairs \( \{ \phi^{(i)}, u^{(i)} \}_{i=1}^N \sim \mathcal{D} \), where \( \phi^{(i)} \) denotes the input PDE parameters, e.g., \( k, m, u^*, \beta \), and \( u^{(i)} \) denotes the corresponding spatiotemporal solution field. Each solution \( u^{(i)}(t,x) \) is simulated over a grid of 100 equidistant points in both space and time, yielding a total of \( 100 \times 100 \) observations per instance. During evaluation, we predict the final 20 equidistant time slices while conditioning on the earlier time steps. 

\begin{table}[h]
\centering
\caption{Overview of PDE dataset generation. Each dataset contains 200 samples with a fixed 160/40 train-test split.}
\label{tab:pde_overview}
\resizebox{1\linewidth}{!}{\begin{tabular}{lllll}
\toprule
\textbf{PDE} & \textbf{Parameter range} & \textbf{Spatial domain} & \textbf{Time domain} & \textbf{Train/Test (\%)} \\
\midrule
Heat  & $k \in [1, 5]$ & $[0,2\pi]$ & $[0,1]$ & 80/20 \\
PME & $m \in [2, 3]$ & $[0,1]$ & $[0,1]$ & 80/20 \\
Stefan  & $u^* \in [0.6, 0.65]$ & $[0,1]$ & $[0,1]$ & 80/20 \\
Linear Advection & $\beta \in [1, 2]$ & $[0,1]$ & $[0,1]$ & 80/20 \\
\bottomrule
\end{tabular}}
\end{table}

\subsubsection{Architectural Details}

We use \texttt{VarianceNO}~\citep{mouliUsing2024} as our base unconstrained probabilistic model. \texttt{VarianceNO} is an augmented Fourier Neural Operator (FNO)~\citep{liFourier2021} that updates the last layer to output two prediction heads instead of one, i.e, one for the mean and the other for the variance of the multivariate Gaussian distribution. \cref{tab:varfno_hyperparams} details the model hyperparameters.

\begin{table}[h]
\centering
\caption{Hyperparameters for the \texttt{VarianceNO} model.}
\label{tab:varfno_hyperparams}
\begin{tabular}{ll}
\toprule
\textbf{Hyperparameter} & \textbf{Values} \\
\midrule
\multicolumn{2}{l}{\texttt{VarianceNO}} \\
\quad Number of Fourier layers & 4 \\
\quad Channel width & \{32, 64\} \\
\quad Number of Fourier modes & 12 \\
\quad Batch size & 20 \\
\quad Learning rate & \{10$^{-4}$, 10$^{-3}$, 10$^{-2}$\} \\
\bottomrule
\end{tabular}
\end{table}

\subsubsection{Training and Testing Setup}

We follow the standard training procedure for FNO-based models as proposed by \citet{liFourier2021}. Specifically, we use the Adam optimizer~\citep{kingma2015} with weight decay and train using mini-batches of fixed size $B = 20$. A step-based learning rate scheduler is applied, which reduces the learning rate by half every 50 epochs.
During evaluation, we uniformly sample parameters from the specified parameter ranges in \cref{tab:pde_overview} to construct test sets and compute the evaluation metrics. 

\subsection{Probabilistic Time Series Forecasting}

The experiments are performed on an Intel(R) Xeon(R) CPU E5-2603 v4 @ 1.70GHz. 

\subsubsection{Dataset Generation}
We adopt the hierarchical forecasting benchmarks and preprocessing pipeline introduced in \citet{rangapuramEndend2021}, using five standard datasets: \textsc{Labour}, \textsc{Traffic}, \textsc{Tourism}, \textsc{Tourism-L}, and \textsc{Wiki}. Each dataset contains a hierarchy of time series with varying depth and number of aggregation levels (see \cref{tab:ts_dataset_summary}). The train/test splits, seasonal resolutions, and prediction horizons follow the standardized setup provided in \citet{rangapuramEndend2021}.

\subsubsection{Architectural Details}
We use \texttt{DeepVAR}~\citep{salinas2019high} as our base unconstrained probabilistic model, which is aligned with \texttt{Hier-E2E}. ~\texttt{DeepVAR} is a probabilistic autoregressive LSTM-based model that leverages a multivariate Gaussian distribution assumption on the multivariate target. \texttt{DeepVAR} models the joint dynamics of all the time series in the hierarchy through latent temporal dependencies, and outputs both the mean and scale of the predictive distribution, by optimizing the negative log likelihood (NLL). Our \ourmethod{} model in the time series application is developed based on \texttt{Hier-E2E} in GluonTS~\citep{alexandrov2019gluonts}. We use the default base model architecture \texttt{DeepVAR}, and make further modifications to \texttt{Hier-E2E}. Specifically, we tune the hyperparameters in \cref{tab:deepvar_hyperparams}, and adjust the loss to CRPS for structured probabilistic evaluation.  We disable sampling-based projection during training because \ourmethod{} optimizes the closed-from CRPS for Gaussian distributions, and our projection methodology ensures that linear constraints are met probabilistically. 
During inference, we report CRPS through samples, in order to align the evaluation definition with the various hierarchical forecasting baselines.

\begin{table}[h]
\centering
\caption{Key hyperparameters for \texttt{DeepVAR} across hierarchical forecasting datasets.}
\label{tab:deepvar_hyperparams}
\resizebox{\linewidth}{!}{
\begin{tabular}{lcccccc}
\toprule
\textbf{Dataset} & \textbf{Epochs} & \textbf{Batch Size} & \textbf{Learning Rate} & \textbf{Context Length} & \textbf{No. of Prediction Samples} \\
\midrule
\textsc{Labour}      & 5  & 32 & 0.01  & 24 & 400 \\
\textsc{Traffic}     & 10 & 32 & 0.001 & 40 & 400 \\
\textsc{Tourism}     & 10 & 32 & 0.01  & 24 & 200 \\
\textsc{Tourism-L}   & 10 & 4  & 0.001 & 36 & 200 \\
\textsc{Wiki}        & 25 & 32 & 0.001 & 15 & 200 \\
\bottomrule
\end{tabular}
}
\end{table}

\subsubsection{Training and Testing Setup}
We follow the standard GluonTS~\citep{alexandrov2019gluonts} training setup using the Adam optimizer~\citep{kingma2015}  and mini-batch updates. Each epoch consists of 50 batches, with batch size set according to \cref{tab:deepvar_hyperparams}. We run our evaluation five times and report the mean and variance of the CRPS values in \cref{tab:hier_ts_metrics}.

Unlike \texttt{Hier-E2E}~\citep{rangapuramEndend2021}, which samples forecast trajectories during training and projects them to ensure structural coherence on samples, our method operates entirely in the parameter space during training. We avoid sampling and instead minimize the closed-form CRPS loss~\citep{gneiting2005calibrated} directly on the predicted mean and variance. This makes the training process sampling-free and reduces training time, similar to the PDE case discussed later in Figure \ref{fig:pde_timing_plot}. This key distinction avoids the use of \texttt{coherent\_train\_samples}, as described in the Appendix of \citet{rangapuramEndend2021}.

At inference time, because the reported CRPS is computed on the samples in the hierarchical baselines, we enable structured projection by drawing predicted samples from the learned distribution, and we apply our DPPL to ensure that they satisfy the hierarchical aggregation constraints. This setup parallels the \texttt{coherent\_pred\_samples} mode in \texttt{HierE2E}, and we implement the inference step with this approach for experimentation simplicity. \cref{tab:deepvar_hyperparams} shows the number of prediction samples in evaluation to compute the CRPS and calibration metrics over the projected outputs. Alternatively, we can also evaluate the CRPS using samples from the projected distribution.

\subsection{Metrics}
\label{app:metrics}
We evaluate \ourmethod{} and the various baselines using the following metrics. We denote the exact solution or ground truth observations as $u$, and we report the metrics on the mean $\hat \mu$, covariance $\hat \Sigma$, and samples $\{u^*_{i}\}_{i=1}^N$ drawn from the constrained multivariate Gaussian distribution $\mathcal{N}(\hat \mu, \hat \Sigma)$.

\paragraph{Mean Squared Error (MSE).} The MSE measures the mean prediction accuracy and is given as:

\[
\text{MSE}(\hat \mu) = \frac{1}{n}\|u - \hat \mu\|^2_F,
\]
where the Frobenius norm is taken over all the datapoints $n$ in $\hat \mu$. 

\paragraph{Constraint Error (CE).}
The CE measures the error in the various equality constraints $h(u^*)=0$, i.e., conservation laws for PDEs and coherency for hierarchical time series forecasting, on the samples, and is given as:
\[
\text{CE}(u^*) = \sum_{i=1}^N\|h( u^*_{i})\|_2^2,
\]
where we compute 
the average error over $N=100$ samples $\{u^*_{i}\}_{i=1}^N$.


\paragraph{Continuous Ranked Probability Score (CRPS).}

The CRPS~\citep{gneiting2007strictly} measures the quality of uncertainty quantification by comparing a predictive distribution to a ground-truth observation. For a multivariate Gaussian distribution with independent components \( \mathcal{N}(\mu, \mathrm{diag}(\hat \sigma^2)) \), where \( {\hat \sigma}_{ii}^2 \) denotes the \( i \)-th diagonal entry of the predictive covariance ${\hat \Sigma}$, the CRPS is given in closed-form as:
\[
\text{CRPS}_{\mathcal{N}}({\hat \mu}, {\hat \sigma}; u) = \sum_{i=1}^{n} {\hat \sigma}_{ii} \left[z_i \left( 2P(z_i) - 1   \right) +2p(z_i)  - \frac{1}{\sqrt{\pi}} \right],
\]
where \( z_i = (u_i - {\hat \mu}_i)/{\hat \sigma}_{ii} \), \( p(z_i) =  (1/\sqrt{2\pi})\exp(-z_i^2/2) \) denotes the standard normal PDF, and \( P(z_i) = \int^{z_i}_{-\infty} p(y)dy \) denotes the standard normal CDF~\citep{gneiting2005calibrated, taillardat2016calibrated}.

\section{Additional Empirical Results and Details} \label{sxn:add_exps_pdes}

In this section, we include additional empirical results and details 
for \ourmethod{} with various constraint types, i.e., linear equality, nonlinear equality and convex inequality.

\subsection{Linear Equality Constraints}
\label{app:linear_equal_res}
In this subsection, we show additional results and details for \ourmethod{} with linear equality constraints in both PDEs and hierarchical time series forecasting.

\subsubsection{PDEs with Conservation Law Constraints}
Here, we impose the discretized form of the simplified linear global conservation laws given in \cref{app:pdes} for the heat equation (\cref{eqn:conserv_constr_heat}),  PME (\cref{eqn:conserv_constr_pme}), Stefan (\cref{eqn:conserv_constr_stefan}) and linear advection equation (\cref{eqn:conserv_constr_adv}). We use the trapezoidal discretizations of the integrals from \citet{hansenLearning2023}.

\cref{fig:pde_timing_plot} shows the analogous training time per epoch to \cref{fig:pde_timing} for PDE datasets. Models trained with 100 posterior samples per training step incur a \(3.5\text{--}3.6\times\) increase in epoch time relative to our \ourmethod{}, which avoids sampling altogether by using a closed-form CRPS loss. See \cref{tab:prob_crps_bench} for the accuracy results.

\begin{figure}[h]
        \centering
        \includegraphics[width=0.5\linewidth]{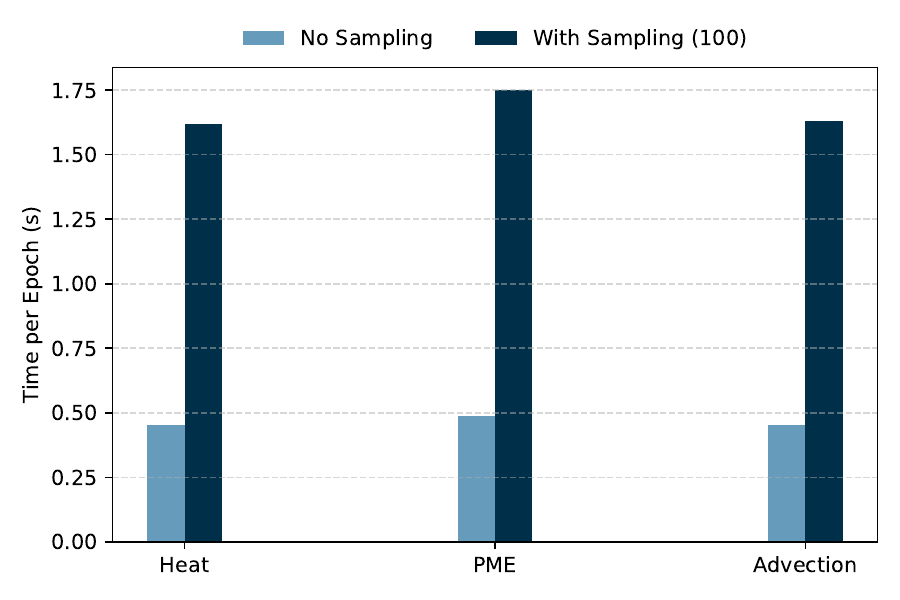}
        \caption{\ourmethod{}: PDE timing comparisons for our sampling-free approach.}
        \label{fig:pde_timing_plot}
\end{figure} 

\subsubsection{Hierarchical Time Series Forecasting with Coherency Constraints}
\label{subseq:hier_res}
Here, we test \ourmethod{} on probabilistic hierarchical forecasting with coherency constraints. (See \cref{app:ts} for details and \cref{tab:hier_ts_metrics} for the results.)  We compare the two variants of \ourmethod{}, i.e., with oblique $Q = \Sigma^{-1}$ (\ourmethod{}-\texttt{Ob}) and with orthogonal $Q = I$ (\ourmethod{}-\texttt{Or}) projection to the following baselines: 
\begin{itemize}[noitemsep,topsep=0pt] 
\item \texttt{DeepVAR} \citep{salinas2019high} is the base unconstrained probabilistic model, which assumes a multivariate Gaussian distribution for $\mathbf{Z} \sim \mathcal{N}(\mu, \Sigma)$ with mean $\mu$ and diagonal covariance $\Sigma$.
\item \texttt{Hier-E2E} \citep{rangapuramEndend2021} uses \texttt{DeepVAR} as the base model, and enforces the exact coherency constraint by applying the orthogonal projection directly on the samples in an end-to-end manner. Another difference from their approach is that we use the closed-form CRPS expression rather than the approximate weighted quantile loss.
\item \texttt{ProbConserv} \citep{hansenLearning2023} enforces the coherency constraint as an oblique projection at inference time only. 
\item \texttt{ARIMA-NaiveBU} and \texttt{ETS-NaiveBU} are two simple baseline models that use ARIMA and exponential smoothing (ETS), respectively. These methods use a naive bottom-up approach of deriving aggregated level forecasts~\citep{hyndman2018forecasting}.
\item \texttt{PERMBU-MINT}
~\citep{taiebCoherent2017} is a hierarchical probabilistic forecasting model that is based on a linear projection method MINT~\citep{Wickramasuriya03042019}. It generates probabilistic forecasts for aggregated series using permuted bottom-level forecasts. 
\end{itemize}
We do not include \texttt{DPMN} \citep{olivares2024probabilistic} or \texttt{CLOVER} \citep{olivares2024clover} in our experiments because the implementations are proprietary. Given that \texttt{Hier-E2E} is the best open-access hierarchical forecasting model, through GluonTS~\citep{alexandrov2019gluonts}, to the best of our knowledge, we use the same base model to \texttt{Hier-E2E} (i.e., \texttt{DeepVAR}), and we evaluate forecast accuracy compared to \texttt{Hier-E2E} to assess the added value of our \ourmethod{}.

\subsection{Nonlinear Equality Constraints}
\label{subsec:nonlinear_res_app}
In this subsection, we impose the discretized form of the general nonlinear global linear conservation laws from \cref{eqn:global_conserv} in \cref{app:pdes} for the PME with various ranges for the parameter $m$. (See \cref{tab:nonlinear_pme} for the results and \cref{fig:nonlin_pme} for the solution profile.) 
For the PME, the flux in \cref{eqn:global_conserv} is given as $F(u) = -k \nabla u$, where $k(u) = u^m$. Substituting this flux into \cref{eqn:global_conserv} and integrating in time gives the general conservation law for the PME as:
\[
\int_{\Omega} u(t,x) d \Omega = \int_0^t[u^m(t,x_0)\nabla u(t,x_0) - u^m(t,x_N)\nabla u(t,x_N)]dt, \quad \forall t \in [0,1].
\]
Similar to the linear equality constraint case, we discretize the integral using the trapezoidal rule. Unlike \texttt{ProbConserv}~\citep{hansenLearning2023}, which requires an analytical flux expression to evaluate the right‐hand side, our \ourmethod{} can enforce arbitrary (nonlinear) conservation laws directly. In addition, \ourmethod{} with nonlinear constraints can be applied to arbitrary PDEs with any initial or boundary conditions. We impose the initial and boundary conditions as additional linear constraints and enforce positivity on the solution.  We test on various training and testing ranges for the parameter $m$, i.e., $m \in [2,3]$, $[3,4]$ and $[4,5]$.  As the exponent $m$ is increased, the degeneracy increases, and as a result the solution becomes sharper and more challenging to solve~\citep{maddix2018numericala, hansenLearning2023}. We see in \cref{tab:nonlinear_pme} that across all values of $m$, either our oblique \ourmethod{}-\texttt{Ob} or orthogonal projection \ourmethod{}-\texttt{Or} variants of our method perform better than all the baselines. 

\subsection{(Nonlinear) Convex Inequality Constraints}
\label{subeq:convex_res}
In this subsection, we impose a convex total variation diminishing (TVD) inequality constraint. (See \cref{fig:lin_adv_tvd} for the solution profile.) TVD numerical schemes have been commonly using in solving hyperbolic conservation laws with shocks to minimize numerical oscillations from dispersion \citep{harten1997high, leveque1990numerical, osti_1395816}. 
The total variation (TV) is defined in its continuous form as: 
\[
    \text{TV}(u(t, \cdot)) = \int_\Omega \bigg| \frac{\partial u}{\partial x} \bigg| \, d\Omega.
\]
This integral can be approximated as the discrete form of the total variation (TV) used in image processing~as:
\begin{equation}
    \text{TV}(u(t)) = \text{TV}(u(t, \cdot, )) = \sum_{i=1}^{N_x} \left| u(t, x_{i+1}) - u(t, x_{i}) \right|,
    \label{eqn:tvd_discrete}
\end{equation}
where we discretize the spatial domain $\Omega = [x_1, \dots, x_{N_x}]$ into $N_x$ gridpoints. A numerical scheme is  called TVD if:
\begin{equation}
    \text{TV}(u(t_{n+1})) \leq \text{TV}(u(t_n)), \quad \forall\  n = 1, \dots, N_t,
    \label{eqn:tvd_app}
\end{equation}
where we discretize the temporal domain $[0,T] = [t_1, \dots, t_{N_t}]$ into $N_t$ gridpoints, and TV denotes the discretized form defined in \cref{eqn:tvd_discrete}.

The TVD constraint in \cref{eqn:tvd_app} is a nonlinear inequality constraint, and enforcing it as a hard constraint is challenging with current frameworks, e.g., DCL \citep{agrawalDifferentiable2019}. To address this, we perform a convex relaxation of the constraint by imposing:
\[
    \text{TVD} = \sum_{n=1}^{N_t}\sum_{i=1}^{N_x} \left| u(t_n, x_{i+1}) - u(t_n, x_{i}) \right|,
\]
as a regularization term. This approach is analogous to total variation denoising in signal processing \citep{rudin1992nonlinear, boyd2004convex}. 

\cref{fig:lin_adv_tvd} demonstrates the application of the modified TVD constraint, resulting in more physically-meaningful solutions by decreasing both the artificial oscillations and probability of negative samples, which violate the monotonicity and positivity properties of the true solution, respectively. In addition, \ourmethod{} leads to improved (tighter) uncertainty estimates.

\end{document}